\documentclass[numbers,webpdf,imanum]{ima-authoring-template}
\pdfoutput=1 

\usepackage{graphicx}%

\usepackage{tabularx}
\usepackage{multirow}
\usepackage{float}
\floatstyle{plaintop}
\restylefloat{table}

\usepackage{amsmath}
\usepackage{amssymb}
\usepackage{amsfonts}
\usepackage{amsthm}
\usepackage{mathrsfs}
\usepackage[title]{appendix}
\usepackage{xcolor}
\usepackage{textcomp}
\usepackage{manyfoot}
\usepackage{booktabs}

\usepackage{algorithm}%
\usepackage{algorithmicx}%
\usepackage{algpseudocode}%

\usepackage{listings}%

\usepackage[T1]{fontenc}

\usepackage{amsopn}
\DeclareMathOperator{\esssup}{{\rm ess\,sup}}

\usepackage{enumitem}
\setlist[enumerate]{leftmargin=0.3in}
\setlist[itemize]{leftmargin=0.3in}

\theoremstyle{thmstyleone}%
\newtheorem{theorem}{Theorem}[section]

\newtheorem{proposition}[theorem]{Proposition}%

\newtheorem{lemma}[theorem]{Lemma}%
\newtheorem{corollary}[theorem]{Corollary}%
\newtheorem{definition}[theorem]{Definition}%
\newtheorem{remark}[theorem]{Remark}%

\numberwithin{equation}{section}

\usepackage{braket}

\usepackage{array}

\usepackage{geometry}

\allowdisplaybreaks[4]

\begin{document}
\begin{sloppypar}

\DOI{DOI HERE}
\copyrightyear{2026}
\vol{00}
\pubyear{2026}
\access{Advance Access Publication Date: Day Month Year}
\appnotes{Paper}
\copyrightstatement{Published by Oxford University Press on behalf of the Institute of Mathematics and its Applications. All rights reserved.}
\firstpage{1}

\title[Deep-layer limit and stability of FBS-network(II)]{
	Deep-layer limit and stability analysis of the basic forward-backward-splitting induced network (II): learning problems
}

\author{\textsc{Xuan Lin}
	\address{
		\orgname{China Academy of Aerospace System and Innovation}, 
		\orgaddress{
			\postcode{100088},
			\state{Beijing}, 
			\country{China}
		}
	}
	\address{
		\orgdiv{School of Mathematical Sciences}, 
		\orgname{Nankai University}, 
		\orgaddress{
			\street{94 Weijin Road}, \postcode{300071}, \state{Tianjin}, \country{China}
		}
	}  
}
\author{\textsc{Chunlin Wu*}
	\address{
		\orgdiv{School of Mathematical Sciences}, 
		\orgname{Nankai University}, 
		\orgaddress{
			\street{94 Weijin Road}, \postcode{300071}, \state{Tianjin}, \country{China}
		}
	}
}

\authormark{Xuan Lin, Chunlin Wu}

\corresp[*]{Corresponding author. \href{email:1120200026@mail.nankai.edu.cn}{wucl@nankai.edu.cn}}

\received{3}{5}{2026}
\revised{Date}{0}{2026}
\accepted{Date}{0}{2026}

\abstract{
	Deep unfolding neural networks derived from iterative optimization schemes and numerical ordinary/partial differential equations (ODEs/PDEs) have attracted much attention in data science over the last decade.
	Therein, numerous important network architectures were constructed from the basic forward-backward-splitting (FBS) algorithm.
	In this paper, we continue our research on the most basic FBS-induced network, an architecture unrolled from the original FBS algorithm by incorporating direct parameter relaxations. 
	Following the difference/differential inclusion formulations in our previous forward system analyses, we here consider some theoretical aspects of corresponding learning problems.
	Under some mild assumptions, we establish a general convergence property of the training problem of the basic FBS-induced network to the learning problem of the deep-layer limit system, implying a $\Gamma$-convergence argument showing that any cluster point of the optimal learning parameters for the network is a solution to the learning problem of the deep-layer limit system.
	A qualitative analysis of perturbation stabilities of these learning problems is also presented. 
	A simple numerical experiment is conducted to validate our main general convergence result.
}

\keywords{
	Forward-backward splitting, Deep unfolding/unrolling, Dynamical inclusion, $\Gamma$-convergence, Stability.
}

\maketitle

\section{Introduction}\label{sec-introduction}

In this paper, we continue our previous study in \cite{lin2025deep} on deep-layer limit analysis of the so-called basic FBS-network.
Here, "basic" means that the neural
network is unrolled from the original FBS algorithm with direct parameter relaxation.
Using our previous dynamical inclusion modeling of forward systems in \cite{lin2025deep}, we here consider the deep-layer limit of the learning problem and related stability aspects.

Indeed, there has been growing interest in deep-layer limit modeling and analysis for deep neural networks (DNNs) in data science in recent years, which helps to interpret the behaviors and mechanisms in DNNs.
In particular, the well-known ResNets can be characterized as an ODE \cite{weinan2017proposal, thorpe2023deep, chang2018reversible} or an ordinary differential inclusion (ODI) \cite{zhang2020forward} in the deep-layer limit.
Recently, Transformer architecture without layer normalization and its variants have been interpreted as a (coupled) ODE systems \cite{lu2020understanding, liang2026deep}. 
Moreover, densely connected network (DenseNet) with a general nonlocal framework has been mathematically modeled as a nonlinear integral equation in the deep-layer limit \cite{huang2025mathematical}. 
For networks induced from numerical ODEs \cite{gear1984ode, butcher1987numerical} or time-dependent PDEs \cite{larsson2003partial, de2015numerical}, such as trainable reaction diffusion \cite{chen2016trainable}, PDE-Net and its variant \cite{long2018pde, long2019pde}, stable/reversible networks \cite{haber2017stable, chang2018reversible, haber2019imexnet, ruthotto2020deep}, LM-ResNet \cite{lu2018beyond}, neural ODE \cite{chen2018neural}, networks with multigrid structures \cite{he2019mgnet, tai2024pottsmgnet}, and more from a continuous viewpoint \cite{weinan2020machine}, their deep-layer limits are naturally the systems from which they originated.
These studies help to indicate some properties such as stabilities of network architectures \cite{haber2017stable, chang2018reversible, ma2019model, lu2018beyond, haber2019imexnet, zhang2020forward, ruthotto2020deep}, or design some novel and interesting architectures \cite{haber2017stable, lu2018beyond, chang2018reversible, he2019mgnet, zhang2020forward, weinan2020machine, tai2024pottsmgnet}.

However, for those networks \cite{gregor2010learning, liu2019alista, sun2016deep, adler2018learned, kobler2017variational, hammernik2018learning} unrolled from iterative optimization algorithms \cite{beck2017first, glowinski2017splitting, glowinski1989augmented} which play important roles in imaging-related inverse problems, there is so far little study on deep-layer limit analysis \cite{lin2025deep, huang2024dynamical}.
Therein, \cite{huang2024dynamical} studied a dynamical system modeling based on ODE for learned primal-dual method, while \cite{lin2025deep} proposed a dynamical inclusion modeling for the most basic FBS-network.
As the FBS algorithm and its variants have induced several related significant networks like LISTA \cite{gregor2010learning}, ALISTA \cite{liu2019alista}, ISTA-Net \cite{zhang2018ista}, FISTA-Net \cite{xiang2021fista}, variational networks \cite{kobler2017variational}, PFBS-IR and PFBS-AIR \cite{ding2020low}, we now focus on this algorithm and continue our previous study on the most basic induced network structure in \cite{lin2025deep}. 
The main contributions of this paper can be summarized as follows
\begin{itemize}
	\item[$\bullet$ ] We show the existence of solutions to the learning problems of the basic FBS-network and its deep-layer limit system, under certain conditions.
	
	\item[$\bullet$ ] We establish a general convergence result from the learning problem of the basic FBS-network to that of its deep-layer limit system, indicating $\Gamma$-convergence and related consequences.
	
	\item[$\bullet$ ] We present perturbation stabilities of the learning problems for the basic FBS-network and its deep-layer limit system, in terms of the initial state, the observed data, and the label. 
\end{itemize}
These results demonstrate a deep consistency between the learning problems of the basic FBS-network and its deep-layer limit system, as well as their stabilities.
Our analysis method admits a simplified variant capable of analyzing the deep-layer limit of other networks unrolled from the FBS type of algorithms, such as LISTA and ALISTA.

The remainder of the paper is organized as follows.
In Section \ref{sec-formulation}, we formulate the learning problems of the FBS-network and its deep-layer limit under dynamical inclusion modeling.
We establish in Section \ref{sec-convergence-optimal-control} not only the existence of solutions to the learning problems, but also convergence properties from the network training problem to the learning problem of the deep-layer limit system.
In Section \ref{sec-stability}, we discuss the stabilities of both learning problems.
Our main convergence result is verified numerically in Section \ref{sec-experiment}.
Section \ref{sec-conclusion} concludes our study.

\section{Learning problems of FBS-induced networks and related deep-layer limit systems} \label{sec-formulation}

The FBS algorithm is fundamental and efficient for solving the following signal and image processing related linear inverse problem 
\begin{align*}
	\min_{x \in \mathbb{R}^{n}} \Big\{ \frac{1}{2} \|Ax - b\|_{2}^{2} + \lambda \mathcal{R}(x) \Big\},
\end{align*}
where $b \in \mathbb{R}^{m}$ is an observation, $A \in \mathbb{R}^{m \times n}$ (usually $m \leq n$) is a measurement or representation matrix, $\mathcal{R}$ is a smooth or nonsmooth regularization function, and $\lambda > 0$ is a trade-off parameter.
This classical algorithm can be represented as the following iterative scheme
\begin{align} \label{eq-FBS-algorithm}
	x^{k+1} = (I + \alpha^{k} \lambda \partial \mathcal{R})^{-1} (x^{k} - \alpha^{k} A^{\top} (A x^{k} - b) ), \quad k = 0, 1, 2, \cdots,
\end{align}
where $\alpha^{k} > 0$ is the iterative step size, and $\partial \mathcal{R}$ is the subdifferential of $\mathcal{R}$.
From Eq.\eqref{eq-FBS-algorithm} and related variants, numerous effective DNNs \cite{gregor2010learning, kobler2017variational, zhang2018ista, liu2019alista, ding2020low, xiang2021fista, lin2025deep} have been constructed in sparse coding and imaging applications.

\subsection{The basic FBS-network case} \label{subsec-formulation-FBS}

By directly unrolling the FBS algorithm and relaxing parameters, we presented the basic FBS-network in \cite{lin2025deep}, where dynamical inclusion modeling and some theoretical analyses for forward systems are also provided.
As in \cite{lin2025deep}, the forward systems of the basic FBS-network with $N$ layers and its related deep-layer limit system are as follows
\begin{align}
	\label{eq-system-inclusion-discrete} 
	& \left\{ \begin{aligned}
		& x^{N,k+1} - x^{N,k} + h_{N} \alpha^{N,k} (A^{N,k})^{\top} (A^{N,k} x^{N,k} - b) + h_{N} \alpha^{N,k} \lambda^{N,k} (\partial \mathcal{R})(x^{N,k+1})
		 \owns \vec{0}, \\
		&\hspace{55pt} k = 0, 1, \cdots, N-1, \\
		& x^{N,0} = x^{0},
	\end{aligned}\right. \\
	\label{eq-system-inclusion-continuous}
	& \left\{ \begin{aligned}
		& \dot{\pmb{x}}(t) + \pmb{\alpha}(t) (\pmb{A}(t))^{\top} (\pmb{A}(t) \pmb{x}(t) - b) + \pmb{\alpha}(t) \pmb{\lambda}(t) (\partial \mathcal{R})(\pmb{x}(t)) \owns \vec{0}, \,\, {\rm a.e. \,\,} t \in [0,T],  \\
		& \pmb{x}(0) = x^{0},
	\end{aligned}\right.
\end{align}
where $N \in \mathbb{N}_{+}$ is a predefined number of layers of the network, $h_{N} \equiv T/N$, $x^{0}$ is the given initial value, and $A^{N,:} := \{ A^{N,k} \}_{k=0}^{N-1} \subset \mathbb{R}^{m \times n}, \alpha^{N,:} \equiv \{ \alpha^{N,k} \}_{k=0}^{N-1} \subset \mathbb{R}_{+}, \lambda^{N,:} \equiv \{ \lambda^{N,k} \}_{k=0}^{N-1} \subset \mathbb{R}_{+}$, $\pmb{A}$, $\pmb{\alpha}$, $\pmb{\lambda}$ are learnable parameters that generalize the original $A$, $\alpha$, $\lambda$.

In supervised learning, we determine learnable parameters through minimizing the loss functions over given datasets.
Suppose we are given a training set $\{ (b_{j}, y_{j}) \}_{j=1}^{J}$ with $J$ samples, where $(b_{j}, y_{j}) \in \mathbb{R}^{m} \times \mathbb{R}^{n}$ is the $j$-th pair of the input data and the label. 
Let $\{ x^{0}_{j} \}_{j=1}^{J} \subset \mathbb{R}^{n}$ be the initial system states. 
We can then formulate the optimal control learning problem of the FBS-network as follows:
\begin{equation}\label{eq-discrete-optimal-control-problem-multi}
	\left\{\begin{aligned}
		& \min_{ (A^{N,:}, \alpha^{N,:}, \lambda^{N,:}) \in \mathscr{D}_{N} } \bigg\{ \mathcal{J}_{N}^{J}(A^{N,:}, \alpha^{N,:}, \lambda^{N,:}) = \frac{1}{J} \sum_{j=1}^{J} \mathcal{L}(x^{N,N}_{j}; y_{j}) 
		+ \beta_{1} \mathcal{H}_{N}^{(1)}(A^{N,:})  \\
		&\hspace{175pt}+ \beta_{2} \mathcal{H}_{N}^{(2)}(\alpha^{N,:})
		+ \beta_{3} \mathcal{H}_{N}^{(3)}(\lambda^{N,:}) 		
		\bigg\}  \\ 
		& {\rm s.t.} \,\, \vec{0} \in x^{N,k+1}_{j} - x^{N,k}_{j} + h_{N} \alpha^{N,k} (A^{N,k})^{\top} (A^{N,k} x^{N,k}_{j} - b_{j})
		+ h_{N} \alpha^{N,k} \lambda^{N,k} (\partial\mathcal{R})(x^{N,k+1}_{j}),  \\
		&\hspace{75pt}  
		k = 0, 1, \ldots, N-1,  \\
		&\hspace{15pt} x^{N,0}_{j} = x^{0}_{j}, \\
		&\hspace{15pt} j = 1, 2, \ldots, J,
	\end{aligned}\right.
\end{equation}
where $\mathscr{D}_{N} \subset (\mathbb{R}^{m \times n})^{N} \times \mathbb{R}^{N} \times \mathbb{R}^{N}$ is an admissible set for the control variables $A^{N,:}$, $\alpha^{N,:}$, $\lambda^{N,:}$ with $\alpha^{N,:}$, $\lambda^{N,:}$ being nonnegative, $\mathcal{L}: \mathbb{R}^{n} \times \mathbb{R}^{n} \to \mathbb{R}$ is a continuous, lower bounded loss function (e.g., $\mathcal{L}(x; y) := \frac{1}{2} \|x - y\|_{2}^{2}$) for a single sample, and $\mathcal{H}_{N}^{(1)}: (\mathbb{R}^{m \times n})^{N} \to \mathbb{R}_{+}$, $\mathcal{H}_{N}^{(2)}: \mathbb{R}^{N} \to \mathbb{R}_{+}$, $\mathcal{H}_{N}^{(3)}: \mathbb{R}^{N} \to \mathbb{R}_{+}$ are the regularization functions with weight parameters $\beta_{1} \geq 0$, $\beta_{2} \geq 0$, $\beta_{3} \geq 0$, respectively.
Similarly, we have an optimal control learning problem in the continuous-time setting as 
\begin{equation}\label{eq-continuous-optimal-control-problem-multi}
	\left\{\begin{aligned}
		& \inf_{(\pmb{A}, \pmb{\alpha}, \pmb{\lambda}) \in \pmb{\mathscr{D}}} \bigg\{ \mathcal{J}^{J}(\pmb{A}, \pmb{\alpha}, \pmb{\lambda}) = \frac{1}{J} \sum_{j=1}^{J} \mathcal{L}(\pmb{x}_{j}(T); y_{j}) 
		+ \beta_{1} \mathcal{H}^{(1)}(\pmb{A})
		+ \beta_{2} \mathcal{H}^{(2)}(\pmb{\alpha})
		+ \beta_{3} \mathcal{H}^{(3)}(\pmb{\lambda}) 	
		\bigg\}  \\
		& {\rm s.t.} \,\, \vec{0} \in \dot{\pmb{x}}_{j}(t) + \pmb{\alpha}(t) (\pmb{A}(t))^{\top} (\pmb{A}(t) \pmb{x}_{j}(t) - b_{j}) + \pmb{\alpha}(t) \pmb{\lambda}(t) (\partial\mathcal{R})(\pmb{x}_{j}(t)),  
		{\rm \,\, a.e. \,\, } t \in [0,T],  \\
		&\quad\,\,\,\, \pmb{x}_{j}(0) = x^{0}_{j}, \\ 
		&\quad\,\,\,\, j = 1, 2, \ldots, J,
	\end{aligned}\right.
\end{equation}
where $\pmb{\mathscr{D}} \subset L^{\infty}([0,T]; \mathbb{R}^{m \times n} \times \mathbb{R} \times \mathbb{R})$ is an admissible set for the control variables $\pmb{A}$, $\pmb{\alpha}$, $\pmb{\lambda}$ with $\pmb{\alpha}$, $\pmb{\lambda}$ being nonnegative, and $\mathcal{H}^{(1)}: L^{\infty}([0,T];\mathbb{R}^{m \times n}) \to \mathbb{R}_{+}$, $\mathcal{H}^{(2)}: L^{\infty}([0,T]) \to \mathbb{R}_{+}$, $\mathcal{H}^{(3)}: L^{\infty}([0,T]) \to \mathbb{R}_{+}$ are the regularization functions with weight parameters $\beta_{1} \geq 0$, $\beta_{2} \geq 0$, $\beta_{3} \geq 0$, respectively.

\subsection{Some other cases like LISTA} \label{subsec-extension-LISTA}

The above inclusion-based problems \eqref{eq-discrete-optimal-control-problem-multi} and  \eqref{eq-continuous-optimal-control-problem-multi} can be adapted to some other cases induced from the FBS algorithm.
As LISTA is the earliest one in this topic, we here take it as an example.
Recall from \cite{lin2025deep} that the forward systems of the LISTA network with $N$ layers ($N \in \mathbb{N}_{+}$) and its related deep-layer limit system are as follows
\begin{align}
	\label{eq-lfbs-discrete-2}
	&\left\{\begin{aligned}
		& x^{N,k+1} - x^{N,k} + h_{N} (W_{1} x^{N,k} - W_{2} b) 
		+ h_{N} \theta (\partial \|\cdot\|_{1})(x^{N,k+1})
		\owns \vec{0},  
		\,\, k = 0, 1, \ldots, N-1,  \\
		& x^{N,0} := x^{0}.
	\end{aligned}\right.  \\
	\label{eq-lfbs-continuous-2}
	&\left\{\begin{aligned}
		& \dot{\pmb{x}}(t) 
		+ (W_{1} \pmb{x}(t) - W_{2}b) 
		+ \theta (\partial \|\cdot\|_{1})(\pmb{x}(t))
		\owns \vec{0}, \quad {\rm a.e.\,\,} t \in [0,T],  \\
		& \pmb{x}(0) = x^{0}.
	\end{aligned}\right.
\end{align}
where $\theta := \alpha \lambda$, $W_{1} := \alpha A^{\top} A$, and $W_{2} := \alpha A^{\top}$ are all set to be learnable with consideration of the constraint $\theta > 0$.
Therefore, in supervised learning, the optimal control problem of the LISTA network \eqref{eq-lfbs-discrete-2} can be written as below:
\begin{equation}\label{eq-discrete-optimal-control-problem-lfbs}
	\left\{\begin{aligned}
		& \min_{ (W_{1}, W_{2}, \theta) \in \widetilde{\mathscr{D}} } \Big\{ \widetilde{\mathcal{J}}_{N}^{J} (W_{1}, W_{2}, \theta) := \frac{1}{J} \sum_{j=1}^{J} \mathcal{L}(x^{N,N}_{j}; y_{j})  
		+ \beta_{1} \widetilde{\mathcal{H}}^{(1)}(W_{1})  \\
		&\hspace{135pt}+ \beta_{2} \widetilde{\mathcal{H}}^{(2)}(W_{2})
		+ \beta_{3} \widetilde{\mathcal{H}}^{(3)}(\theta) 
		\Big\}  \\
		& {\rm s.t.} \,\, \vec{0} \in x^{N,k+1}_{j} - x^{N,k}_{j} + h_{N} (W_{1} x^{N,k}_{j} - W_{2}b) + h_{N} \theta (\partial \|\cdot\|_{1})(x^{N,k+1}_{j}),  
		\\
		&\hspace{115pt} k = 0, 1, \ldots, N-1,  \\
		& \quad\,\,\,\, x^{N,0}_{j} = x^{0}_{j},  \\
		& \quad\,\,\,\, j = 1, 2, \ldots, J,
	\end{aligned}\right.
\end{equation}
where $\widetilde{\mathscr{D}} \subset \mathbb{R}^{n \times n} \times \mathbb{R}^{n \times m} \times \mathbb{R}$ is an admissible set with $W_{1} \in \mathbb{R}^{n \times n}$, $W_{2} \in \mathbb{R}^{n \times m}$, $\theta \in \mathbb{R}$, and $\widetilde{\mathcal{H}}^{(1)}: \mathbb{R}^{n \times n} \to \mathbb{R}_{+}$, $\widetilde{\mathcal{H}}^{(2)}: \mathbb{R}^{n \times m} \to \mathbb{R}_{+}$, $\widetilde{\mathcal{H}}^{(3)}: \mathbb{R} \to \mathbb{R}_{+}$ are the regularization functions with weight parameters $\beta_{1} \geq 0$, $\beta_{2} \geq 0$, $\beta_{3} \geq 0$, respectively.
Similarly, we write the optimal control problem of the related deep-layer limit system \eqref{eq-lfbs-continuous-2} as
\begin{equation}\label{eq-continuous-optimal-control-problem-lfbs}
	\left\{\begin{aligned}
		& \inf_{ (W_{1}, W_{2}, \theta) \in \widetilde{\mathscr{D}} } \Bigg\{ \widetilde{\mathcal{J}}^{J} (W_{1}, W_{2}, \theta) := \frac{1}{J} \sum_{j=1}^{J} \mathcal{L}(\pmb{x}_{j}(T); y_{j}) 
		+ \beta_{1} \widetilde{\mathcal{H}}^{(1)}(W_{1})  \\
		&\hspace{135pt} + \beta_{2} \widetilde{\mathcal{H}}^{(2)}(W_{2})
		+ \beta_{3} \widetilde{\mathcal{H}}^{(3)}(\theta) 
		\Bigg\}  \\
		& {\rm s.t.} \,\, \vec{0} \in \dot{\pmb{x}}_{j}(t) + (W_{1} \pmb{x}_{j}(t) - W_{2} b) + \theta(t) (\partial \|\cdot\|_{1})(\pmb{x}_{j}(t)), \,\, {\rm a.e.} \,\, t \in [0,T],  \\
		& \quad\,\,\,\, \pmb{x}_{j}(0) = x^{0}_{j}, \\
		& \quad\,\,\,\, j = 1, 2, \ldots, J.
	\end{aligned}\right.
\end{equation}

\subsection{The learning problems to be analyzed and further notations}

In the following sections, we will study the convergence and stability properties of the learning problems for the basic FBS-network and its deep-layer limit system.
Since the number of samples $J$ in practical applications is always finite, we can, without loss of generality, consider the single-sample case of \eqref{eq-discrete-optimal-control-problem-multi} and \eqref{eq-continuous-optimal-control-problem-multi}, which we denote by $(\mathfrak{Q}_{N})$ and $(\mathfrak{Q})$, respectively, as follows
\begin{align}
	\label{eq-discrete-optimal-control-problem-single}
	&\left\{\begin{aligned}
		& \min_{ (A^{N,:}, \alpha^{N,:}, \lambda^{N,:}) \in \mathscr{D}_{N} } \Big\{ \mathcal{J}_{N}(A^{N,:}, \alpha^{N,:}, \lambda^{N,:}) = \mathcal{L}(x^{N,N}; y) 
		+ \beta_{1} \mathcal{H}_{N}^{(1)}(A^{N,:})  \\
		&\hspace{170pt}+ \beta_{2} \mathcal{H}_{N}^{(2)}(\alpha^{N,:})
		+ \beta_{3} \mathcal{H}_{N}^{(3)}(\lambda^{N,:}) 		
		\Big\}  \\
		& {\rm s.t.} \,\, \vec{0} \in x^{N,k+1} - x^{N,k} + h_{N} \alpha^{N,k} (A^{N,k})^{\top} (A^{N,k} x^{N,k} - b)
		+ h_{N} \alpha^{N,k} \lambda^{N,k} (\partial\mathcal{R})(x^{N,k+1}),  
		\\ 
		&\hspace{85pt} k = 0, 1, \ldots, N-1,  \\
		& \quad\,\,\,\, x^{N,0} = x^{0}, 
	\end{aligned}\right. \\
	\label{eq-continuous-optimal-control-problem-single}
	&\left\{\begin{aligned}
		& \inf_{(\pmb{A}, \pmb{\alpha}, \pmb{\lambda}) \in \pmb{\mathscr{D}}} \big\{ \mathcal{J}(\pmb{A}, \pmb{\alpha}, \pmb{\lambda}) = \mathcal{L}(\pmb{x}(T); y) 
		+ \beta_{1} \mathcal{H}^{(1)}(\pmb{A})
		+ \beta_{2} \mathcal{H}^{(2)}(\pmb{\alpha})
		+ \beta_{3} \mathcal{H}^{(3)}(\pmb{\lambda}) 	
		\big\}  \\
		& {\rm s.t.} \,\, \vec{0} \in \dot{\pmb{x}}(t) + \pmb{\alpha}(t) (\pmb{A}(t))^{\top} (\pmb{A}(t) \pmb{x}(t) - b) + \pmb{\alpha}(t) \pmb{\lambda}(t) (\partial\mathcal{R})(\pmb{x}(t)), 
		{\rm \,\, a.e. \,\, } t \in [0,T],  \\
		&\quad\,\,\,\, \pmb{x}(0) = x^{0}. 
	\end{aligned}\right.
\end{align}
There is no essential difference between convergence and stability analysis for problems \eqref{eq-discrete-optimal-control-problem-single} \eqref{eq-continuous-optimal-control-problem-single} and that for multi-sample problems \eqref{eq-discrete-optimal-control-problem-multi} \eqref{eq-continuous-optimal-control-problem-multi}.
Moreover, similar to the forward analyses in \cite{lin2025deep}, our analysis procedure can be simplified to discuss the learning problems in other cases like LISTA and its continuous-time analog.

For our analyses on learning problems of the basic FBS-network and the related limit system in later sections, we give some assumptions on the regularizer $\mathcal{R}$ and the single-sample loss function $\mathcal{L}$:

(A1) $\mathcal{R}: \mathbb{R}^{n} \to \mathbb{R}$ is convex on $\mathbb{R}^{n}$;

(A2) $\vec{0} \in (\partial \mathcal{R})(\vec{0})$;

(A3) there exists a constant $M > 0$, for each $x \in \mathbb{R}^{n}$, either  $\sup \{\|z\|_{2}: z \in (\partial \mathcal{R})(x)\} \leq M \|x\|_{2}$ (case 1), or $\sup \{\|z\|_{2}: z \in (\partial \mathcal{R})(x)\} \leq M $ (case 2); 

(A4) $\mathcal{L}: \mathbb{R}^{n} \times \mathbb{R}^{n} \to \mathbb{R}$ is continuous and nonnegative.  \\
We mention that (A1)-(A3) have been proposed in \cite{lin2025deep} for forward system analyses with numerous useful examples.
Examples for the loss function $\mathcal{L}$ satisfying (A4) include $\mathcal{L}(x; y) := \frac{1}{p} \|x - y\|_{p}^{p}$ with any given $p \in (0,+\infty)$. 
Note that when $p = 2$, the function $\mathcal{L}$ is exactly the widely-used empirical mean square error (MSE) in deep learning literature.

According to the forward stability results in \cite{lin2025deep}, we consider the regularization functions of the problem $(\mathfrak{Q}_{N})$ and $(\mathfrak{Q})$ as follows. 
For some $p \in [1,+\infty)$,
\begin{align*}
	& \mathcal{H}_{N}^{(1)}(A^{N,:}) := \psi\Big( \frac{1}{N} \sum_{k=0}^{N-1} \| A^{N,k} \|_{2}^{p} \Big), 
	\,\, \forall A^{N,k} \in \mathbb{R}^{m \times n}, 
	\,\, \forall k = 0, 1, \ldots, N-1,
	\,\, \forall N \in \mathbb{N}_{+},  \\
	& \mathcal{H}_{N}^{(2)}(\alpha^{N,:}) := \psi\Big( \frac{1}{N} \sum_{k=0}^{N-1} | \alpha^{N,k} |^{p} \Big), 
	\,\, \forall \alpha^{N,k} \in \mathbb{R}_{+}, 
	\,\, \forall k = 0, 1, \ldots, N-1,
	\,\, \forall N \in \mathbb{N}_{+},  \\ 
	& \mathcal{H}_{N}^{(3)}(\lambda^{N,:}) := \psi\Big( \frac{1}{N} \sum_{k=0}^{N-1} | \lambda^{N,k} |^{p} \Big), 
	\,\, \forall \lambda^{N,k} \in \mathbb{R}_{+}, 
	\,\, \forall k = 0, 1, \ldots, N-1,
	\,\, \forall N \in \mathbb{N}_{+},  \\ 
	& \mathcal{H}^{(1)}(\pmb{A}) := \psi\Big( \frac{1}{T} \int_{[0,T]} \| \pmb{A}(t) \|_{2}^{p} {\rm d}t \Big), 
	\,\, \forall \pmb{A} \in L^{p}([0,T];\mathbb{R}^{m \times n}),  \\
	& \mathcal{H}^{(2)}(\pmb{\alpha}) := \psi\Big( \frac{1}{T} \int_{[0,T]} | \pmb{\alpha}(t) |^{p} {\rm d}t \Big), 
	\,\, \forall \pmb{\alpha} \in L^{p}([0,T]),  \\
	& \mathcal{H}^{(3)}(\pmb{\lambda}) := \psi\Big( \frac{1}{T} \int_{[0,T]} | \pmb{\lambda}(t) |^{p} {\rm d}t \Big), 
	\,\, \forall \pmb{\lambda} \in L^{p}([0,T]),
\end{align*}
unless otherwise specified.
We use the following assumption for $\psi$ that

(A5) $\psi: \mathbb{R} \to \mathbb{R}_{+}$ is locally Lipschitz continuous.  \\
Note that if $\psi \equiv 0$, the problems  \eqref{eq-discrete-optimal-control-problem-single} \eqref{eq-continuous-optimal-control-problem-single} reduce to a special case without parameter regularization. 



We recall some notations from \cite{lin2025deep}.
We denote by $B_{n}(z;r)$ and $\overline{B}_{n}(z;r)$ in $\mathbb{R}^{n}$ the $n$-dimensional open ball centered at $z$ with radius $r > 0$ and its closure, respectively.
For $N \in \mathbb{N}_{+}$, the interval $[0,T]$ is partitioned into $N$ subintervals as follows:
\begin{align*}
	[0,T] := \bigcup_{k=0}^{N-2} \Big[ k \frac{T}{N}, (k+1) \frac{T}{N} \Big) \bigcup \Big[ (N-1) \frac{T}{N}, T \Big],
\end{align*}
where each subinterval is denoted by $\Omega_{k}^{N}$ for $k \in \{ 0, 1, \ldots, N-1 \}$.
Let $\mathfrak{X}$ represent a finite dimensional Banach space, and denote $\mathfrak{X}^{N} \equiv \underbrace{\mathfrak{X} \times \mathfrak{X} \times \cdots \times \mathfrak{X}}_{N}$.
We define a piecewise constant extension operator $\mathcal{I}_{N}: \mathfrak{X}^{N} \to L^{\infty}([0,T];\mathfrak{X})$ such that for every $V \in \mathfrak{X}^{N}$,
\begin{align*}
	(\mathcal{I}_{N} V)(t) 
	:= \sum_{k=0}^{N-1} V_{k} \chi_{\Omega_{k}^{N}}(t), \quad \forall t \in [0,T],
\end{align*}
with $\chi_{\mathcal{C}}(t) := \left\{\begin{aligned}
	& 1, & t \in \mathcal{C},  \\
	& 0, & t \notin \mathcal{C}.
\end{aligned}\right.$ being the characteristic function of the set $\mathcal{C}$.
Additionally, we employ a projection operator $\mathcal{P}_{N}: L^{1}([0,T]; \mathfrak{X}) \to \mathfrak{X}^{N}$, such that for every $\pmb{V} \in L^{1}([0,T];\mathfrak{X})$,
\begin{align*}
	(\mathcal{P}_{N} \pmb{V})_{k} := \frac{1}{|\Omega_{k}^{N}|} \int_{\Omega_{k}^{N}} \pmb{V}(s) {\rm d}s, \quad \forall k \in \{ 0, 1, \ldots, N-1 \}.
\end{align*}
We thus can connect the system \eqref{eq-discrete-optimal-control-problem-single} and the system \eqref{eq-continuous-optimal-control-problem-single} by introducing the functional $\mathcal{J}_{N} \circ \mathcal{P}_{N}$ as follows.
For any given $N \in \mathbb{N}_{+}$, we let the function $(\mathcal{J}_{N} \circ \mathcal{P}_{N})(\pmb{A}, \pmb{\alpha}, \pmb{\lambda}) := \mathcal{J}_{N}(\mathcal{P}_{N} \pmb{A}, \mathcal{P}_{N} \pmb{\alpha}, \mathcal{P}_{N} \pmb{\lambda})$ for any $(\pmb{A}, \pmb{\alpha}, \pmb{\lambda}) \in \pmb{\mathscr{D}}$.

We also introduce some further notations and conventions.
For any function $\pmb{U} \in L^{1}([0,T];\mathbb{R}^{m \times n})$, we extend its definition by setting $\pmb{U}(t) := 0$ for $t \in \mathbb{R} \backslash [0,T]$, while still denoting the extended function as $\pmb{U}$.
Hence, the shift operator $\tau_{h}$ is defined such that for any $h \in \mathbb{R}$ and any function $\pmb{U} \in L^{1}(\mathbb{R}; \mathbb{R}^{m \times n})$,
\begin{align*}
	(\tau_{h} \pmb{U})(t) := \pmb{U}(t + h), \quad \forall t \in \mathbb{R}.
\end{align*}
We endow the space $\mathbb{R}^{m \times n} \times \mathbb{R} \times \mathbb{R}$ with the norm
\begin{align*}
	\|(U, \eta, \zeta)\|_{\mathbb{R}^{m \times n} \times \mathbb{R} \times \mathbb{R}} 
	:= \|U\|_{2} + |\eta| + |\zeta|, 
	\quad \forall (U, \eta, \zeta) \in \mathbb{R}^{m \times n} \times \mathbb{R} \times \mathbb{R}.
\end{align*}
We use this to further equip norms for spaces $(\mathbb{R}^{m \times n})^{N} \times \mathbb{R}^{N} \times \mathbb{R}^{N}$ and $L^{p}([0,T]; \mathbb{R}^{m \times n} \times \mathbb{R} \times \mathbb{R})$.
In particular, the $\ell^{p} (p \in [1,+\infty])$ norm of $(U^{N,:}, \eta^{N,:}, \zeta^{N,:}) \in (\mathbb{R}^{m \times n})^{N} \times \mathbb{R}^{N} \times \mathbb{R}^{N}$ is as follows
\begin{align*}
    & \|(U^{N,:}, \eta^{N,:}, \zeta^{N,:})\|_{\ell^{p}((\mathbb{R}^{m \times n})^{N} \times \mathbb{R}^{N} \times \mathbb{R}^{N})}  \\
    :=& \left\{\begin{aligned}
        & \Big( \frac{T}{N} \sum_{k=0}^{N-1} \|(U^{N,k}, \eta^{N,k}, \zeta^{N,k})\|_{\mathbb{R}^{m \times n} \times \mathbb{R} \times \mathbb{R}}^{p} \Big)^{\frac{1}{p}}, \, & p& \in [1,+\infty),  \\
        & \max_{0 \leq k \leq N-1} \|(U^{N,k}, \eta^{N,k}, \zeta^{N,k})\|_{\mathbb{R}^{m \times n} \times \mathbb{R} \times \mathbb{R}}, \, & p& = +\infty.
    \end{aligned}\right. 
\end{align*}
For $p \in [1,+\infty]$, the norm in $L^{p}([0,T]; \mathbb{R}^{m \times n} \times \mathbb{R} \times \mathbb{R})$ naturally reads
\begin{align*}
	& \|(\pmb{U}, \pmb{\eta}, \pmb{\zeta})\|_{L^{p}([0,T]; \mathbb{R}^{m \times n} \times \mathbb{R} \times \mathbb{R})} 
	:= 
	\left\{\begin{aligned}
		& \big( \int_{[0,T]} \|(\pmb{U}(s), \pmb{\eta}(s), \pmb{\zeta}(s))\|_{\mathbb{R}^{m \times n} \times \mathbb{R} \times \mathbb{R}}^{p} {\rm d}s \big)^{\frac{1}{p}}, 
		\, & p& \in [1,+\infty),  \\
		& \esssup_{s \in [0,T]} \|(\pmb{U}(s), \pmb{\eta}(s), \pmb{\zeta}(s))\|_{2}, 
		\, & p& = +\infty.
	\end{aligned}\right. 
\end{align*}
for any $(\pmb{U}, \pmb{\eta}, \pmb{\zeta}) \in L^{p}([0,T]; \mathbb{R}^{m \times n} \times \mathbb{R} \times \mathbb{R})$.
Note that the above factor $\frac{T}{N}$ is introduced for consistency between $(\mathbb{R}^{m \times n})^{N} \times \mathbb{R}^{N} \times \mathbb{R}^{N}$ and $L^{p}([0,T]; \mathbb{R}^{m \times n} \times \mathbb{R} \times \mathbb{R})$.

\section{Convergence properties from the learning problem of the basic FBS-network to that of its deep-layer limit system} 
\label{sec-convergence-optimal-control}

We start from the existence of solutions to \eqref{eq-discrete-optimal-control-problem-single} and \eqref{eq-continuous-optimal-control-problem-single}.

\subsection{Existence of solutions to the learning problems of the basic FBS-network and its deep-layer limit system}

\subsubsection{Existence of solutions to the learning problem \eqref{eq-discrete-optimal-control-problem-single}}

The existence of solutions to the learning problem \eqref{eq-discrete-optimal-control-problem-single} is straightforward by noting the following lemma, which is included here for completeness.

\begin{lemma}\label{lemma-perturb-of-prox}
	Suppose the assumptions (A1)(A2) hold and a vector $y \in \mathbb{R}^{n}$ be given. 
	Denote the minimizer of the optimization problem $\min_{x \in \mathbb{R}^{n}} \{\mathcal{R}(x) + \frac{1}{2 \rho} \|x - y\|_{2}^{2} \}$ with $\rho > 0$ as $x^{\rho} = {\rm prox}_{\rho \mathcal{R}}(y)$. 
	If $\Delta \rho \to 0$, then $x^{\rho + \Delta \rho} \to x^{\rho}$. 
\end{lemma}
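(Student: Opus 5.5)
The plan is to compare the first-order optimality conditions of the two proximal problems and use monotonicity of $\partial\mathcal{R}$. Assumption (A1) makes $\mathcal{R}$ a finite convex function on $\mathbb{R}^{n}$. Hence for every $\rho>0$ the objective $\mathcal{R}(x)+\frac{1}{2\rho}\|x-y\|_{2}^{2}$ is strongly convex, continuous and coercive. Its minimizer $x^{\rho}$ therefore exists, is unique, and is characterized by
\begin{align*}
	\frac{y - x^{\rho}}{\rho} \in (\partial\mathcal{R})(x^{\rho}).
\end{align*}
Write $\rho' := \rho+\Delta\rho$, which is positive once $|\Delta\rho|<\rho$. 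The same inclusion then holds for $x^{\rho'}$ with $\rho'$ in place of $\rho$.

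\textbf{Step 1 (monotonicity inequality).} Since $\partial\mathcal{R}$ is monotone, pairing the two inclusions gives
\begin{align*}
	\Big\langle \frac{y-x^{\rho'}}{\rho'} - \frac{y-x^{\rho}}{\rho},\, x^{\rho'}-x^{\rho} \Big\rangle \geq 0 .
\end{align*}
I rewrite the first argument as $\frac{x^{\rho}-x^{\rho'}}{\rho'} + (y-x^{\rho})\big(\frac{1}{\rho'}-\frac{1}{\rho}\big)$. This yields
\begin{align*}
	\frac{1}{\rho'}\|x^{\rho'}-x^{\rho}\|_{2}^{2} \leq \Big|\frac{1}{\rho'}-\frac{1}{\rho}\Big|\,\|y-x^{\rho}\|_{2}\,\|x^{\rho'}-x^{\rho}\|_{2},
\end{align*}
and therefore
\begin{align*}
	\|x^{\rho'}-x^{\rho}\|_{2} \leq \frac{|\Delta\rho|}{\rho}\,\|y-x^{\rho}\|_{2}.
\end{align*}

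\textbf{Step 2 (conclusion).} The quantity $\|y-x^{\rho}\|_{2}$ is a fixed finite number independent of $\Delta\rho$. So the right-hand side tends to $0$ as $\Delta\rho\to0$, which gives $x^{\rho+\Delta\rho}\to x^{\rho}$; in fact the dependence is Lipschitz. If a bound uniform in $y$ were wanted, (A2) would supply one. It gives $\mathrm{prox}_{\rho\mathcal{R}}(0)=0$, and nonexpansiveness of the prox then yields $\|x^{\rho}\|_{2}\leq\|y\|_{2}$, so $\|y-x^{\rho}\|_{2}\leq 2\|y\|_{2}$. This is not needed for the statement itself.

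\textbf{Main obstacle.} The only delicate point is justifying the optimality characterization and the monotonicity of $\partial\mathcal{R}$. Both are standard for a finite convex function on $\mathbb{R}^{n}$: the subdifferential sum rule applies because the quadratic term is differentiable. The one piece of bookkeeping to get right is the algebraic splitting in Step 1 that isolates the $\frac{1}{\rho'}-\frac{1}{\rho}$ term.
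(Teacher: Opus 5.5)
Your proof is correct, but it takes a different route from the paper.

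\textbf{What the paper does.} It works only with function values. It plays the two minimality inequalities $f_{\rho}(x^{\rho})\le f_{\rho}(x^{\rho+\Delta\rho})$ and $f_{\rho+\Delta\rho}(x^{\rho+\Delta\rho})\le f_{\rho+\Delta\rho}(x^{\rho})$ against each other in $\liminf/\limsup$ form to get $f_{\rho}(x^{\rho+\Delta\rho})\to f_{\rho}(x^{\rho})$. Swapping the weights $\frac{1}{2\rho}$ and $\frac{1}{2(\rho+\Delta\rho)}$ there requires the uniform bound $\|x^{\rho+\Delta\rho}\|_{2}\le\|y\|_{2}$, and this is exactly where (A2) and nonexpansiveness of the prox enter. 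It then turns value convergence into convergence of points via the quadratic growth $f_{\rho}(z)-f_{\rho}(x^{\rho})\ge\frac{1}{2\rho}\|z-x^{\rho}\|_{2}^{2}$ of the strongly convex $f_{\rho}$.

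\textbf{What you do.} You use the first-order characterization $\frac{y-x^{\rho}}{\rho}\in(\partial\mathcal{R})(x^{\rho})$ and monotonicity of $\partial\mathcal{R}$. Your splitting $\frac{x^{\rho}-x^{\rho'}}{\rho'}+(y-x^{\rho})\bigl(\frac{1}{\rho'}-\frac{1}{\rho}\bigr)$ and the bound $\|x^{\rho+\Delta\rho}-x^{\rho}\|_{2}\le\frac{|\Delta\rho|}{\rho}\|y-x^{\rho}\|_{2}$ both check out.

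\textbf{Comparison.} Your route gives an explicit Lipschitz estimate in $\Delta\rho$ instead of mere continuity. It also shows (A2) is not needed, since the only constant involved is $\|y-x^{\rho}\|_{2}$ at the fixed $\rho$, so no separate boundedness of the perturbed minimizers is required. The price is reliance on subdifferential calculus (sum rule, nonemptiness and monotonicity of $\partial\mathcal{R}$), which is harmless here because $\mathcal{R}$ is finite and convex on $\mathbb{R}^{n}$. The paper's argument is the generic ``value convergence plus growth condition'' template: it uses only minimality and strong convexity, but it must supply boundedness of the minimizers by other means, and as written it yields no rate.
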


\begin{proof}[Proof. ]
	By denoting $f_{\rho}(x) := \mathcal{R}(x) + \frac{1}{2 \rho} \|x - y\|_{2}^{2}$, we first show 
	\begin{align}\label{eq-lemma3.1-1}
		f_{\rho}(x^{\rho}) = \lim_{\Delta \rho \to 0} f_{\rho}(x^{\rho + \Delta \rho}),
	\end{align}
	i.e.,
	\begin{align*} 
		\mathcal{R}(x^{\rho}) + \frac{1}{2 \rho} \|x^{\rho} - y\|_{2}^{2}  
		= \lim_{\Delta \rho \to 0} \left\{ \mathcal{R}(x^{\rho + \Delta \rho}) + \frac{1}{2 \rho} \|x^{\rho + \Delta \rho} - y\|_{2}^{2} \right\}.
	\end{align*}	
	By the definition, we have
	\begin{align}
		\label{ineq-1} \mathcal{R}(x^{\rho}) + \frac{1}{2 \rho} \|x^{\rho} - y\|_{2}^{2}
		\leq& \mathcal{R}(x^{\rho + \Delta \rho}) + \frac{1}{2 \rho} \|x^{\rho + \Delta \rho} - y\|_{2}^{2},  \\
		\label{ineq-2} \mathcal{R}(x^{\rho + \Delta \rho}) + \frac{1}{2(\rho + \Delta \rho)} \|x^{\rho + \Delta \rho} - y\|_{2}^{2}
		\leq& \mathcal{R}(x^{\rho}) + \frac{1}{2(\rho + \Delta \rho)} \|x^{\rho} - y\|_{2}^{2}.
	\end{align}
	It then follows that
	\begin{align}
		\label{ineq-3} \mathcal{R}(x^{\rho}) + \frac{1}{2 \rho} \|x^{\rho} - y\|_{2}^{2}
		\leq& \mathop{\lim\inf}_{\Delta \rho \to 0} \left\{ \mathcal{R}(x^{\rho + \Delta \rho}) + \frac{1}{2 \rho} \|x^{\rho + \Delta \rho} - y\|_{2}^{2} \right\},  \\
		\label{ineq-4} \mathop{\lim\sup}_{\Delta \rho \to 0} \left\{ \mathcal{R}(x^{\rho + \Delta \rho}) + \frac{1}{2(\rho + \Delta \rho)} \|x^{\rho + \Delta \rho} - y\|_{2}^{2} \right\}
		\leq& \mathop{\lim\sup}_{\Delta \rho \to 0} \left\{ \mathcal{R}(x^{\rho}) + \frac{1}{2(\rho + \Delta \rho)} \|x^{\rho} - y\|_{2}^{2} \right\}.
	\end{align}
	Since the nonexpansiveness of the proximal operator and ${\rm prox}_{\rho \mathcal{R}}(\vec{0}) = \vec{0}$ indicate $\|x^{\rho}\|_{2} \leq \|y\|_{2}$ for every $\rho > 0$, we obtain
	\begin{align*}
		& \mathop{\lim\sup}_{\Delta \rho \to 0} \left\{ \mathcal{R}(x^{\rho + \Delta \rho}) + \frac{1}{2 \rho} \|x^{\rho + \Delta \rho} - y\|_{2}^{2} \right\}  \\
		=& \mathop{\lim\sup}_{\Delta \rho \to 0} \left\{ \mathcal{R}(x^{\rho + \Delta \rho}) + \frac{1}{2(\rho + \Delta \rho)} \|x^{\rho + \Delta \rho} - y\|_{2}^{2} \right\}   \\
		\leq& \mathop{\lim\sup}_{\Delta \rho \to 0} \left\{ \mathcal{R}(x^{\rho}) + \frac{1}{2(\rho + \Delta \rho)} \|x^{\rho} - y\|_{2}^{2} \right\} \,\,({\rm by \,\, Eq.} \eqref{ineq-4})  \\
		=& \mathcal{R}(x^{\rho}) + \frac{1}{2 \rho} \|x^{\rho} - y\|_{2}^{2}  \\
		\leq& \mathop{\lim\inf}_{\Delta \rho \to 0} \left\{ \mathcal{R}(x^{\rho + \Delta \rho}) + \frac{1}{2 \rho} \|x^{\rho + \Delta \rho} - y\|_{2}^{2} \right\} \,\,({\rm by \,\, Eq.} \eqref{ineq-3}),
	\end{align*}
	which proves Eq.\eqref{eq-lemma3.1-1}.

	We then show the lemma by using Eq.\eqref{eq-lemma3.1-1}.
	Since $f_{\rho}(x)$ is strongly convex with the constant $\frac{1}{\rho} > 0$, then for every $g \in \partial f(x^{\rho})$,
	\begin{align*}
		\frac{1}{2\rho} \|x^{\rho + \Delta \rho} - x^{\rho}\|_{2}^{2}
		\leq f_{\rho}(x^{\rho + \Delta \rho}) - f_{\rho}(x^{\rho}) - g^{\top} \cdot (x^{\rho + \Delta \rho} - x^{\rho}). 
	\end{align*}
	As $x^{\rho}$ is the minimizer of $f_{\rho}$, we can choose $g = \vec{0}$ and complete the proof.
\end{proof}                       

Now we show that the learning problem $(\mathfrak{Q}_{N})$ can attain its minimum.

\begin{theorem}\label{thm-existence-of-solution-to-optimal-control-discrete}
	Suppose the assumptions (A1)(A2)(A4)(A5) hold. 
	Let the observed data $b \in \mathbb{R}^{m}$, the initial value $x^{0} \in \mathbb{R}^{n}$, and the label $y \in \mathbb{R}^{n}$ be given. Consider the optimal control problem $(\mathfrak{Q}_{N})$. If the admissible set $\mathscr{D}_{N}$ is closed and bounded, then $(\mathfrak{Q}_{N})$ has a solution.	
\end{theorem}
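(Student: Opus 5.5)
The plan is to apply the Weierstrass extreme value theorem on the finite-dimensional space $(\mathbb{R}^{m \times n})^{N} \times \mathbb{R}^{N} \times \mathbb{R}^{N}$. Since $\mathscr{D}_{N}$ is closed and bounded there, it is compact. It therefore suffices to show that $\mathcal{J}_{N}$ is (lower semi-)continuous on $\mathscr{D}_{N}$. We may also assume $\mathscr{D}_{N}\neq\emptyset$, as is implicit in the statement.

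\textbf{Well-definedness of the forward map.} The constraint in $(\mathfrak{Q}_{N})$ determines $x^{N,k+1}$ uniquely from $x^{N,k}$. Indeed, with $\rho_{k} := h_{N}\alpha^{N,k}\lambda^{N,k} \geq 0$ and $z^{k} := x^{N,k} - h_{N}\alpha^{N,k}(A^{N,k})^{\top}(A^{N,k}x^{N,k}-b)$, the inclusion reads $x^{N,k+1} = (I + \rho_{k}\partial\mathcal{R})^{-1}(z^{k})$. By (A1) this equals ${\rm prox}_{\rho_{k}\mathcal{R}}(z^{k})$ when $\rho_{k}>0$, and equals $z^{k}$ when $\rho_{k}=0$. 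Here $\mathcal{R}$ is finite and convex on $\mathbb{R}^{n}$, so $\partial\mathcal{R}$ is maximal monotone and the resolvent is single-valued.

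\textbf{Continuity of the map $(A^{N,:},\alpha^{N,:},\lambda^{N,:}) \mapsto x^{N,N}$.} I will argue by induction on $k$. The map $(x^{N,k}, A^{N,k}, \alpha^{N,k}) \mapsto z^{k}$ is polynomial, hence continuous. It remains to show that $(\rho, z) \mapsto {\rm prox}_{\rho\mathcal{R}}(z)$ is jointly continuous on $[0,\infty)\times\mathbb{R}^{n}$, with ${\rm prox}_{0\mathcal{R}} = I$. I split this as
\[
\|{\rm prox}_{\rho'\mathcal{R}}(z') - {\rm prox}_{\rho\mathcal{R}}(z)\|_{2} \leq \|z'-z\|_{2} + \|{\rm prox}_{\rho'\mathcal{R}}(z) - {\rm prox}_{\rho\mathcal{R}}(z)\|_{2}.
\]
The first bound uses nonexpansiveness. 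The second term tends to $0$ for $\rho>0$ by Lemma \ref{lemma-perturb-of-prox}.

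The case $\rho = 0$ is the main obstacle, because the lemma covers only $\rho>0$. I would handle it via the optimality condition: $x' := {\rm prox}_{\rho'\mathcal{R}}(z)$ satisfies $(z - x')/\rho' \in \partial\mathcal{R}(x')$. (A2) together with nonexpansiveness gives $\|x'\|_{2}\leq\|z\|_{2}$. A finite convex function has subgradients that are bounded on bounded sets, say by $L_{z}$ on $\overline{B}_{n}(\vec{0};\|z\|_{2})$. Hence $\|x'-z\|_{2}\leq \rho' L_{z}\to 0$ as $\rho'\to 0^{+}$. This gives continuity at $\rho=0$ as well.

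\textbf{Conclusion.} Composing $N$ continuous steps shows that $x^{N,N}$ depends continuously on the parameters. By (A4), $\mathcal{L}(x^{N,N};y)$ is therefore continuous. The regularization terms $\mathcal{H}_{N}^{(i)}$ are compositions of the continuous $\psi$ from (A5) with continuous averages of powers of norms. So $\mathcal{J}_{N}$ is continuous on the compact set $\mathscr{D}_{N}$ and attains its minimum, which proves that $(\mathfrak{Q}_{N})$ has a solution.
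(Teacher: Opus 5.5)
Your proof is correct and follows the same route as the paper: you get continuity of each layer map from Lemma~\ref{lemma-perturb-of-prox}, hence continuity of $\mathcal{J}_{N}$, and then the minimum is attained on the compact set $\mathscr{D}_{N}$. Two of your steps fill in details the paper dismisses as ``not difficult to see'': the nonexpansiveness splitting that gives joint continuity in $(\rho,z)$, and the separate treatment of the degenerate case $\rho_{k}=h_{N}\alpha^{N,k}\lambda^{N,k}=0$, which the lemma (stated only for $\rho>0$) does not cover.
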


\begin{proof}[Proof. ]
	By \textbf{Lemma} \ref{lemma-perturb-of-prox}, it is not difficult to see that for every $k = 0, 1, \ldots, N-1$, the state $x^{N,k+1}$ is continuous with respect to (w.r.t.) $(A^{N,:}, \alpha^{N,:}, \lambda^{N,:}, x^{N,k})$, and thus $x^{N,N}$ is continuous w.r.t. $(A^{N,:}, \alpha^{N,:}, \lambda^{N,:})$.
	Therefore, by the assumption of $\mathcal{L}$, we see that the objective functional $\mathcal{J}_{N}$ is continuous w.r.t. $(A^{N,:}, \alpha^{N,:}, \lambda^{N,:})$.
	If the admissible set $\mathscr{D}_{N}$ is closed and bounded, the optimal control problem $(\mathfrak{Q}_{N})$ can attain its minimum over the feasible set of learnable parameters.
\end{proof}

In the following discussion, we denote the solution set of the problem $(\mathfrak{Q}_{N})$ as $S_{N}$.

\subsubsection{Existence of solutions to the learning problem \eqref{eq-continuous-optimal-control-problem-single}}
We now provide an existence theorem of solutions to the optimal control problem to the deep-layer limit system as the continuous-time analog of that of the basic FBS-network. 
We first give two lemmas, which indicate a Bochner version of \textbf{Kolmogorov}-\textbf{Ri{\'e}sz}-\textbf{Fr{\'e}chet} \textbf{Theorem} \cite[Thm 4.26]{brezis2010functional} and the sufficiency of the upper semicontinuity (u.s.c.) of a set-valued mapping to its outer semicontinuity (o.s.c.).

\begin{lemma}\label{lemma-Kolmogorov-Riesz-bochner}
	(Matrix-valued version of Kolmogorov-Ri{\'e}sz-Fr{\'e}chet Theorem)
	Let $\mathcal{F}$ be a bounded set in $L^{p}(\mathbb{R}; \mathbb{R}^{m \times n})$ with $p \in [1,+\infty)$. 
	Assume that
	\begin{align*}
		\lim_{|h| \to 0} \|\tau_{h} \pmb{U} - \pmb{U}\|_{L^{p}(\mathbb{R}; \mathbb{R}^{m \times n})}  = 0 \,\, {\rm uniformly \,\, in} \,\, \pmb{U} \in \mathcal{F},
	\end{align*}
	i.e., $\forall \varepsilon > 0$, $\exists \delta > 0$ such that $\forall h \in \mathbb{R}$ with $|h| < \delta$, the inequality $\|\tau_{h} \pmb{U} - \pmb{U}\|_{L^{p}(\mathbb{R}; \mathbb{R}^{m \times n})} < \varepsilon$ holds for all $\pmb{U} \in \mathcal{F}$. 
	Then the closure of $\mathcal{F} |_{\Omega}$ in $L^{p}(\mathbb{R}; \mathbb{R}^{m \times n})$ is compact for any measurable set $\Omega \subset \mathbb{R}$	with finite measure. 
	Here $\mathcal{F} |_{\Omega}$ denotes the restrictions to $\Omega$ of the functions in $\mathcal{F}$.
\end{lemma}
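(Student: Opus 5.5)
The plan is to reduce the matrix-valued statement to the scalar Kolmogorov--Ri\'esz--Fr\'echet theorem \cite[Thm 4.26]{brezis2010functional}, applied entrywise. Write $\pmb{U} = (\pmb{U}_{il})_{1 \le i \le m,\, 1 \le l \le n}$ with scalar entries $\pmb{U}_{il} \in L^{p}(\mathbb{R})$. For each fixed pair $(i,l)$, consider the family $\mathcal{F}_{il} := \{\pmb{U}_{il} : \pmb{U} \in \mathcal{F}\} \subset L^{p}(\mathbb{R})$.

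The basic tool is norm equivalence in finite dimensions. There are constants $c, C > 0$, depending only on $m,n$, such that
\[
c\,\max_{i,l} |U_{il}| \le \|U\|_{2} \le C\,\max_{i,l} |U_{il}|
\]
for every $U \in \mathbb{R}^{m \times n}$. One can take $c = 1$, since $|U_{il}| \le \|U\|_2$, and $C = \sqrt{mn}$, since $\|U\|_2 \le \|U\|_F$.

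Step 1 transfers the hypotheses to the entries. Because $|\pmb{U}_{il}(t)| \le \|\pmb{U}(t)\|_2$ pointwise, $\mathcal{F}_{il}$ is bounded in $L^{p}(\mathbb{R})$. Applying the same inequality to $\tau_h \pmb{U} - \pmb{U}$ gives
\[
\|\tau_h \pmb{U}_{il} - \pmb{U}_{il}\|_{L^p(\mathbb{R})} \le \|\tau_h \pmb{U} - \pmb{U}\|_{L^p(\mathbb{R};\mathbb{R}^{m\times n})},
\]
so the uniform translation continuity holds for each $\mathcal{F}_{il}$. The scalar theorem then says that $\mathcal{F}_{il}|_{\Omega}$ is relatively compact in $L^{p}(\Omega)$ for every measurable $\Omega$ of finite measure.

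Step 2 assembles the entrywise compactness, and I would do it sequentially. Take any sequence $\{\pmb{U}^{k}|_{\Omega}\}$ in $\mathcal{F}|_{\Omega}$. Extract subsequences successively over the finitely many index pairs $(i,l)$, so that every entry converges in $L^{p}(\Omega)$, say $\pmb{U}^{k}_{il} \to \pmb{V}_{il}$. Set $\pmb{V} := (\pmb{V}_{il})$. Then
\[
\|\pmb{U}^{k} - \pmb{V}\|_{L^p(\Omega;\mathbb{R}^{m\times n})} \le \sqrt{mn} \sum_{i,l} \|\pmb{U}^{k}_{il} - \pmb{V}_{il}\|_{L^p(\Omega)} \to 0,
\]
which follows from $\|U\|_2 \le \sum_{i,l}|U_{il}|$ and Minkowski's inequality. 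Hence $\mathcal{F}|_{\Omega}$ is relatively sequentially compact. Since we are in a metric space, its closure is compact, where functions on $\Omega$ are identified with their zero extensions to $\mathbb{R}$.

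There is no deep obstacle here. The only care needed is the interpretation of ``closure in $L^{p}(\mathbb{R};\mathbb{R}^{m\times n})$'' of restrictions, which is handled by the zero extension, and the measurability of the entries, which is immediate in finite dimensions. Alternatively, one could repeat Brezis' proof directly with mollifiers $\rho_{\epsilon} * \pmb{U}$ and the Arzel\`a--Ascoli theorem for $\mathbb{R}^{m\times n}$-valued functions. The entrywise reduction is shorter, so I would use it.
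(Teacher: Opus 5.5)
Your proposal is correct and follows essentially the same route as the paper. You apply the scalar Kolmogorov--Ri\'esz--Fr\'echet theorem to each entry family $\mathcal{F}_{i,j}$, using $|\pmb{U}_{i,j}(t)|\le\|\pmb{U}(t)\|_2$ to transfer boundedness and uniform translation continuity, extract subsequences successively over the $mn$ entries, and reassemble with a finite-dimensional norm comparison. The differences are cosmetic: you combine the entries via $\|U\|_2\le\sum_{i,l}|U_{il}|$ and Minkowski instead of the paper's $\|\pmb{U}(t)\|_2^p\le C_{m,n,p}\sum_{i,j}|\pmb{U}_{i,j}(t)|^p$, and you state explicitly the entrywise translation estimate that the paper leaves implicit.
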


\begin{proof}[Proof. ]
	By noting $|\pmb{U}_{i,j}(t)| \leq \|\pmb{U}(t)\|_{2}$ for a.e. $t \in \Omega$, any $i \in \{1, 2, \cdots, m\}$, and any $j \in\{1, 2, \cdots, n\}$, we can apply the classical \textbf{Kolmogorov}-\textbf{Ri{\'e}sz}-\textbf{Fr{\'e}chet} \textbf{Theorem} \ref{thm-Kolmogorov-Riesz} to an entry $\mathcal{F}_{i,j} \equiv \{ \pmb{V}_{i,j}: \pmb{V} \in \mathcal{F} \}$ to obtain its compactness.
	By progressively extracting $mn$ (a finite number) times of subsequences and $\|\pmb{U}(t)\|_{2}^{p} \leq C_{m,n,p} \sum_{i=1}^{m} \sum_{j=1}^{n} |\pmb{U}_{i,j}(t)|^{p}$ for a.e. $t \in \Omega$ with some certain constant $C_{m,n,p} > 0$, we derive the compactness of $\mathcal{F} |_{\Omega}$.
\end{proof}

Clearly, we can extend \textbf{Lemma} \ref{lemma-Kolmogorov-Riesz-bochner} to the case for $\mathcal{F}$ being a bounded set in $L^{p}(\mathbb{R};\Pi_{i \in \Theta} \mathbb{R}^{m_{i} \times n_{i}})$ with a finite index set $\Theta$, which can be seen as a Bochner version of \textbf{Lemma} \ref{lemma-Kolmogorov-Riesz-bochner}.

\begin{lemma}\label{lemma-usc-imply-osc}
	(U.s.c. implies o.s.c)
	Let the set-valued mapping $F: \mathbb{R}^{n} \rightrightarrows \mathbb{R}^{m}$ be closed-valued. If $F$ is u.s.c., then $F$ is also o.s.c..
\end{lemma}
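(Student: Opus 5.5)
The argument is a direct unwinding of definitions. I will use the standard notions (as in Rockafellar--Wets / Aubin--Frankowska), since the paper has not restated them. $F$ is \emph{upper semicontinuous} at $\bar x$ if for every open set $\mathcal{O} \supset F(\bar x)$ there is $\delta>0$ with $F(x) \subset \mathcal{O}$ whenever $\|x-\bar x\|_2<\delta$. $F$ is \emph{outer semicontinuous} at $\bar x$ if $\limsup_{x\to\bar x} F(x) \subset F(\bar x)$. Equivalently, whenever $x_k \to \bar x$, $v_k \in F(x_k)$ and $v_k \to \bar v$, we must have $\bar v \in F(\bar x)$. I will prove the sequential form pointwise at an arbitrary $\bar x \in \mathbb{R}^n$.

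Fix such $x_k$, $v_k$, $\bar v$ and suppose, for contradiction, that $\bar v \notin F(\bar x)$.
\begin{itemize}
\item Since $F(\bar x)$ is closed, its complement is open, so there is $\varepsilon>0$ with $\overline{B}_m(\bar v;2\varepsilon) \cap F(\bar x) = \emptyset$.
\item Set $\mathcal{O} := \mathbb{R}^m \setminus \overline{B}_m(\bar v;\varepsilon)$. This is an open set, and it contains $F(\bar x)$: any $w$ with $\|w-\bar v\|_2 \le \varepsilon$ lies in $\overline{B}_m(\bar v;2\varepsilon)$ and hence not in $F(\bar x)$.
\item By u.s.c. at $\bar x$, there is $\delta>0$ with $F(x) \subset \mathcal{O}$ whenever $\|x-\bar x\|_2<\delta$.
\item For all large $k$ we have $\|x_k-\bar x\|_2<\delta$, so $v_k \in F(x_k) \subset \mathcal{O}$, that is, $\|v_k-\bar v\|_2>\varepsilon$.
\item This contradicts $v_k \to \bar v$. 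Hence $\bar v \in F(\bar x)$, which is o.s.c. at $\bar x$.
\end{itemize}
Since $\bar x$ was arbitrary, $F$ is o.s.c. on $\mathbb{R}^n$.

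If $F(\bar x)=\emptyset$, the argument still goes through. In that case $\mathcal{O}=\emptyset$ is an admissible open neighbourhood, so u.s.c. forces $F(x)=\emptyset$ near $\bar x$. Then no sequence $v_k \in F(x_k)$ exists, and the condition holds vacuously.

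The only point needing care is the choice of definition. If the paper uses the Kuratowski outer limit with $x \to \bar x$ including $x=\bar x$, the sequential argument above covers it unchanged. Closedness of $F(\bar x)$ is used exactly once, to separate $\bar v$ from $F(\bar x)$ by a positive distance. Without it the implication fails: $\bar v \in \overline{F(\bar x)} \setminus F(\bar x)$ would be a counterexample with constant $x_k=\bar x$ and $v_k \in F(\bar x)$ converging to $\bar v$. No compactness is needed.
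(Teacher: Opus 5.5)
Your proof is correct and is essentially the paper's argument: the paper likewise takes $y \in \limsup_{x\to\overline{x}} F(x)$ via sequences $x^{\nu}\to\overline{x}$, $y^{\nu}\in F(x^{\nu})$, $y^{\nu}\to y$, uses u.s.c.\ to get ${\rm dist}(y^{\nu},F(\overline{x}))\to 0$, and concludes $y\in F(\overline{x})$ from closedness of $F(\overline{x})$. Your version makes the u.s.c.\ step explicit by arguing by contradiction with the open set $\mathbb{R}^{m}\setminus\overline{B}_{m}(\bar v;\varepsilon)$, and you also treat the case $F(\overline{x})=\emptyset$, which the paper's distance formulation leaves implicit.
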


\begin{proof}[Proof. ]
	Consider any given $y \in \mathop{\lim\sup}_{x \to \overline{x}} F(x)$. 
	Thus there exist $x^{\nu} \to \overline{x}$ and $y^{\nu} \in F(x^{\nu})$ such that $y^{\nu} \to y$. 
	Since $F$ is u.s.c., then we have ${\rm dist}(y^{\nu}, F(\overline{x})) \to 0$. 
	Note that $F$ is closed-valued, then if $y^{\nu} \to y$, we can derive $y \in F(\overline{x})$. 
	Hence, $\mathop{\lim\sup}_{x \to \overline{x}} F(x) \subseteq F(\overline{x})$, which completes the proof.
\end{proof}

Now, we give the existence of the solution to the problem $(\mathfrak{Q})$.
\begin{theorem} \label{thm-existence-of-solution-to-optimal-control-continuous}
	Suppose the assumptions (A1)-(A5) hold and $p \in [1,+\infty)$. 
	Let the observed data $b \in \mathbb{R}^{m}$, the initial value $x^{0} \in \mathbb{R}^{n}$, and the label $y \in \mathbb{R}^{n}$.
	Recall $\pmb{\mathscr{D}}$ as a set of $(\pmb{A}, \pmb{\alpha}, \pmb{\lambda})$ with $\pmb{\alpha}$, $\pmb{\lambda}$ both being nonnegative.
	Consider the optimal control problem $(\mathfrak{Q})$ \eqref{eq-continuous-optimal-control-problem-single} of the related deep-layer limit system. 
	If the following conditions 
	\begin{itemize}			
		\item[(1) ] (Uniform boundedness in $L^{\infty}$ space) $\pmb{\mathscr{D}}$ is bounded in $L^{\infty}([0,T]; \mathbb{R}^{m \times n} \times \mathbb{R} \times \mathbb{R})$;
		
		\item[(2) ] (Closedness in $L^{p}$ space) $\pmb{\mathscr{D}}$ is closed in $L^{p}([0,T]; \mathbb{R}^{m \times n} \times \mathbb{R} \times \mathbb{R})$;
		
		\item[(3) ] (Equi-continuity in $L^{p}$ sense)
		$\pmb{\mathscr{D}}$ satisfies  
		\begin{align*}
			\lim\limits_{|h| \to 0} \|\tau_{h}(\pmb{A}, \pmb{\alpha}, \pmb{\lambda}) - (\pmb{A}, \pmb{\alpha}, \pmb{\lambda})\|_{L^{p}(\mathbb{R};\mathbb{R}^{m \times n} \times \mathbb{R} \times \mathbb{R})} = 0, \quad {\rm uniformly \,\, for \,\, } (\pmb{A}, \pmb{\alpha}, \pmb{\lambda}) \in \pmb{\mathscr{D}},
		\end{align*} 
	\end{itemize} 
	hold, then the minimization problem $(\mathfrak{Q})$ has at least a solution.
\end{theorem}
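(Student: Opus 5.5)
We use the direct method of the calculus of variations in the strong $L^{p}$ topology. Since $\mathcal{L}\geq 0$ and $\psi\geq 0$, the infimum $m^{*}$ of $(\mathfrak{Q})$ is finite. Assuming $\pmb{\mathscr{D}}$ is nonempty, take a minimizing sequence $(\pmb{A}_{\nu},\pmb{\alpha}_{\nu},\pmb{\lambda}_{\nu})\subset\pmb{\mathscr{D}}$.

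\emph{Step 1 (compactness).} Extend the controls by zero outside $[0,T]$. Condition (1) gives a bound in $L^{\infty}$, hence in $L^{p}([0,T])$ and $L^{p}(\mathbb{R})$. Condition (3) gives uniform $L^{p}$-equicontinuity of translations. The Bochner extension of Lemma \ref{lemma-Kolmogorov-Riesz-bochner} with $\Omega=[0,T]$ then gives a subsequence converging in $L^{p}([0,T];\mathbb{R}^{m\times n}\times\mathbb{R}\times\mathbb{R})$ to some $(\pmb{A}^{*},\pmb{\alpha}^{*},\pmb{\lambda}^{*})$. Passing to a further subsequence, the convergence also holds a.e. on $[0,T]$. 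By condition (2), the limit lies in $\pmb{\mathscr{D}}$. It is also bounded by the same $L^{\infty}$ constant $K$, and $\pmb{\alpha}^{*},\pmb{\lambda}^{*}\geq 0$. Because everything is uniformly bounded by $K$, $L^{p}$ convergence upgrades to $L^{q}$ convergence for every $q<\infty$. In particular the products $\pmb{\alpha}_{\nu}\pmb{A}_{\nu}^{\top}\pmb{A}_{\nu}$, $\pmb{\alpha}_{\nu}\pmb{A}_{\nu}^{\top}$ and $\pmb{\alpha}_{\nu}\pmb{\lambda}_{\nu}$ converge in $L^{1}$ and a.e.

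\emph{Step 2 (states are compact).} Let $\pmb{x}_{\nu}$ be the state for the $\nu$-th control; existence and uniqueness come from the forward theory of \cite{lin2025deep} under (A1)--(A3). I will show $\{\pmb{x}_{\nu}\}$ is uniformly bounded and equi-Lipschitz on $[0,T]$.
\begin{itemize}
\item Testing the inclusion with $\pmb{x}_{\nu}$ and using monotonicity of $\partial\mathcal{R}$ with (A2), i.e.\ $z^{\top}x\geq 0$ for $z\in\partial\mathcal{R}(x)$, gives $\frac{d}{dt}\frac12\|\pmb{x}_{\nu}\|^{2}\leq K^{2}\|b\|\,\|\pmb{x}_{\nu}\|$. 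This bounds $\|\pmb{x}_{\nu}(t)\|$ uniformly by $\|x^{0}\|+TK^{2}\|b\|$.
\item Using (A3), in either case, together with the $L^{\infty}$ bounds on the controls, the velocity $\dot{\pmb{x}}_{\nu}$ is bounded in $L^{\infty}$.
\end{itemize}
Arzel\`a--Ascoli then gives a subsequence with $\pmb{x}_{\nu}\to\pmb{x}^{*}$ uniformly and $\dot{\pmb{x}}_{\nu}\rightharpoonup\dot{\pmb{x}}^{*}$ weakly-* in $L^{\infty}$. Set $\pmb{\xi}_{\nu}:=\pmb{\alpha}_{\nu}\pmb{\lambda}_{\nu}z_{\nu}$ with $z_{\nu}(t)\in\partial\mathcal{R}(\pmb{x}_{\nu}(t))$. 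Since $\pmb{\xi}_{\nu}=-\dot{\pmb{x}}_{\nu}-\pmb{\alpha}_{\nu}\pmb{A}_{\nu}^{\top}(\pmb{A}_{\nu}\pmb{x}_{\nu}-b)$, it is bounded in $L^{\infty}$. The selections $z_{\nu}$ are bounded in $L^{\infty}$ by (A3), so along a further subsequence $z_{\nu}\rightharpoonup z^{*}$ weakly-* in $L^{\infty}$.

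\emph{Step 3 (passing to the limit in the inclusion; main obstacle).} The term $\pmb{\alpha}_{\nu}\pmb{A}_{\nu}^{\top}(\pmb{A}_{\nu}\pmb{x}_{\nu}-b)$ converges strongly in $L^{1}$, by the strong convergence of the controls and uniform convergence of the states. Hence $\pmb{\xi}_{\nu}\rightharpoonup\pmb{\xi}^{*}:=-\dot{\pmb{x}}^{*}-\pmb{\alpha}^{*}\pmb{A}^{*\top}(\pmb{A}^{*}\pmb{x}^{*}-b)$. The product $\pmb{\alpha}_{\nu}\pmb{\lambda}_{\nu}$ converges strongly in $L^{1}$ and is uniformly bounded, while $z_{\nu}\rightharpoonup z^{*}$ weakly-* in $L^{\infty}$. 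Strong-times-weak convergence therefore gives $\pmb{\xi}^{*}=\pmb{\alpha}^{*}\pmb{\lambda}^{*}z^{*}$.

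It remains to show $z^{*}(t)\in\partial\mathcal{R}(\pmb{x}^{*}(t))$ for a.e.\ $t$. I would use Mazur's lemma: convex combinations of the tails $\{z_{\mu}\}_{\mu\geq\nu}$ converge strongly, hence a.e., to $z^{*}$. Then $z^{*}(t)$ lies in $\bigcap_{\nu}\overline{\mathrm{co}}\bigcup_{\mu\geq\nu}\partial\mathcal{R}(\pmb{x}_{\mu}(t))$. Because $\partial\mathcal{R}$ is convex-valued, closed-valued and u.s.c., Lemma \ref{lemma-usc-imply-osc} makes it o.s.c. Since $\pmb{x}_{\mu}(t)\to\pmb{x}^{*}(t)$, this set is contained in $\partial\mathcal{R}(\pmb{x}^{*}(t))$. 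This is the standard convergence theorem for differential inclusions, and it is where most of the care is needed. Consequently $\pmb{x}^{*}$ solves the constraint with $\pmb{x}^{*}(0)=x^{0}$; by uniqueness of the forward solution it is the state for $(\pmb{A}^{*},\pmb{\alpha}^{*},\pmb{\lambda}^{*})$.

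\emph{Step 4 (continuity of the cost).} Since $\pmb{x}_{\nu}(T)\to\pmb{x}^{*}(T)$, (A4) gives $\mathcal{L}(\pmb{x}_{\nu}(T);y)\to\mathcal{L}(\pmb{x}^{*}(T);y)$. Strong $L^{p}$ convergence gives convergence of $\frac1T\int\|\pmb{A}_{\nu}\|_{2}^{p}$ and of the corresponding integrals for $\pmb{\alpha}_{\nu},\pmb{\lambda}_{\nu}$. These quantities stay in a bounded interval, so (A5) makes each $\mathcal{H}^{(i)}$ term converge. Thus $\mathcal{J}(\pmb{A}^{*},\pmb{\alpha}^{*},\pmb{\lambda}^{*})=\lim_{\nu}\mathcal{J}(\pmb{A}_{\nu},\pmb{\alpha}_{\nu},\pmb{\lambda}_{\nu})=m^{*}$, so $(\pmb{A}^{*},\pmb{\alpha}^{*},\pmb{\lambda}^{*})$ is a minimizer of $(\mathfrak{Q})$.
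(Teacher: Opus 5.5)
Your proposal is correct and follows essentially the paper's route. The shared steps are Kolmogorov--Riesz--Fr\'echet compactness plus closedness of $\pmb{\mathscr{D}}$, uniform bounds and compactness of the states, weak limits of subgradient selections identified in $\partial\mathcal{R}(\pmb{x}^{*}(t))$, uniqueness, and continuity of the cost; your energy estimate, Arzel\`a--Ascoli and Mazur/upper-semicontinuity argument replace, respectively, the cited forward bound, the $H^{1}\hookrightarrow C^{0}$ embedding, and the appeal to Elliott's framework.

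Two points need tightening. First, you must justify that $z_{\nu}$ is measurable: it is fixed by the equation where $\pmb{\alpha}_{\nu}\pmb{\lambda}_{\nu}\neq 0$, but elsewhere it must come from a measurable selection theorem, as in the paper's Step 3-1. Second, showing $\bigcap_{\nu}\overline{\mathrm{co}}\bigcup_{\mu\geq\nu}\partial\mathcal{R}(\pmb{x}_{\mu}(t))\subseteq\partial\mathcal{R}(\pmb{x}^{*}(t))$ needs $\varepsilon$--$\delta$ upper semicontinuity together with convexity of $\partial\mathcal{R}(\pmb{x}^{*}(t))$, not outer semicontinuity alone.
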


\begin{proof}[Proof. ]
	We divide the proof into four steps. 
	
	\textbf{Step 1:} 
	By the \textbf{Existence}  
	\textbf{and} \textbf{Uniqueness} \textbf{Theorem} \cite[Thm.3.2]{lin2025deep},
	we see that for each $(\pmb{A}, \pmb{\alpha}, \pmb{\lambda}) \in \pmb{\mathscr{D}}$, the state inclusion \eqref{eq-system-inclusion-continuous}
	has a unique absolute continuous solution $\pmb{x}$, and the objecctive function $\mathcal{J}$ is well defined.
	For later bound estimations, we denote three bound constants $M_{\pmb{A}} > 0$, $M_{\pmb{\alpha}} > 0$, and $M_{\pmb{\lambda}} > 0$ such that for every $(\pmb{A}, \pmb{\alpha}, \pmb{\lambda}) \in \pmb{\mathscr{D}}$,  
	\begin{equation*}
		\|\pmb{A}\|_{L^{\infty}([0,T];\mathbb{R}^{m \times n})} < M_{\pmb{A}}, \quad \|\pmb{\alpha}\|_{L^{\infty}([0,T])} < M_{\pmb{\alpha}}, \quad  \|\pmb{\lambda}\|_{L^{\infty}([0,T])} < M_{\pmb{\lambda}},
	\end{equation*}
	by the uniform boundedness given in assumption (1).
	
	Suppose $\{ (\pmb{A}^{(j)}, \pmb{\alpha}^{(j)}, \pmb{\lambda}^{(j)}) \}_{j=1}^{+\infty} \subset \pmb{\mathscr{D}}$ be a minimizing sequence of the problem $(\mathfrak{Q})$ in \eqref{eq-continuous-optimal-control-problem-single}, and $\pmb{x}^{(j)}: [0,T] \to \mathbb{R}^{n}$ be the unique associated state to the state inclusion \eqref{eq-system-inclusion-continuous} with the tuple $( \pmb{A}^{(j)}, \pmb{\alpha}^{(j)}, \pmb{\lambda}^{(j)} )$, $j = 1, 2, \cdots$. 
	Hence, $\pmb{x}^{(j)}$ satisfies 
	\begin{align}\label{eq-system-inclusion-continuous-j}
		\left\{\begin{aligned}
			& \dot{\pmb{x}}^{(j)}(t) + \pmb{\alpha}^{(j)}(t) (\pmb{A}^{(j)}(t))^{\top} (\pmb{A}^{(j)}(t) \pmb{x}^{(j)}(t) - b) + \pmb{\alpha}^{(j)}(t) \pmb{\lambda}^{(j)}(t) (\partial\mathcal{R})(\pmb{x}^{(j)}(t))
			\owns \vec{0}, {\rm \,\,a.e.\,\,} t \in [0,T],  \\
			& \pmb{x}^{(j)}(0) = x^{0}.
		\end{aligned}\right.
	\end{align}
	
	Due to the assumptions (1)-(3) and a Bochner version of \textbf{Lemma} \ref{lemma-Kolmogorov-Riesz-bochner}, there exist a subsequence of $\{ (\pmb{A}^{(j)}, \pmb{\alpha}^{(j)}, \pmb{\lambda}^{(j)}) \}_{j=1}^{+\infty} \subset \pmb{\mathscr{D}}$
	(not relabeled) and a limit point $(\pmb{A}^{*}, \pmb{\alpha}^{*}, \pmb{\lambda}^{*}) \in \pmb{\mathscr{D}}$ (with $\pmb{\alpha}^{*}$, $\pmb{\lambda}^{*}$ both being nonnegative) such that
	\begin{align*}
		(\pmb{A}^{(j)}, \pmb{\alpha}^{(j)}, \pmb{\lambda}^{(j)}) \overset{L^{p}}{\to} (\pmb{A}^{*}, \pmb{\alpha}^{*}, \pmb{\lambda}^{*}), \quad j \to +\infty.
	\end{align*}
	By \textbf{Theorem} \ref{thm-Lp-imply-a.e.pointwise}, this sequence has a subsequence (not relabeled) that converges a.e. on $[0,T]$.
	
	\textbf{Step 2:} We will show the uniform boundedness of $\{ \pmb{x}^{(j)} \}_{j=1}^{+\infty}$ in $H^{1}((0,T);\mathbb{R}^{n})$ and thus the existence of a weak cluster point $\pmb{x}^{*}$.
	By \textbf{Bound} \textbf{Estimation} \textbf{Theorem}  \cite[Thm.3.2]{lin2025deep},
	one has, for each $j \in \mathbb{N}_{+}$, 
	\begin{align*}
		\|\pmb{x}^{(j)}(t)\|_{2} 
		\leq& \|x^{0}\|_{2} \cdot {\rm exp}  \Big( \int_{[0,t]} |\pmb{\alpha}^{(j)}(s)| \|\pmb{A}^{(j)}(s)\|_{2}^{2} {\rm d}s \Big)  \\
		&+ \int_{[0,t]} |\pmb{\alpha}^{(j)}(s)| \|\pmb{A}^{(j)}(s)\|_{2} \|b\|_{2} \cdot {\rm exp} \Big( \int_{[s,t]} |\pmb{\alpha}^{(j)}(r)| \|\pmb{A}^{(j)}(r)\|_{2}^{2} {\rm d}r \Big) {\rm d}s  \\
		\leq& \|x^{0}\|_{2} \cdot {\rm exp} (M_{\pmb{\alpha}} M_{\pmb{A}}^{2} T ) + M_{\pmb{\alpha}}  M_{\pmb{A}} T \|b\|_{2} \cdot {\rm exp} (M_{\pmb{\alpha}} M_{\pmb{A}}^{2} T) =: M_{0}, \quad \forall t \in [0,T],
	\end{align*}
	where $M_{0} := M_{0}(M_{\pmb{A}}, M_{\pmb{\alpha}}, b, x^{0}, T) \geq 0$ is clearly independent of $t \in [0,T]$ and $j \in \mathbb{N}_{+}$. 
	This leads to 
	\begin{align*}
		\|\pmb{x}^{(j)}\|_{L^{2}([0,T];\mathbb{R}^{n})} = \Big( \int_{[0,T]} \|\pmb{x}^{(j)}(t)\|_{2}^{2} {\rm d}t \Big)^{\frac{1}{2}} \leq T^{\frac{1}{2}} M_{0}, 
		\quad \forall j \in \mathbb{N}_{+}.
	\end{align*}
	Moreover, 
	for every $j \in\mathbb{N}_{+}$ and every $t \in [0,T]$,
	\begin{align*}
		\sup \{ \|z\|_{2}: z \in (\partial \mathcal{R}) (\pmb{x}^{(j)}(t)) \} 
		\leq \sup \{ \|z\|_{2}: z \in (\partial \mathcal{R}) (\overline{B}_{n}(\vec{0};M_{0} + 1))\} =: M_{1},
	\end{align*}
	where the constant $M_{1} := M_{1}(M_{\pmb{A}}, M_{\pmb{\alpha}}, b, x^{0}, T) > 0$ is independent of $j \in \mathbb{N}_{+}$.
	Using these, we derive from the state inclusion \eqref{eq-system-inclusion-continuous-j} that,
	\begin{align*}
		\|\dot{\pmb{x}}^{(j)}(t)\|_{2} 
		\leq& |\pmb{\alpha}^{(j)}(t)| \|\pmb{A}^{(j)}(t)\|_{2} (\|\pmb{A}^{(j)}(t)\|_{2} \|\pmb{x}^{(j)}(t)\|_{2} + \|b\|_{2})  \\
		&+ |\pmb{\alpha}^{(j)}(t)| |\pmb{\lambda}^{(j)}(t)| \sup\{ \|z\|_{2}: z \in (\partial\mathcal{R})(\pmb{x}^{(j)}(t)) \}  \\
		\leq& |\pmb{\alpha}^{(j)}(t)| \|\pmb{A}^{(j)}(t)\|_{2}^{2} M_{0} + |\pmb{\alpha}^{(j)}(t)| \|\pmb{A}^{(j)}(t)\|_{2}\|b\|_{2} + |\pmb{\alpha}^{(j)}(t)| |\pmb{\lambda}^{(j)}(t)| M_{1}  \\
		\leq& M_{\pmb{\alpha}} M_{\pmb{A}}^{2} M_{0} + M_{\pmb{\alpha}} M_{\pmb{A}}\|b\|_{2} + M_{\pmb{\alpha}} M_{\pmb{\lambda}} M_{1} =: M_{2}, \quad {\rm a.e. \,\,} t \in [0,T], 
	\end{align*}
	where the constant $M_{2} := M_{2}(M_{1}, M_{\pmb{A}}, M_{\pmb{\alpha}}, M_{\pmb{\lambda}}, b, x^{0}) > 0$ is independent of $j \in \mathbb{N}_{+}$.
	Hence, 
	\begin{equation*}
		\|\dot{\pmb{x}}^{(j)}\|_{L^{2}([0,T];\mathbb{R}^{n})} 
		= \Big( \int_{[0,T]} \|\dot{\pmb{x}}^{(j)}(t)\|_{2}^{2} {\rm d}t \Big)^{\frac{1}{2}} 
		\leq T^{\frac{1}{2}} M_{2},
		\quad \forall j \in \mathbb{N}_{+}.
	\end{equation*}
	
	The boundedness of $\{ \pmb{x}^{(j)} \}_{j=1}^{+\infty}$ and $\{ \dot{\pmb{x}}^{(j)} \}_{j=1}^{+\infty}$ in $L^{2}([0,T];\mathbb{R}^{n})$ imply the boundedness of $\{ \pmb{x}^{(j)} \}_{j=1}^{+\infty}$ in $H^{1}((0,T);\mathbb{R}^{n})$, since $\{\pmb{x}^{(j)}\}_{j=1}^{+\infty}$ are univariate functions.
	By \textbf{Embedding} \textbf{Theorem} 
	\cite[Thm 10.13]{alt2016linear}
	and noting the absolute continuity of $\pmb{x}^{(j)}$, we derive that $\pmb{x}^{(j)} \to \pmb{x}^{*}$ in $C^{0}([0,T];\mathbb{R}^{n})$ uniformly up to a subsequence (and without relabeled).
	Besides, there exists a subsequence (not relabeled) such that for $j \to +\infty$, $\pmb{x}^{(j)} \rightharpoonup \pmb{x}^{*}$ in $H^{1}((0,T);\mathbb{R}^{n})$, 
	indicating by 
	\cite[E.g. 8.4(3), page 230]{alt2016linear}
	that 
	\begin{align}\label{eq-derivative-weak-convergence}
		\dot{\pmb{x}}^{(j)} \rightharpoonup \dot{\pmb{x}}^{*} {\rm \,\,in \,\,} L^{2}((0,T);\mathbb{R}^{n}).
	\end{align}
	
	\textbf{Step 3:} Now we prove that $\pmb{x}^{*}$ is the unique solution to the state inclusion \eqref{eq-system-inclusion-continuous} with parameters $(\pmb{A}^{*}, \pmb{\alpha}^{*}, \pmb{\lambda}^{*})$. 
	Here we follow the proof framework in \cite[Proof of Thm 2.1]{elliott1985convergence}, but we are under weaker assumptions on the differential inclusion and need to construct the subgradient function by ourselves carefully from the subdifferentiation term. 
	
	\textbf{Step 3-1:} In this substep, for each $j \in \mathbb{N}_{+}$, since $\pmb{\alpha}^{(j)}(t) \pmb{\lambda}^{(j)}(t)$ possibly equals to zero, we wish to construct a measurable mapping $\pmb{g}^{(j)}: [0,T] \to \mathbb{R}^{n}$ such that $\pmb{g}^{(j)}(t) \in (\partial\mathcal{R})(\pmb{x}^{(j)}(t))$ for each $t \in [0,T]$, and satisfies 
	\begin{align}\label{eq-system-equation-continuous-j}
		\dot{\pmb{x}}^{(j)}(t) + \pmb{\alpha}^{(j)}(t) (\pmb{A}^{(j)}(t))^{\top} (\pmb{A}^{(j)}(t) \pmb{x}^{(j)}(t) - b) + \pmb{\alpha}^{(j)}(t) \pmb{\lambda}^{(j)}(t) \pmb{g}^{(j)}(t) = \vec{0}, \quad {\rm a.e.\,\,} t \in [0,T],
	\end{align}
	and moreover, $\{\pmb{g}^{(j)}\}_{j=1}^{+\infty}$ has a weak cluster point in $L^{2}([0,T];\mathbb{R}^{n})$.
	
	According to the state inclusion \eqref{eq-system-inclusion-continuous-j}, for each $j \in \mathbb{N}_{+}$, there always exists a selection function $\widetilde{\pmb{g}}^{(j)}$ with $\widetilde{\pmb{g}}^{(j)}(t) \in (\partial\mathcal{R})(\pmb{x}^{(j)}(t))$ for every $t \in [0,T]$ satisfying 
	\begin{align}\label{eq-system-equation-continuous-j-tilde}
		\dot{\pmb{x}}^{(j)}(t) + \pmb{\alpha}^{(j)}(t) (\pmb{A}^{(j)}(t))^{\top} (\pmb{A}^{(j)}(t) \pmb{x}^{(j)}(t) - b) + \pmb{\alpha}^{(j)}(t) \pmb{\lambda}^{(j)}(t) \widetilde{\pmb{g}}^{(j)}(t) = \vec{0}, \quad {\rm a.e.\,\,} t \in [0,T].
	\end{align}
	Note that $\widetilde{\pmb{g}}^{(j)}$ is not necessarily measurable on $[0,T]$.
	For each $j \in \mathbb{N}_{+}$, we define two measurable sets 
	\begin{equation*}
		E_{1}^{(j)} := \{ t \in [0,T]: \dot{\pmb{x}}^{(j)}(t) {\rm \,\, does \,\, not \,\, exist} \}, \quad 
		E_{2}^{(j)} := \{ t \in [0,T]: \pmb{\alpha}^{(j)}(t) \pmb{\lambda}^{(j)}(t) \neq 0 \},
	\end{equation*}
	where $\mu(E_{1}^{(j)}) = 0$.
	We try to construct $\pmb{g}^{(j)}$ on two disjoint sets: $E_{2}^{(j)} \backslash E_{1}^{(j)}$ and $[0,T] \backslash (E_{2}^{(j)} \backslash E_{1}^{(j)})$.  
	On the one hand, it is easy to see that on $E_{2}^{(j)} \backslash E_{1}^{(j)}$,  
	\begin{align*}
		\widetilde{\pmb{g}}^{(j)}(t) :=
		- \frac{1}{\pmb{\alpha}^{(j)}(t) \pmb{\lambda}^{(j)}(t)} ( \dot{\pmb{x}}^{(j)}(t) + \pmb{\alpha}^{(j)}(t) (\pmb{A}^{(j)}(t))^{\top} (\pmb{A}^{(j)}(t) \pmb{x}^{(j)}(t) - b) ),
	\end{align*}
	is measurable.
	On the other hand, let us consider $[0,T] \backslash (E_{2}^{(j)} \backslash E_{1}^{(j)})$.
	Note that $\partial\mathcal{R}$ is closed-valued and u.s.c. on $\mathbb{R}^{n}$ due to the assumptions (A1)-(A3) and 
	\cite[Prop 6.1.1]{borwein2010convex}.
	Then by \textbf{Lemma} \ref{lemma-usc-imply-osc}, we derive that $\partial\mathcal{R}$ is o.s.c. on $\mathbb{R}^{n}$. 
	Since $\pmb{x}^{(j)} \in C^{0}([0,T];\mathbb{R}^{n})$, then for each $y \in \mathop{\lim\sup}_{t \to \overline{t}} (\partial\mathcal{R} \circ \pmb{x}^{(j)})(t)$, there exist $t^{\nu} \to \overline{t}$ (hence $\pmb{x}^{(j)}(t^{\nu}) \to \pmb{x}^{(j)}(\overline{t})$) and $y^{\nu} \in (\partial\mathcal{R})(\pmb{x}^{(j)}(t^{\nu})) \equiv (\partial\mathcal{R} \circ \pmb{x}^{(j)})(t^{\nu})$ such that $y^{\nu} \to y$. Thus, $\partial\mathcal{R} \circ \pmb{x}^{(j)}$ is also o.s.c. on $[0,T]$. By \cite[E.g. 14.9]{rockafellar2009variational}
	(closed-valued + o.s.c. derive measurability) and \textbf{Measurable} \textbf{Selection} \textbf{Theorem} 
	\cite[Coro.14.6]{rockafellar2009variational},
	there exists a measurable selection $\overline{\pmb{g}}^{(j)}: [0,T] \to \mathbb{R}^{n}$ such that $\overline{\pmb{g}}^{(j)}(t) \in (\partial\mathcal{R} \circ \pmb{x}^{(j)})(t)$ for each $t \in [0,T]$. 
	Therefore, for each $j \in \mathbb{N}_{+}$, we define the mapping $\pmb{g}^{(j)}: [0,T] \to \mathbb{R}^{n}$ as
	\begin{equation*}
		\pmb{g}^{(j)}(t) := \left\{\begin{aligned}
			& \widetilde{\pmb{g}}^{(j)}(t), \quad t \in E_{2}^{(j)} \backslash E_{1}^{(j)},  \\
			& \overline{\pmb{g}}^{(j)}(t), \quad t \in [0,T] \backslash (E_{2}^{(j)} \backslash E_{1}^{(j)}),
		\end{aligned}\right.
	\end{equation*}
	which is measurable on $[0,T]$ and satisfies the state equation \eqref{eq-system-equation-continuous-j}. 
		
	It is straightforward that $\pmb{g}^{(j)}(t) \in (\partial\mathcal{R})(\pmb{x}^{(j)}(t)) \subseteq \overline{B}_{n}(\vec{0}, M_{1})$ for a.e. $t \in [0,T]$ and every $j \in \mathbb{N}_{+}$, which indicates 
	$\int_{[0,T]} \|\pmb{g}^{(j)}(t)\|_{2}^{2} {\rm d}t \leq T M_{1}^{2}$ for every $j \in \mathbb{N}_{+}$. 
	Here, $M_{1}$ is a constant introduced in \textbf{Step 2}. 
	This indicates the existence of a subsequence of $\{ \pmb{g}^{(j)} \}_{j=1}^{+\infty}$ (not relabeled) satisfying $\pmb{g}^{(j)} \rightharpoonup \pmb{g}^{*}$ in $L^{2}([0,T];\mathbb{R}^{n})$ as $j \to +\infty$.
	
	\textbf{Step 3-2:} We will prove in this substep that in $L^{2}((0,T);\mathbb{R}^{n})$, as $j \to +\infty$,
	\begin{align}\label{eq-j-sum-weak-converge}
		\dot{\pmb{x}}^{(j)} + \pmb{\alpha}^{(j)} (\pmb{A}^{(j)})^{\top} (\pmb{A}^{(j)} \pmb{x}^{(j)} - b) + \pmb{\alpha}^{(j)} \pmb{\lambda}^{(j)} \pmb{g}^{(j)}  
		\rightharpoonup \dot{\pmb{x}}^{*} 
		+ \pmb{\alpha}^{*} (\pmb{A}^{*})^{\top} (\pmb{A}^{*} \pmb{x}^{*} - b) 
		+ \pmb{\alpha}^{*} \pmb{\lambda}^{*} \pmb{g}^{*}.
	\end{align}
	Since $\dot{\pmb{x}}^{(j)} \rightharpoonup \dot{\pmb{x}}^{*}$ in $L^{2}((0,T);\mathbb{R}^{n})$ is known from Eq.\eqref{eq-derivative-weak-convergence}, we next show 
	\begin{align}
		\label{eq-weak-converge-1}& \pmb{\alpha}^{(j)} (\pmb{A}^{(j)})^{\top} (\pmb{A}^{(j)} \pmb{x}^{(j)} - b) \to \pmb{\alpha}^{*} (\pmb{A}^{*})^{\top} (\pmb{A}^{*} \pmb{x}^{*} - b),  \\
		\label{eq-weak-converge-2}& \pmb{\alpha}^{(j)} \pmb{\lambda}^{(j)} \pmb{g}^{(j)} \rightharpoonup \pmb{\alpha}^{*} \pmb{\lambda}^{*} \pmb{g}^{*},
	\end{align}
	as $j \to +\infty$ in $L^{2}([0,T];\mathbb{R}^{n})$.
	Using the inequality $(\sum_{i=1}^{n} a_{i})^{p} \leq n^{p-1} \sum_{i=1}^{n} a_{i}^{p}$,
	Eq.\eqref{eq-weak-converge-1} is deduced by 
	\begin{align*}
		& \|\pmb{\alpha}^{(j)} (\pmb{A}^{(j)})^{\top} (\pmb{A}^{(j)} \pmb{x}^{(j)} - b) - \pmb{\alpha}^{*} (\pmb{A}^{*})^{\top} (\pmb{A}^{*} \pmb{x}^{*} - b)\|_{L^{2}([0,T];\mathbb{R}^{n})}^{2}  \\
		=& \int_{[0,T]} \|\pmb{\alpha}^{(j)}(t) (\pmb{A}^{(j)}(t))^{\top} (\pmb{A}^{(j)}(t) \pmb{x}^{(j)}(t) - b) - \pmb{\alpha}^{*}(t) (\pmb{A}^{*}(t))^{\top} (\pmb{A}^{*}(t) \pmb{x}^{*}(t) - b)\|_{2}^{2} {\rm d}t  \\
		\leq& 4 \int_{[0,T]} \|\pmb{\alpha}^{(j)}(t) (\pmb{A}^{(j)}(t))^{\top} (\pmb{A}^{(j)}(t) \pmb{x}^{(j)}(t) - b) 
		- \pmb{\alpha}^{*}(t) (\pmb{A}^{(j)}(t))^{\top} (\pmb{A}^{(j)}(t) \pmb{x}^{(j)}(t) - b)\|_{2}^{2} {\rm d}t   \\
		&+ 4 \int_{[0,T]} \|\pmb{\alpha}^{*}(t) (\pmb{A}^{(j)}(t))^{\top} (\pmb{A}^{(j)}(t) \pmb{x}^{(j)}(t) - b) 
		- \pmb{\alpha}^{*}(t) (\pmb{A}^{*}(t))^{\top} (\pmb{A}^{(j)}(t) \pmb{x}^{(j)}(t) - b)\|_{2}^{2} {\rm d}t   \\
		&+ 4 \int_{[0,T]} \|\pmb{\alpha}^{*}(t) (\pmb{A}^{*}(t))^{\top} \pmb{A}^{(j)}(t) \pmb{x}^{(j)}(t) 
		- \pmb{\alpha}^{*}(t) (\pmb{A}^{*}(t))^{\top} \pmb{A}^{*}(t) \pmb{x}^{(j)}(t)\|_{2}^{2} {\rm d}t   \\
		&+ 4 \int_{[0,T]} \|\pmb{\alpha}^{*}(t) (\pmb{A}^{*}(t))^{\top} \pmb{A}^{*}(t) \pmb{x}^{(j)}(t) 
		- \pmb{\alpha}^{*}(t) (\pmb{A}^{*}(t))^{\top} \pmb{A}^{*}(t) \pmb{x}^{*}(t)\|_{2}^{2} {\rm d}t   \\
		\leq& 4 \int_{[0,T]} \|\pmb{\alpha}^{(j)}(t) - \pmb{\alpha}^{*}(t)\|_{2} \cdot 1 {\rm d}t \cdot 2 M_{\pmb{\alpha}} \cdot M_{\pmb{A}}^{2} (M_{\pmb{A}} M_{0} + \|b\|)^{2}   \\
		&+ 4 \int_{[0,T]} \|\pmb{A}^{(j)}(t) - \pmb{A}^{*}(t)\|_{2} \cdot 1 {\rm d}t \cdot 2 M_{\pmb{A}} \cdot M_{\pmb{\alpha}}^{2} (M_{\pmb{A}} M_{0} + \|b\|)^{2}   \\
		&+ 4 \int_{[0,T]} \|\pmb{A}^{(j)}(t) - \pmb{A}^{*}(t)\|_{2} \cdot 1 {\rm d}t \cdot 2 M_{\pmb{A}} \cdot  M_{\pmb{\alpha}}^{2} M_{\pmb{A}}^{2}  M_{0}^{2}   \\
		&+ 4 \int_{[0,T]} \|\pmb{x}^{(j)}(t) - \pmb{x}^{*}(t)\|_{2} \cdot 1 {\rm d}t \cdot 2 M_{0} \cdot M_{\pmb{\alpha}}^{2} M_{\pmb{A}}^{4}   \\
		\leq& 4 \|\pmb{\alpha}^{(j)} - \pmb{\alpha}^{*}\|_{L^{p}([0,T])} \cdot T^{\frac{1}{q}} \cdot 2M_{\pmb{\alpha}} \cdot M_{\pmb{A}}^{2} (M_{\pmb{A}} M_{0} + \|b\|)^{2}   \\
		&+ 4 \|\pmb{A}^{(j)} - \pmb{A}^{*}\|_{L^{p}([0,T]; \mathbb{R}^{m \times n})} \cdot T^{\frac{1}{q}} \cdot 2M_{\pmb{A}} \cdot M_{\pmb{\alpha}}^{2} (M_{\pmb{A}} M_{0} + \|b\|)^{2}   \\
		&+ 4 \|\pmb{A}^{(j)} - \pmb{A}^{*}\|_{L^{p}([0,T]; \mathbb{R}^{m \times n})} \cdot T^{\frac{1}{q}} \cdot 2M_{\pmb{A}} \cdot M_{\pmb{\alpha}}^{2} M_{\pmb{A}}^{2}  M_{0}^{2}   \\
		&+ 4 \|\pmb{x}^{(j)} - \pmb{x}^{*}\|_{L^{p}([0,T]; \mathbb{R}^{n})} \cdot T^{\frac{1}{q}} \cdot 2M_{0} \cdot M_{\pmb{\alpha}}^{2} M_{\pmb{A}}^{4}   \\
		\to& 0 + 0 + 0 + 0 = 0, \quad j \to +\infty.
	\end{align*}		
	In addition, Eq.\eqref{eq-weak-converge-2} is due to that for every $\pmb{v} \in L^{2}([0,T];\mathbb{R}^{n})$,
	\begin{align*}
		& |\langle \pmb{\alpha}^{(j)} \pmb{\lambda}^{(j)} \pmb{g}^{(j)} - \pmb{\alpha}^{*} \pmb{\lambda}^{*} \pmb{g}^{*}, \pmb{v} \rangle|   
		= \Big| \int_{[0,T]} \langle \pmb{\alpha}^{(j)}(t) \pmb{\lambda}^{(j)}(t) \pmb{g}^{(j)}(t) - \pmb{\alpha}^{*}(t) \pmb{\lambda}^{*}(t) \pmb{g}^{*}(t), \pmb{v}(t) \rangle {\rm d}t \Big|  \\
		\leq&  \int_{[0,T]} \left| \langle \pmb{\alpha}^{(j)}(t) \pmb{\lambda}^{(j)}(t) \pmb{g}^{(j)}(t) - \pmb{\alpha}^{*}(t) \pmb{\lambda}^{(j)}(t) \pmb{g}^{(j)}(t), \pmb{v}(t) \rangle \right| {\rm d}t   \\
		&+ \int_{[0,T]} \left| \langle \pmb{\alpha}^{*}(t) \pmb{\lambda}^{(j)}(t) \pmb{g}^{(j)}(t) - \pmb{\alpha}^{*}(t) \pmb{\lambda}^{*}(t) \pmb{g}^{(j)}(t), \pmb{v}(t) \rangle \right| {\rm d}t   \\
		&+ \Big| \int_{[0,T]} \langle \pmb{\alpha}^{*}(t) \pmb{\lambda}^{*}(t) \pmb{g}^{(j)}(t) - \pmb{\alpha}^{*}(t) \pmb{\lambda}^{*}(t) \pmb{g}^{*}(t), \pmb{v}(t) \rangle {\rm d}t \Big|  \\
		\leq& \|\pmb{\alpha}^{(j)} - \pmb{\alpha}^{*}\|_{L^{p}([0,T])}^{\frac{1}{2}} \cdot T^{\frac{1}{2q}} \cdot 2^{\frac{1}{2}} M_{\pmb{\alpha}}^{\frac{1}{2}} \cdot M_{\pmb{\lambda}} M_{1} \|\pmb{v}\|_{L^{2}([0,T];\mathbb{R}^{n})}   \\
		&+ \|\pmb{\lambda}^{(j)} - \pmb{\lambda}^{*}\|_{L^{p}([0,T])}^{\frac{1}{2}} \cdot T^{\frac{1}{2q}} \cdot 2^{\frac{1}{2}} M_{\pmb{\lambda}}^{\frac{1}{2}} \cdot M_{\pmb{\alpha}} M_{1} \|\pmb{v}\|_{L^{2}([0,T];\mathbb{R}^{n})}   \\
		&+ \Big| \int_{[0,T]} \langle \pmb{g}^{(j)}(t) - \pmb{g}^{*}(t), \pmb{\alpha}^{*}(t) \pmb{\lambda}^{*}(t) \pmb{v}(t) \rangle {\rm d}t \Big|  \\
		\to& 0 + 0 + 0 = 0, \quad j \to +\infty,
	\end{align*}
	where we used \textbf{H{\"o}lder} inequality and $\pmb{g}^{(j)} \rightharpoonup \pmb{g}^{*}$ in $L^{2}([0,T];\mathbb{R}^{n})$.
	
	One can then follow the last part of the proof framework of \textbf{Theorem} 2.1 in \cite[pp.10-11]{elliott1985convergence} to check that $\pmb{x}^{*}$ satisfies
	\begin{equation}\label{eq-gamma-*}
		\dot{\pmb{x}}^{*}(t) + \pmb{\alpha}^{*}(t) (\pmb{A}^{*}(t))^{\top} (\pmb{A}^{*}(t) \pmb{x}^{*}(t) - b) + \pmb{\alpha}^{*}(t) \pmb{\lambda}^{*}(t) \pmb{g}^{*}(t) = \vec{0}, \quad {\rm a.e. \,\,} t \in [0,T],
	\end{equation}
	and $\pmb{g}^{*}$ is a.e. a selection of $\partial \mathcal{R} \circ \pmb{x}^{*}$, that is, $\pmb{g}^{*}(t) \in (\partial \mathcal{R}) (\pmb{x}^{*}(t))$ for a.e. $t \in [0,T]$.
	
	\textbf{Step 4:} Finally, we check that $\lim_{j \to +\infty} \mathcal{J}(\pmb{A}^{(j)}, \pmb{\alpha}^{(j)}, \pmb{\lambda}^{(j)}) = \mathcal{J}(\pmb{A}^{*}, \pmb{\alpha}^{*}, \pmb{\lambda}^{*})$.
	Note that
	\begin{align*}
		& \Big| \mathcal{J}(\pmb{A}^{(j)}, \pmb{\alpha}^{(j)}, \pmb{\lambda}^{(j)}) - \mathcal{J}(\pmb{A}^{*}, \pmb{\alpha}^{*}, \pmb{\lambda}^{*}) \Big|   \\
		\leq& \Big| \mathcal{L}(\pmb{x}^{(j)}(T); y) - \mathcal{L}(\pmb{x}^{*}(T); y) \Big| 
		+ \beta_{1} \Big| \mathcal{H}^{(1)}(\pmb{A}^{(j)}) - \mathcal{H}^{(1)}(\pmb{A}^{*}) \Big|  \\
		&+ \beta_{2} \Big| \mathcal{H}^{(2)}(\pmb{\alpha}^{(j)}) - \mathcal{H}^{(2)}(\pmb{\alpha}^{*}) \Big| 
		+ \beta_{3} \Big| \mathcal{H}^{(3)}(\pmb{\lambda}^{(j)}) - \mathcal{H}^{(3)}(\pmb{\lambda}^{*}) \Big|.
	\end{align*}
	By the assumption (A4) and $\lim_{j \to +\infty} \pmb{x}^{(j)}(T) = \pmb{x}^{*}(T)$, one has $\lim_{j \to +\infty} |\mathcal{L}(\pmb{x}^{(j)}(T); y) - \mathcal{L}(\pmb{x}^{*}(T); y)| = 0$.
	Besides, since 
	\begin{align*}
		\Big| \|\pmb{A}^{(j)}\|_{L^{p}([0,T];\mathbb{R}^{m \times n})} - \|\pmb{A}^{*}\|_{L^{p}([0,T];\mathbb{R}^{m \times n})} \Big|
		\leq \|\pmb{A}^{(j)} - \pmb{A}^{*}\|_{L^{p}([0,T];\mathbb{R}^{m \times n})}
		\to 0, \quad j \to +\infty,
	\end{align*}
	one has, by the assumption (A5), that, as $j \to +\infty$, 
	\begin{align*}
		\Big| \mathcal{H}^{(1)}(\pmb{A}^{(j)}) - \mathcal{H}^{(1)}(\pmb{A}^{*}) \Big|
		= \Big| \psi\Big( \frac{1}{T} \|\pmb{A}^{(j)}\|_{L^{p}([0,T];\mathbb{R}^{m \times n})}^{p} \Big) - \psi\Big( \frac{1}{T} \|\pmb{A}^{*}\|_{L^{p}([0,T];\mathbb{R}^{m \times n})}^{p} \Big) \Big|
		\to 0,
	\end{align*}
	and similarly $\lim_{j \to +\infty} |\mathcal{H}^{(2)}(\pmb{\alpha}^{(j)}) - \mathcal{H}^{(2)}(\pmb{\alpha}^{*})| = 0$, $\lim_{j \to +\infty} |\mathcal{H}^{(3)}(\pmb{\lambda}^{(j)}) - \mathcal{H}^{(3)}(\pmb{\lambda}^{*})| = 0$.
	Therefore, 
	\begin{align*}
		& \Big| \lim_{j \to +\infty} \mathcal{J}(\pmb{A}^{(j)}, \pmb{\alpha}^{(j)}, \pmb{\lambda}^{(j)}) - \mathcal{J}(\pmb{A}^{*}, \pmb{\alpha}^{*}, \pmb{\lambda}^{*}) \Big|   \\
		\leq& \lim_{j \to +\infty} \Big |\mathcal{J}(\pmb{A}^{(j)}, \pmb{\alpha}^{(j)}, \pmb{\lambda}^{(j)}) - \mathcal{J}(\pmb{A}^{*}, \pmb{\alpha}^{*}, \pmb{\lambda}^{*}) \Big|   \\
		\leq& \lim_{j \to +\infty} \Big| \mathcal{L}(\pmb{x}^{(j)}(T); y) - \mathcal{L}(\pmb{x}^{*}(T); y) \Big| 
		+ \lim_{j \to +\infty} \beta_{1} \Big| \mathcal{H}^{(1)}(\pmb{A}^{(j)}) - \mathcal{H}^{(1)}(\pmb{A}^{*}) \Big|  \\
		&+ \lim_{j \to +\infty} \beta_{2} \Big| \mathcal{H}^{(2)}(\pmb{\alpha}^{(j)}) - \mathcal{H}^{(2)}(\pmb{\alpha}^{*}) \Big| 
		+ \lim_{j \to +\infty} \beta_{3} \Big| \mathcal{H}^{(3)}(\pmb{\lambda}^{(j)}) - \mathcal{H}^{(3)}(\pmb{\lambda}^{*}) \Big|  \\
		=& 0 + 0 + 0 + 0 = 0,
	\end{align*}
	which completes the proof.
\end{proof}

\begin{remark}
	Of course, one can assume "compactness in $L^{\infty}$ topology" to show the same result, which seems simplifying the proof.
	However, this assumption is too restrictive.
	Consider a sequence $\{ \mathcal{I}_{3^{N}} \mathcal{P}_{3^{N}} \pmb{\alpha}^{\#} \}_{N=1}^{+\infty}$ with $\pmb{\alpha}^{\#}(t) := \pmb{1}_{[\frac{T}{2},T]}(t)$ for every $t \in [0,T]$.
	We see that it has no convergent subsequence in $L^{\infty}$ topology by \cite[Chapter IV, Section 8, Thm.18, pp.297]{dunford1988linear}.
\end{remark}

In the following discussion, we denote the solution set of $(\mathfrak{Q})$ as $S$, and by \textbf{Theorem} \ref{thm-existence-of-solution-to-optimal-control-continuous}, $S \neq \emptyset$.

\subsection{Convergence properties of the learning problem of the basic FBS-network to that of its deep-layer limit system}

In this subsection, we establish our main general convergence result of the objective functional of the optimal control problem of the FBS-network to that of the deep-layer limit analog, which implies the $\Gamma$-convergence, and thus the convergence property of minimizers of learning problems.

\begin{theorem}\label{thm-convergence-objective-functions}
	(Convergence property of objective functionals of learning problems)
	Suppose the assumptions (A1)-(A5) hold, and $p \in [1,+\infty)$. 
	Let the observed data $b \in \mathbb{R}^{m}$, the initial value $x^{0} \in \mathbb{R}^{n}$, and the label $y \in \mathbb{R}^{n}$.
	If $\pmb{\mathscr{F}} \subset L^{\infty}([0,T]; \mathbb{R}^{m \times n} \times \mathbb{R} \times \mathbb{R})$ is nonempty and bounded,
	then for any point $(\widetilde{\pmb{A}}, \widetilde{\pmb{\alpha}}, \widetilde{\pmb{\lambda}}) \in \pmb{\mathscr{F}}$ and any sequence $\{ (\pmb{A}^{(N)}, \pmb{\alpha}^{(N)}, \pmb{\lambda}^{(N)}) \}_{N=1}^{+\infty} \subset \pmb{\mathscr{F}}$ satisfying $(\pmb{A}^{(N)}, \pmb{\alpha}^{(N)}, \pmb{\lambda}^{(N)}) \overset{L^{p}}{\to} (\widetilde{\pmb{A}}, \widetilde{\pmb{\alpha}}, \widetilde{\pmb{\lambda}})$ as $N \to +\infty$, $\mathcal{J}(\widetilde{\pmb{A}}, \widetilde{\pmb{\alpha}}, \widetilde{\pmb{\lambda}}) = \lim\limits_{N \to +\infty} (\mathcal{J}_{N} \circ \mathcal{P}_{N})(\pmb{A}^{(N)}, \pmb{\alpha}^{(N)}, \pmb{\lambda}^{(N)})$ holds.
\end{theorem}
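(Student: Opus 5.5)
We split $(\mathcal{J}_{N}\circ\mathcal{P}_{N})(\pmb{A}^{(N)},\pmb{\alpha}^{(N)},\pmb{\lambda}^{(N)}) - \mathcal{J}(\widetilde{\pmb{A}},\widetilde{\pmb{\alpha}},\widetilde{\pmb{\lambda}})$ into a loss part and three regularization parts. We then show each part tends to zero.

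\textbf{Regularization terms.} First note that for $V^{N,:}=\mathcal{P}_{N}\pmb{V}$ one has $\frac{1}{N}\sum_{k}\|V^{N,k}\|^{p} = \frac{1}{T}\|\mathcal{I}_{N}\mathcal{P}_{N}\pmb{V}\|_{L^{p}}^{p}$. So $\mathcal{H}^{(i)}_{N}(\mathcal{P}_{N}\pmb{V}^{(N)}) = \psi(\frac1T\|\mathcal{I}_{N}\mathcal{P}_{N}\pmb{V}^{(N)}\|_{L^p}^p)$. We split
\[
\|\mathcal{I}_{N}\mathcal{P}_{N}\pmb{V}^{(N)}-\widetilde{\pmb{V}}\|_{L^{p}} \le \|\mathcal{I}_{N}\mathcal{P}_{N}(\pmb{V}^{(N)}-\widetilde{\pmb{V}})\|_{L^{p}} + \|\mathcal{I}_{N}\mathcal{P}_{N}\widetilde{\pmb{V}}-\widetilde{\pmb{V}}\|_{L^{p}}.
\]
The first term is at most $\|\pmb{V}^{(N)}-\widetilde{\pmb{V}}\|_{L^p}$, because $\mathcal{I}_{N}\mathcal{P}_{N}$ is an $L^p$-contraction by Jensen's inequality on each $\Omega^N_k$. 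The second term tends to $0$ by the standard density argument: approximate $\widetilde{\pmb V}$ by continuous functions and use the contraction again. Since $\pmb{\mathscr{F}}$ is bounded in $L^\infty$, all arguments of $\psi$ stay in a compact interval, and local Lipschitz continuity (A5) gives convergence of the three $\mathcal{H}$ terms.

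\textbf{Loss term.} By (A4) it suffices to show $x^{N,N}\to\widetilde{\pmb{x}}(T)$. Here $x^{N,:}$ is the discrete state driven by $\mathcal{P}_{N}(\pmb{A}^{(N)},\pmb{\alpha}^{(N)},\pmb{\lambda}^{(N)})$, and $\widetilde{\pmb x}$ is the continuous state for the limit parameters. We use the triangle inequality
\[
\|x^{N,N}-\widetilde{\pmb x}(T)\| \le \|x^{N,N}-\pmb{x}^{(N)}(T)\| + \|\pmb{x}^{(N)}(T)-\widetilde{\pmb x}(T)\|,
\]
where $\pmb{x}^{(N)}$ is the continuous state for $(\pmb{A}^{(N)},\pmb{\alpha}^{(N)},\pmb{\lambda}^{(N)})$.

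The second term tends to $0$ by repeating Steps 2–3 of the proof of Theorem \ref{thm-existence-of-solution-to-optimal-control-continuous}. The uniform $L^\infty$ bound gives a uniform $H^1$ bound and a uniformly convergent subsequence. The weak limit of the selections $\pmb g^{(N)}$ identifies the limit as the unique solution for $(\widetilde{\pmb A},\widetilde{\pmb\alpha},\widetilde{\pmb\lambda})$. Uniqueness then upgrades subsequential convergence to convergence of the whole sequence. This step needs only $L^p$ convergence plus $L^\infty$ boundedness, not the equicontinuity condition (3).

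For the first term, the plan is to use the forward deep-layer convergence/stability results of \cite{lin2025deep}, namely the discrete–continuous convergence estimate for the FBS-network with piecewise-constant parameters $\mathcal{P}_N(\cdot)$. We need the error bound to be uniform over parameters in an $L^\infty$-bounded set. If only a pointwise-in-parameters result is available, we will instead argue directly.

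\textbf{Direct argument for the first term (main obstacle).} Take the piecewise-linear interpolant of $x^{N,:}$. Its derivative is bounded uniformly in $N$: the implicit step together with (A2)/(A3) and the bound estimates gives $\|x^{N,k}\|\le M_0'$ and bounded subgradients $g^{N,k+1}\in\partial\mathcal{R}(x^{N,k+1})$. We then extract a uniformly convergent subsequence and pass to the limit in the discrete inclusion. The discrete inclusion is written with coefficients $\mathcal{I}_N\mathcal{P}_N(\cdot)$, which converge in $L^p$ to the limit parameters as shown above. The subgradient terms are evaluated at shifted times $t_{k+1}$, but these converge to the same limit by uniform continuity. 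The limit inclusion is closed via the same Elliott-type weak-limit/maximal-monotonicity argument as in Step 3. Uniqueness of the continuous solution forces $x^{N,N}\to\widetilde{\pmb x}(T)$ along the full sequence.

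We expect this identification of the weak limit of the implicit subgradient terms with a selection of $\partial\mathcal R(\widetilde{\pmb x})$ to be the main technical difficulty. Once the loss and regularization terms converge, summing them gives the claimed limit.
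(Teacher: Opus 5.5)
Your proposal is correct and is essentially the paper's proof. The regularization terms are handled exactly as there: the paper cites \cite[Prop.~3.7]{lin2025deep} for the contraction and approximation properties of $\mathcal{I}_{N}\mathcal{P}_{N}$ that you derive via Jensen and density. For the loss term the paper simply invokes \cite[Cor.~3.9]{lin2025deep}, which says that the interpolated discrete states driven by $\mathcal{P}_{N}$ of an $L^{\infty}$-bounded, $L^{p}$-convergent parameter sequence converge uniformly to $\widetilde{\pmb{x}}$; this is precisely what your direct compactness/closure/uniqueness argument re-derives.

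Hence the detour through $\pmb{x}^{(N)}$ is unnecessary. The uniform-in-parameters error bound over an $L^{\infty}$-bounded set, which you float as an alternative, is neither needed nor true in general. For example, $\pm$ patterns of $\pmb{A}^{(N)}$ within each $\Omega_{k}^{N}$ give $\mathcal{P}_{N}\pmb{A}^{(N)}=0$, so the oscillation is invisible to the network, but not to the continuous dynamics, since $\pmb{A}^{\top}\pmb{A}$ does not average out.
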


\begin{proof}[Proof. ]
	For a point $(\pmb{A}^{(N)}, \pmb{\alpha}^{(N)}, \pmb{\lambda}^{(N)}) \in \pmb{\mathscr{F}}$, we denote $\{ x^{N,k} \}_{k=0}^{N}$ as the network state of the FBS-network \eqref{eq-system-inclusion-discrete} with parameters $(\mathcal{P}_{N} \pmb{A}^{(N)}, \mathcal{P}_{N} \pmb{\alpha}^{(N)}, \mathcal{P}_{N} \pmb{\lambda}^{(N)})$. 
	We then define the piecewise linear mapping $\pmb{x}^{N}$ as
	\begin{align*}
		\pmb{x}^{N}(t) := x^{N,k} + \frac{N}{T} (x^{N,k+1} - x^{N,k})(t - t^{N,k}), \,\, \forall t \in [t^{N,k},t^{N,k+1}], \,\, \forall k \in \{0, 1, \ldots, N-1\}.
	\end{align*}
	
	For any point $(\widetilde{\pmb{A}}, \widetilde{\pmb{\alpha}}, \widetilde{\pmb{\lambda}}) \in \pmb{\mathscr{F}}$ and any sequence $\{ (\pmb{A}^{(N)}, \pmb{\alpha}^{(N)}, \pmb{\lambda}^{(N)}) \}_{N=1}^{+\infty} \subset \pmb{\mathscr{F}}$ satisfying $(\pmb{A}^{(N)}, \pmb{\alpha}^{(N)}, \pmb{\lambda}^{(N)}) \overset{L^{p}}{\to} (\widetilde{\pmb{A}}, \widetilde{\pmb{\alpha}}, \widetilde{\pmb{\lambda}})$ as $N \to +\infty$, we first derive 
	\begin{align*}
		& \left| \|\mathcal{I}_{N} \mathcal{P}_{N} \pmb{A}^{(N)}\|_{L^{p}([0,T];\mathbb{R}^{m \times n})} - \|\widetilde{\pmb{A}}\|_{L^{p}([0,T];\mathbb{R}^{m \times n})} \right|  
		\leq \|\mathcal{I}_{N} \mathcal{P}_{N} \pmb{A}^{(N)} - \widetilde{\pmb{A}}\|_{L^{p}([0,T];\mathbb{R}^{m \times n})}   \\
		\leq& \|\mathcal{I}_{N} \mathcal{P}_{N} \pmb{A}^{(N)} - \mathcal{I}_{N} \mathcal{P}_{N} \widetilde{\pmb{A}}\|_{L^{p}([0,T];\mathbb{R}^{m \times n})}
		+ \|\mathcal{I}_{N} \mathcal{P}_{N} \widetilde{\pmb{A}} - \widetilde{\pmb{A}}\|_{L^{p}([0,T];\mathbb{R}^{m \times n})}  \\
		\leq& \|\pmb{A}^{(N)} - \widetilde{\pmb{A}}\|_{L^{p}([0,T];\mathbb{R}^{m \times n})}
		+ \|\mathcal{I}_{N} \mathcal{P}_{N} \widetilde{\pmb{A}} - \widetilde{\pmb{A}}\|_{L^{p}([0,T];\mathbb{R}^{m \times n})}  \\
		\to& 0 + 0 = 0, \quad N \to +\infty,
	\end{align*}
	where the last two steps use \cite[Prop.3.7]{lin2025deep}.	 
	Similarly, we obtain $\lim_{N \to +\infty} \|\mathcal{I}_{N} \mathcal{P}_{N} \pmb{\alpha}^{(N)} - \widetilde{\pmb{\alpha}}\|_{L^{p}([0,T])} = 0$, $\lim_{N \to +\infty} \|\mathcal{I}_{N} \mathcal{P}_{N} \pmb{\lambda}^{(N)} - \widetilde{\pmb{\lambda}}\|_{L^{p}([0,T])} = 0$.	
	Noting that 
	\begin{align*}
		(\mathcal{H}_{N}^{(1)} \circ \mathcal{P}_{N}) (\pmb{A}^{(N)})
		=& \psi\Big( \frac{1}{N} \sum_{k=0}^{N-1} \| (\mathcal{P}_{N} \pmb{A}^{(N)})_{k} \|_{2}^{p} \Big) 
		= \psi\Big( \frac{1}{T} \sum_{k=0}^{N-1} h_{N} \| (\mathcal{P}_{N} \pmb{A}^{(N)})_{k} \|_{2}^{p} \Big)  \\
		=& \psi\Big( \frac{1}{T} \|\mathcal{I}_{N} \mathcal{P}_{N} \pmb{A}^{(N)}\|_{L^{p}([0,T];\mathbb{R}^{m \times n})}^{p} \Big),
	\end{align*}
	we see that as $N \to +\infty$, 
	\begin{align*}
		(\mathcal{H}_{N}^{(1)} \circ \mathcal{P}_{N}) (\pmb{A}^{(N)})
		= \psi\Big( \frac{1}{T} \|\mathcal{I}_{N} \mathcal{P}_{N} \pmb{A}^{(N)}\|_{L^{p}([0,T];\mathbb{R}^{m \times n})}^{p} \Big) 
		\to \psi\Big( \frac{1}{T} \|\widetilde{\pmb{A}}\|_{L^{p}([0,T];\mathbb{R}^{m \times n})}^{p} \Big)
		= \mathcal{H}^{(1)}(\widetilde{\pmb{A}}),
	\end{align*}
	according to the assumption (A5).
	Similarly, $(\mathcal{H}_{N}^{(2)} \circ \mathcal{P}_{N}) (\pmb{\alpha}^{(N)}) \to \mathcal{H}^{(2)}(\widetilde{\pmb{\alpha}})$, $(\mathcal{H}_{N}^{(3)} \circ \mathcal{P}_{N}) (\pmb{\lambda}^{(N)}) \to \mathcal{H}^{(3)}(\widetilde{\pmb{\lambda}})$ as $N \to +\infty$.

	Besides, since $\{ \pmb{A}^{(N)} \}_{N=1}^{+\infty}$ is bounded in $L^{\infty}$ space and $\pmb{A}^{(N)} \overset{L^{p}}{\to} \widetilde{\pmb{A}}$,  $\pmb{\alpha}^{(N)} \overset{L^{p}}{\to} \widetilde{\pmb{\alpha}}$, $\pmb{\lambda}^{(N)} \overset{L^{p}}{\to} \widetilde{\pmb{\lambda}}$, we obtain by \cite[Cor.3.9]{lin2025deep} that $\pmb{x}^{N}$, which is constructed by $\{ x^{N,k} \}_{k=0}^{N}$ satisfying the basic FBS-network \eqref{eq-system-inclusion-discrete} with parameters $(\mathcal{P}_{N} \pmb{A}^{(N)}, \mathcal{P}_{N} \pmb{\alpha}^{(N)}, \mathcal{P}_{N} \pmb{\lambda}^{(N)})$, converges uniformly to the unique solution $\widetilde{\pmb{x}}$ of the differential inclusion \eqref{eq-system-inclusion-continuous} with parameters $(\widetilde{\pmb{A}}, \widetilde{\pmb{\alpha}}, \widetilde{\pmb{\lambda}})$. 
	It follows that $\pmb{x}^{N}(T) \to \widetilde{\pmb{x}}(T)$ as $N \to +\infty$.
	Hence, by the assumption (A4), one has
	\begin{align*}
		\mathcal{L}(x^{N,N}; y) 
		= \mathcal{L}(\pmb{x}^{N}(T); y)
		\to \mathcal{L}(\widetilde{\pmb{x}}(T); y), \quad N \to +\infty.
	\end{align*}

	Therefore, we obtain
	\begin{align*}
		& \lim_{N \to +\infty} (\mathcal{J}_{N} \circ \mathcal{P}_{N}) ( \pmb{A}^{(N)}, \pmb{\alpha}^{(N)}, \pmb{\lambda}^{(N)})   \\
		=& \lim_{N \to +\infty} \Big[ \mathcal{L}(\pmb{x}^{N}(T); y)
		+ \beta_{1} (\mathcal{H}_{N}^{(1)} \circ \mathcal{P}_{N}) (\pmb{A}^{(N)}) 
		+ \beta_{2} (\mathcal{H}_{N}^{(2)} \circ \mathcal{P}_{N}) (\pmb{\alpha}^{(N)}) + \beta_{3} (\mathcal{H}_{N}^{(3)} \circ \mathcal{P}_{N}) (\pmb{\lambda}^{(N)})
		\Big]  \\
		=& \mathcal{L}(\widetilde{\pmb{x}}(T); y)
		+ \beta_{1} \mathcal{H}^{(1)}(\widetilde{\pmb{A}}) 
		+ \beta_{2} \mathcal{H}^{(2)}(\widetilde{\pmb{\alpha}}) 
		+ \beta_{3} \mathcal{H}^{(3)}(\widetilde{\pmb{\lambda}})  \\
		=& \mathcal{J}(\widetilde{\pmb{A}}, \widetilde{\pmb{\alpha}}, \widetilde{\pmb{\lambda}}),
	\end{align*}
	which completes the proof.
\end{proof}

\begin{remark}
	This theorem gives a general convergence of objective functional of the learning problems from the discrete- to continuous-time setting, over a relatively more general domain.
	However, we cannot borrow the usual techniques in optimization to derive the convergence of the minimizers directly from this result, because the form of objective functional keeps changing along $N$, and each of the functional is in general not strongly convex.
\end{remark}

The following corollary, although straightforward from \textbf{Theorem} \ref{thm-convergence-objective-functions}, presents the  $\Gamma$-convergence from the objective functional of the FBS-network training problem to that of its continuous-time analog.

\begin{corollary}\label{cor-gamma-convergence-extended-version}
	($\Gamma$-convergence of objective functionals of learning problems)
	Suppose the assumptions (A1)-(A5) hold, and $p \in [1,+\infty)$. 
	Let the observed data $b \in \mathbb{R}^{m}$, the initial value $x^{0} \in \mathbb{R}^{n}$, and the label $y \in \mathbb{R}^{n}$.
	If $\pmb{\mathscr{F}} \subset L^{\infty}([0,T]; \mathbb{R}^{m \times n} \times \mathbb{R} \times \mathbb{R})$ is nonempty and bounded, and it is closed in $L^{p}$ sense, then $\overline{\mathcal{J}}(\cdot, \cdot, \cdot) = \Gamma$-$\lim\limits_{N \to +\infty} \overline{\mathcal{J}_{N} \circ \mathcal{P}_{N}}(\cdot, \cdot, \cdot)$ over $L^{p}([0,T]; \mathbb{R}^{m \times n} \times \mathbb{R} \times \mathbb{R})$, where
	\begin{align*}
		& \overline{\mathcal{J}}(\pmb{A}, \pmb{\alpha}, \pmb{\lambda}) := \left\{\begin{aligned}
			& \mathcal{J}(\pmb{A}, \pmb{\alpha}, \pmb{\lambda}), & (\pmb{A}, \pmb{\alpha}, \pmb{\lambda})& \in \pmb{\mathscr{F}}, \\
			& +\infty, & (\pmb{A}, \pmb{\alpha}, \pmb{\lambda})& \in L^{p}([0,T]; \mathbb{R}^{m \times n} \times \mathbb{R} \times \mathbb{R}) \backslash \pmb{\mathscr{F}},
		\end{aligned}\right.  \\
		& \overline{\mathcal{J}_{N} \circ \mathcal{P}_{N}}(\pmb{A}, \pmb{\alpha}, \pmb{\lambda}) := \left\{\begin{aligned}
			& \mathcal{J}_{N} \circ \mathcal{P}_{N}(\pmb{A}, \pmb{\alpha}, \pmb{\lambda}), & (\pmb{A}, \pmb{\alpha}, \pmb{\lambda})& \in \pmb{\mathscr{F}}, \\
			& +\infty, & (\pmb{A}, \pmb{\alpha}, \pmb{\lambda})& \in L^{p}([0,T]; \mathbb{R}^{m \times n} \times \mathbb{R} \times \mathbb{R}) \backslash \pmb{\mathscr{F}}.
		\end{aligned}\right.
	\end{align*}
\end{corollary}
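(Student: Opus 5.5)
We verify the two defining inequalities of $\Gamma$-convergence in the $L^{p}([0,T]; \mathbb{R}^{m \times n} \times \mathbb{R} \times \mathbb{R})$ topology: the liminf inequality and the existence of recovery sequences. Both will be read off from \textbf{Theorem} \ref{thm-convergence-objective-functions}. The only extra ingredient is the closedness of $\pmb{\mathscr{F}}$ in $L^{p}$, which handles points outside $\pmb{\mathscr{F}}$.

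\emph{Liminf inequality.} Fix $(\pmb{A}, \pmb{\alpha}, \pmb{\lambda}) \in L^{p}$ and a sequence $(\pmb{A}^{(N)}, \pmb{\alpha}^{(N)}, \pmb{\lambda}^{(N)}) \overset{L^{p}}{\to} (\pmb{A}, \pmb{\alpha}, \pmb{\lambda})$. We must show
\begin{align*}
\overline{\mathcal{J}}(\pmb{A}, \pmb{\alpha}, \pmb{\lambda}) \leq \liminf_{N \to +\infty} \overline{\mathcal{J}_{N} \circ \mathcal{P}_{N}}(\pmb{A}^{(N)}, \pmb{\alpha}^{(N)}, \pmb{\lambda}^{(N)}).
\end{align*}
We pass to a subsequence realizing the liminf. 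If along it only finitely many terms lie in $\pmb{\mathscr{F}}$, the right-hand side is $+\infty$ and there is nothing to prove. Otherwise we extract a further subsequence contained in $\pmb{\mathscr{F}}$. By $L^{p}$-closedness its limit $(\pmb{A}, \pmb{\alpha}, \pmb{\lambda})$ belongs to $\pmb{\mathscr{F}}$. Hence, if the limit lies outside $\pmb{\mathscr{F}}$, the right-hand side is $+\infty$ automatically.

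In the remaining case all terms and the limit lie in the bounded set $\pmb{\mathscr{F}}$. We then apply \textbf{Theorem} \ref{thm-convergence-objective-functions} to the subsequence, indexed by $N_{k}$. One should check that the theorem's proof works along a subsequence of layer numbers. It does: it only uses $\mathcal{I}_{N}\mathcal{P}_{N}$ estimates and \cite[Cor.3.9]{lin2025deep}, both valid as $N_{k} \to +\infty$. Alternatively, one can pad the sequence with $(\pmb{A}, \pmb{\alpha}, \pmb{\lambda})$ itself at the missing indices, so that the full sequence lies in $\pmb{\mathscr{F}}$ and still converges in $L^{p}$. The theorem then gives equality in the limit, which yields the inequality.

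\emph{Recovery sequence.} For $(\pmb{A}, \pmb{\alpha}, \pmb{\lambda}) \notin \pmb{\mathscr{F}}$, take the constant sequence; the inequality is trivial since $\overline{\mathcal{J}} = +\infty$. For $(\pmb{A}, \pmb{\alpha}, \pmb{\lambda}) \in \pmb{\mathscr{F}}$, again take the constant sequence. It lies in $\pmb{\mathscr{F}}$ and converges in $L^{p}$, so \textbf{Theorem} \ref{thm-convergence-objective-functions} yields $\lim_{N} (\mathcal{J}_{N} \circ \mathcal{P}_{N})(\pmb{A}, \pmb{\alpha}, \pmb{\lambda}) = \mathcal{J}(\pmb{A}, \pmb{\alpha}, \pmb{\lambda})$. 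This gives the limsup inequality.

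The main (mild) obstacle is the subsequence bookkeeping in the liminf step: a general $L^{p}$-convergent sequence may enter and leave $\pmb{\mathscr{F}}$ infinitely often. The padding trick above resolves this cleanly.
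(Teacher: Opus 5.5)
Your proof is correct and follows the same route as the paper. Both the liminf and limsup conditions are read off from Theorem \ref{thm-convergence-objective-functions}, the $L^{p}$-closedness of $\pmb{\mathscr{F}}$ handles limits outside $\pmb{\mathscr{F}}$, and the constant sequence serves as the recovery sequence; your subsequence-and-padding argument spells out the step the paper compresses into ``it is enough to consider the case with $\{(\pmb{A}^{(N)}, \pmb{\alpha}^{(N)}, \pmb{\lambda}^{(N)})\} \subseteq \pmb{\mathscr{F}}$''.
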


\begin{proof}[Proof. ]		
	Let us check the "liminf" and "limsup" conditions; see Appendix A.3.
	We will use the closedness of $\pmb{\mathscr{F}}$ in $L^{p}([0,T]; \mathbb{R}^{m \times n} \times \mathbb{R} \times \mathbb{R})$ and \textbf{Theorem} \ref{thm-convergence-objective-functions}.   

    For the "liminf" condition, we arbitrarily take a $(\widetilde{\pmb{A}}, \widetilde{\pmb{\alpha}}, \widetilde{\pmb{\lambda}})\in L^{p}([0,T]; \mathbb{R}^{m \times n} \times \mathbb{R} \times \mathbb{R})$ and a sequence  $\{ (\pmb{A}^{(N)}, \pmb{\alpha}^{(N)}, \pmb{\lambda}^{(N)}) \}_{N=1}^{+\infty} \subset L^{p}([0,T]; \mathbb{R}^{m \times n} \times \mathbb{R} \times \mathbb{R})$ satisfying $(\pmb{A}^{(N)}, \pmb{\alpha}^{(N)}, \pmb{\lambda}^{(N)}) \overset{L^{p}}{\to} (\widetilde{\pmb{A}}, \widetilde{\pmb{\alpha}}, \widetilde{\pmb{\lambda}})$ as $N \to +\infty$. If $(\widetilde{\pmb{A}}, \widetilde{\pmb{\alpha}}, \widetilde{\pmb{\lambda}}) \in L^{p}([0,T]; \mathbb{R}^{m \times n} \times \mathbb{R} \times \mathbb{R}) \backslash \pmb{\mathscr{F}}$, then the "liminf" condition holds trivially.
    If $(\widetilde{\pmb{A}}, \widetilde{\pmb{\alpha}}, \widetilde{\pmb{\lambda}}) \in \pmb{\mathscr{F}}$, it is enough to consider the case with $\{ (\pmb{A}^{(N)}, \pmb{\alpha}^{(N)}, \pmb{\lambda}^{(N)}) \}_{N=1}^{+\infty} \subseteq \pmb{\mathscr{F}}$ by the definition of $\overline{\mathcal{J}_{N} \circ \mathcal{P}_{N}}$, and the "liminf" condition also holds due to \textbf{Theorem} \ref{thm-convergence-objective-functions}.
	
    For the "limsup" condition, we arbitrarily take a $(\widetilde{\pmb{A}}, \widetilde{\pmb{\alpha}}, \widetilde{\pmb{\lambda}})\in L^{p}([0,T]; \mathbb{R}^{m \times n} \times \mathbb{R} \times \mathbb{R})$. If $(\widetilde{\pmb{A}}, \widetilde{\pmb{\alpha}}, \widetilde{\pmb{\lambda}}) \in L^{p}([0,T]; \mathbb{R}^{m \times n} \times \mathbb{R} \times \mathbb{R}) \backslash \pmb{\mathscr{F}}$, then again the ``limsup" condition holds trivially. If $(\widetilde{\pmb{A}}, \widetilde{\pmb{\alpha}}, \widetilde{\pmb{\lambda}}) \in\pmb{\mathscr{F}}$, we can choose a sequence  $\{ (\pmb{A}^{(N)}, \pmb{\alpha}^{(N)}, \pmb{\lambda}^{(N)}) \}_{N=1}^{+\infty} \subset \pmb{\mathscr{F}}$ satisfying $(\pmb{A}^{(N)}, \pmb{\alpha}^{(N)}, \pmb{\lambda}^{(N)}) \overset{L^{p}}{\to} (\widetilde{\pmb{A}}, \widetilde{\pmb{\alpha}}, \widetilde{\pmb{\lambda}})$ as $N \to +\infty$, and the ``limsup" condition holds again by using \textbf{Theorem} \ref{thm-convergence-objective-functions}.
\end{proof}

So far, the convergence of the objective function of the learning problem associated with the basic FBS-network to that of its corresponding deep limit system has been established. 
In what follows, we build connections between the minimizers of $\mathcal{J}_{N}$ and $\mathcal{J}_{N} \circ \mathcal{P}_{N}$, and combine them with the above $\Gamma$-convergence to derive the convergence properties of the learning problem minimizers of the basic FBS-network to those of its continuous-time counterpart.

For convenience of description, we denote 
\begin{align*}
	\mathcal{P}_{N} \pmb{\mathscr{F}} := \{ (\mathcal{P}_{N} \pmb{A}, \mathcal{P}_{N} \pmb{\alpha}, \mathcal{P}_{N} \pmb{\lambda}): (\pmb{A}, \pmb{\alpha}, \pmb{\lambda}) \in \pmb{\mathscr{F}} \},
\end{align*}
for $\pmb{\mathscr{F}} \subseteq L^{\infty}([0,T]; \mathbb{R}^{m \times n} \times \mathbb{R} \times \mathbb{R})$, 
and 
\begin{align*}
	& \mathcal{I}_{N} \mathscr{F}_{N} := \{ (\mathcal{I}_{N} A^{N,:}, \mathcal{I}_{N} \alpha^{N,:}, \mathcal{I}_{N} \lambda^{N,:}): (A^{N,:}, \alpha^{N,:}, \lambda^{N,:}) \in \mathscr{F}_{N} \},  \\
	& [\mathscr{F}_{N}] := \{ (\pmb{A}, \pmb{\alpha}, \pmb{\lambda}): (\mathcal{P}_{N} \pmb{A}, \mathcal{P}_{N} \pmb{\alpha}, \mathcal{P}_{N} \pmb{\lambda}) \in \mathscr{F}_{N} \},
\end{align*}
for $\mathscr{F}_{N} \subseteq (\mathbb{R}^{m \times n})^{N} \times \mathbb{R}^{N} \times \mathbb{R}^{N}$, $N \in \mathbb{N}_{+}$.
The following proposition and corollary build connections between the minimizers of $\mathcal{J}_{N}$ and $\mathcal{J}_{N} \circ \mathcal{P}_{N}$.


\begin{proposition}\label{prop-minimizer-extension}
	Suppose $\pmb{\mathscr{F}} \subseteq L^{\infty}([0,T]; \mathbb{R}^{m \times n} \times \mathbb{R} \times \mathbb{R})$.
	Let $[\mathcal{P}_{N} \pmb{\mathscr{F}}] \subseteq \pmb{\mathscr{F}}$ for any $N \in \mathbb{N}_{+}$.
	Then the point $((A^{N,:})^{*}, (\alpha^{N,:})^{*}, (\lambda^{N,:})^{*}) \in (\mathbb{R}^{m \times n})^{N} \times \mathbb{R}^{N} \times \mathbb{R}^{N}$ is a minimizer of $\mathcal{J}_{N}(\cdot, \cdot, \cdot)$ in $\mathcal{P}_{N} \pmb{\mathscr{F}}$ if and only if the point $(\pmb{A}^{*,N}, \pmb{\alpha}^{*,N}, \pmb{\lambda}^{*,N}) \in L^{\infty}([0,T]; \mathbb{R}^{m \times n} \times \mathbb{R} \times \mathbb{R})$ satisfying $\mathcal{P}_{N} (\pmb{A}^{*,N}, \pmb{\alpha}^{*,N}, \pmb{\lambda}^{*,N}) = ((A^{N,:})^{*}, (\alpha^{N,:})^{*}, (\lambda^{N,:})^{*})$ is a minimizer of $(\mathcal{J}_{N} \circ \mathcal{P}_{N}) (\cdot, \cdot, \cdot)$ in $\pmb{\mathscr{F}}$.
\end{proposition}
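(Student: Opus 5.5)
The key identity is $(\mathcal{J}_{N}\circ\mathcal{P}_{N})(\pmb{A},\pmb{\alpha},\pmb{\lambda}) = \mathcal{J}_{N}(\mathcal{P}_{N}\pmb{A},\mathcal{P}_{N}\pmb{\alpha},\mathcal{P}_{N}\pmb{\lambda})$. It says that $\mathcal{J}_{N}\circ\mathcal{P}_{N}$ depends on its argument only through its projection, so $\mathcal{J}_{N}\circ\mathcal{P}_{N}$ over $\pmb{\mathscr{F}}$ should take exactly the values of $\mathcal{J}_{N}$ over the image $\mathcal{P}_{N}\pmb{\mathscr{F}}$. I will prove the two implications separately using this identity. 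The hypothesis $[\mathcal{P}_{N}\pmb{\mathscr{F}}]\subseteq\pmb{\mathscr{F}}$ is what guarantees that a point of $\pmb{\mathscr{F}}$ can be chosen with any prescribed projection in $\mathcal{P}_{N}\pmb{\mathscr{F}}$. I also read the statement as implicitly requiring the continuous point $(\pmb{A}^{*,N},\pmb{\alpha}^{*,N},\pmb{\lambda}^{*,N})$ to lie in $\pmb{\mathscr{F}}$ (the only case in which "minimizer in $\pmb{\mathscr{F}}$" makes sense), and I will make this explicit.

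($\Rightarrow$) Let $((A^{N,:})^{*},(\alpha^{N,:})^{*},(\lambda^{N,:})^{*})$ minimize $\mathcal{J}_{N}$ over $\mathcal{P}_{N}\pmb{\mathscr{F}}$, and let $(\pmb{A}^{*,N},\pmb{\alpha}^{*,N},\pmb{\lambda}^{*,N})$ have this projection. By definition this point lies in $[\mathcal{P}_{N}\pmb{\mathscr{F}}]$, which is contained in $\pmb{\mathscr{F}}$ by hypothesis, so it is admissible. For any $(\pmb{A},\pmb{\alpha},\pmb{\lambda})\in\pmb{\mathscr{F}}$ its projection lies in $\mathcal{P}_{N}\pmb{\mathscr{F}}$. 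Hence
\begin{align*}
(\mathcal{J}_{N}\circ\mathcal{P}_{N})(\pmb{A},\pmb{\alpha},\pmb{\lambda})=\mathcal{J}_{N}(\mathcal{P}_{N}\pmb{A},\mathcal{P}_{N}\pmb{\alpha},\mathcal{P}_{N}\pmb{\lambda})\geq\mathcal{J}_{N}((A^{N,:})^{*},(\alpha^{N,:})^{*},(\lambda^{N,:})^{*})=(\mathcal{J}_{N}\circ\mathcal{P}_{N})(\pmb{A}^{*,N},\pmb{\alpha}^{*,N},\pmb{\lambda}^{*,N}),
\end{align*}
which is the claimed minimality.

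($\Leftarrow$) Suppose $(\pmb{A}^{*,N},\pmb{\alpha}^{*,N},\pmb{\lambda}^{*,N})\in\pmb{\mathscr{F}}$ minimizes $\mathcal{J}_{N}\circ\mathcal{P}_{N}$ over $\pmb{\mathscr{F}}$. Its projection is then in $\mathcal{P}_{N}\pmb{\mathscr{F}}$, so it is feasible for the discrete problem. Take any $(A^{N,:},\alpha^{N,:},\lambda^{N,:})\in\mathcal{P}_{N}\pmb{\mathscr{F}}$ and pick some $(\pmb{A},\pmb{\alpha},\pmb{\lambda})\in\pmb{\mathscr{F}}$ with that projection; one exists by the definition of $\mathcal{P}_{N}\pmb{\mathscr{F}}$ (alternatively, the piecewise-constant extension $\mathcal{I}_{N}$ of the tuple has projection equal to the tuple, since $\mathcal{P}_{N}\mathcal{I}_{N}=\mathrm{id}$, and lies in $[\mathcal{P}_{N}\pmb{\mathscr{F}}]\subseteq\pmb{\mathscr{F}}$). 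Then
\begin{align*}
\mathcal{J}_{N}(A^{N,:},\alpha^{N,:},\lambda^{N,:})=(\mathcal{J}_{N}\circ\mathcal{P}_{N})(\pmb{A},\pmb{\alpha},\pmb{\lambda})\geq(\mathcal{J}_{N}\circ\mathcal{P}_{N})(\pmb{A}^{*,N},\pmb{\alpha}^{*,N},\pmb{\lambda}^{*,N})=\mathcal{J}_{N}((A^{N,:})^{*},(\alpha^{N,:})^{*},(\lambda^{N,:})^{*}),
\end{align*}
so the projection minimizes $\mathcal{J}_{N}$ over $\mathcal{P}_{N}\pmb{\mathscr{F}}$.

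The argument is essentially bookkeeping. The only point needing care is admissibility in the ($\Rightarrow$) direction: an arbitrary preimage of the discrete minimizer must belong to $\pmb{\mathscr{F}}$, and this is exactly where $[\mathcal{P}_{N}\pmb{\mathscr{F}}]\subseteq\pmb{\mathscr{F}}$ is used. I also need the fact $\mathcal{P}_{N}\mathcal{I}_{N}=\mathrm{id}$, which holds because $\mathcal{P}_{N}$ averages a piecewise-constant function over each subinterval $\Omega_{k}^{N}$ and so returns its value there.
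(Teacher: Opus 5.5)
Your proof is correct and follows the paper's argument: both rest on the identity $(\mathcal{J}_{N}\circ\mathcal{P}_{N})(\pmb{A},\pmb{\alpha},\pmb{\lambda})=\mathcal{J}_{N}(\mathcal{P}_{N}\pmb{A},\mathcal{P}_{N}\pmb{\alpha},\mathcal{P}_{N}\pmb{\lambda})$, use $[\mathcal{P}_{N}\pmb{\mathscr{F}}]\subseteq\pmb{\mathscr{F}}$ to make the preimage admissible in ($\Rightarrow$), and pass a discrete competitor back to $\pmb{\mathscr{F}}$ in ($\Leftarrow$) (the paper via $\mathcal{I}_{N}$ and $\mathcal{P}_{N}\mathcal{I}_{N}=\mathrm{id}$). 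The differences are cosmetic: the paper argues each direction by contradiction rather than directly, and your observation that ($\Leftarrow$) needs no hypothesis, since every element of $\mathcal{P}_{N}\pmb{\mathscr{F}}$ already has a preimage in $\pmb{\mathscr{F}}$, is a slight sharpening.
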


\begin{proof}[Proof. ]
	Our proof is motivated by \cite{huang2024dynamical}. 
	
	"$\Longrightarrow$": We assume by contradiction that the point $((A^{N,:})^{*}, (\alpha^{N,:})^{*}, (\lambda^{N,:})^{*}) \in (\mathbb{R}^{m \times n})^{N} \times \mathbb{R}^{N} \times \mathbb{R}^{N}$ is a minimizer of $\mathcal{J}_{N}(\cdot, \cdot, \cdot)$ in $\mathcal{P}_{N} \pmb{\mathscr{F}}$, but the point $(\pmb{A}^{*,N}, \pmb{\alpha}^{*,N}, \pmb{\lambda}^{*,N}) \in L^{\infty}([0,T]; \mathbb{R}^{m \times n} \times \mathbb{R} \times \mathbb{R})$ satisfying $\mathcal{P}_{N} (\pmb{A}^{*,N}, \pmb{\alpha}^{*,N}, \pmb{\lambda}^{*,N}) = ((A^{N,:})^{*}, (\alpha^{N,:})^{*}, (\lambda^{N,:})^{*})$ is not a minimizer of $(\mathcal{J}_{N} \circ \mathcal{P}_{N}) (\cdot, \cdot, \cdot)$ over $\pmb{\mathscr{F}}$.
	Let $(\pmb{A}^{\#,N}, \pmb{\alpha}^{\#,N}, \pmb{\lambda}^{\#,N})$ be a point in $\pmb{\mathscr{F}}$ satisfying 
	\begin{align*}
		(\mathcal{J}_{N} \circ \mathcal{P}_{N}) (\pmb{A}^{\#,N}, \pmb{\alpha}^{\#,N}, \pmb{\lambda}^{\#,N}) < (\mathcal{J}_{N} \circ \mathcal{P}_{N}) (\pmb{A}^{*,N}, \pmb{\alpha}^{*,N}, \pmb{\lambda}^{*,N}).
	\end{align*}
	We see 
	$(\mathcal{P}_{N} \pmb{A}^{\#,N}, \mathcal{P}_{N} \pmb{\alpha}^{\#,N}, \mathcal{P}_{N} \pmb{\lambda}^{\#,N}) \in \mathcal{P}_{N} \pmb{\mathscr{F}}$, and
	\begin{align*}
		(\mathcal{J}_{N} \circ \mathcal{P}_{N}) (\pmb{A}^{\#,N}, \pmb{\alpha}^{\#,N}, \pmb{\lambda}^{\#,N})
		= \mathcal{J}_{N} (\mathcal{P}_{N} \pmb{A}^{\#,N}, \mathcal{P}_{N} \pmb{\alpha}^{\#,N}, \mathcal{P}_{N} \pmb{\lambda}^{\#,N}).
	\end{align*} 
	According to $\mathcal{P}_{N} (\pmb{A}^{*,N}, \pmb{\alpha}^{*,N}, \pmb{\lambda}^{*,N}) = ((A^{N,:})^{*}, (\alpha^{N,:})^{*}, (\lambda^{N,:})^{*}) \in \mathcal{P}_{N} \pmb{\mathscr{F}}$, one has $(\pmb{A}^{*,N}, \pmb{\alpha}^{*,N}, \pmb{\lambda}^{*,N}) \in [\mathcal{P}_{N} \pmb{\mathscr{F}}] \subseteq \pmb{\mathscr{F}}$, and thus
	\begin{align*}
		(\mathcal{J}_{N} \circ \mathcal{P}_{N}) (\pmb{A}^{\#,N}, \pmb{\alpha}^{\#,N}, \pmb{\lambda}^{\#,N})  
		<& (\mathcal{J}_{N} \circ \mathcal{P}_{N}) (\pmb{A}^{*,N}, \pmb{\alpha}^{*,N}, \pmb{\lambda}^{*,N})  \\
		=& \mathcal{J}_{N}(\mathcal{P}_{N} \pmb{A}^{*,N}, \mathcal{P}_{N} \pmb{\alpha}^{*,N}, \mathcal{P}_{N} \pmb{\lambda}^{*,N})  \\ 
		=& \mathcal{J}_{N}((A^{N,:})^{*}, (\alpha^{N,:})^{*}, (\lambda^{N,:})^{*}),
	\end{align*}
	which means $\mathcal{J}_{N} (\mathcal{P}_{N} \pmb{A}^{\#,N}, \mathcal{P}_{N} \pmb{\alpha}^{\#,N}, \mathcal{P}_{N} \pmb{\lambda}^{\#,N})
	< \mathcal{J}_{N}((A^{N,:})^{*}, (\alpha^{N,:})^{*}, (\lambda^{N,:})^{*})$, indicating a contradiction.
	
	"$\Longleftarrow$": We assume by contradiction that the point $(\pmb{A}^{*,N}, \pmb{\alpha}^{*,N}, \pmb{\lambda}^{*,N}) \in L^{\infty}([0,T]; \mathbb{R}^{m \times n} \times \mathbb{R} \times \mathbb{R})$ satisfying $\mathcal{P}_{N} (\pmb{A}^{*,N}, \pmb{\alpha}^{*,N}, \pmb{\lambda}^{*,N}) = ((A^{N,:})^{*}, (\alpha^{N,:})^{*}, (\lambda^{N,:})^{*})$ is a minimizer of $(\mathcal{J}_{N} \circ \mathcal{P}_{N}) (\cdot, \cdot, \cdot)$ in $\pmb{\mathscr{F}}$, but the point $((A^{N,:})^{*}, (\alpha^{N,:})^{*}, (\lambda^{N,:})^{*}) \in (\mathbb{R}^{m \times n})^{N} \times \mathbb{R}^{N} \times \mathbb{R}^{N}$ is not a minimizer of $\mathcal{J}_{N}(\cdot, \cdot, \cdot)$ in $\mathcal{P}_{N} \pmb{\mathscr{F}}$.
	Let $((A^{N,:})^{\#}, (\alpha^{N,:})^{\#}, (\lambda^{N,:})^{\#})$ be a point in $\mathcal{P}_{N} \pmb{\mathscr{F}}$ satisfying
	\begin{align*}
		\mathcal{J}_{N}((A^{N,:})^{\#}, (\alpha^{N,:})^{\#}, (\lambda^{N,:})^{\#}) < \mathcal{J}_{N}((A^{N,:})^{*}, (\alpha^{N,:})^{*}, (\lambda^{N,:})^{*}).
	\end{align*}
	We see 
	$(\mathcal{I}_{N} (A^{N,:})^{\#}, \mathcal{I}_{N} (\alpha^{N,:})^{\#}, \mathcal{I}_{N} (\lambda^{N,:})^{\#}) \in [\mathcal{P}_{N} \pmb{\mathscr{F}}] \subseteq \pmb{\mathscr{F}}$, and thus
	\begin{align*}
		\mathcal{J}_{N}((A^{N,:})^{\#}, (\alpha^{N,:})^{\#}, (\lambda^{N,:})^{\#})
		=& \mathcal{J}_{N}(\mathcal{P}_{N} \mathcal{I}_{N} (A^{N,:})^{\#}, \mathcal{P}_{N} \mathcal{I}_{N} (\alpha^{N,:})^{\#}, \mathcal{P}_{N} \mathcal{I}_{N} (\lambda^{N,:})^{\#})  \\
		=& (\mathcal{J}_{N} \circ \mathcal{P}_{N}) (\mathcal{I}_{N} (A^{N,:})^{\#}, \mathcal{I}_{N} (\alpha^{N,:})^{\#}, \mathcal{I}_{N} (\lambda^{N,:})^{\#}).
	\end{align*}
	According to $((A^{N,:})^{*}, (\alpha^{N,:})^{*}, (\lambda^{N,:})^{*}) = \mathcal{P}_{N} (\pmb{A}^{*,N}, \pmb{\alpha}^{*,N}, \pmb{\lambda}^{*,N}) \in \mathcal{P}_{N} \pmb{\mathscr{F}}$ and $(\pmb{A}^{*,N}, \pmb{\alpha}^{*,N}, \pmb{\lambda}^{*,N}) \in \pmb{\mathscr{F}}$, one has
	\begin{align*}
		\mathcal{J}_{N}((A^{N,:})^{\#}, (\alpha^{N,:})^{\#}, (\lambda^{N,:})^{\#})  
		<& \mathcal{J}_{N}((A^{N,:})^{*}, (\alpha^{N,:})^{*}, (\lambda^{N,:})^{*})  \\
		=& \mathcal{J}_{N}(\mathcal{P}_{N} \pmb{A}^{*,N}, \mathcal{P}_{N} \pmb{\alpha}^{*,N}, \mathcal{P}_{N} \pmb{\lambda}^{*,N})  \\
		=& (\mathcal{J}_{N} \circ \mathcal{P}_{N}) (\pmb{A}^{*,N}, \pmb{\alpha}^{*,N}, \pmb{\lambda}^{*,N}),
	\end{align*}
	which means $(\mathcal{J}_{N} \circ \mathcal{P}_{N}) (\mathcal{I}_{N} (A^{N,:})^{\#}, \mathcal{I}_{N} (\alpha^{N,:})^{\#}, \mathcal{I}_{N} (\lambda^{N,:})^{\#}) < (\mathcal{J}_{N} \circ \mathcal{P}_{N}) (\pmb{A}^{*,N}, \pmb{\alpha}^{*,N}, \pmb{\lambda}^{*,N})$, leading to a contradiction.
\end{proof}

\begin{corollary}\label{cor-discrete-minimizer-extended-is-composite-minimizer}
	Suppose $\pmb{\mathscr{F}} \subseteq L^{\infty}([0,T]; \mathbb{R}^{m \times n} \times \mathbb{R} \times \mathbb{R})$.
	Let $\mathcal{I}_{N} \mathcal{P}_{N} \pmb{\mathscr{F}} \subseteq \pmb{\mathscr{F}}$ for any $N \in \mathbb{N}_{+}$.
	Then the point $((A^{N,:})^{*}, (\alpha^{N,:})^{*}, (\lambda^{N,:})^{*}) \in (\mathbb{R}^{m \times n})^{N} \times \mathbb{R}^{N} \times \mathbb{R}^{N}$ is a minimizer of $\mathcal{J}_{N}(\cdot, \cdot, \cdot)$ in $\mathcal{P}_{N} \pmb{\mathscr{F}}$ if and only if the point $(\mathcal{I}_{N} (A^{N,:})^{*}, \mathcal{I}_{N} (\alpha^{N,:})^{*}, \mathcal{I}_{N} (\lambda^{N,:})^{*}) \in L^{\infty}([0,T]; \mathbb{R}^{m \times n} \times \mathbb{R} \times \mathbb{R})$ is a minimizer of $(\mathcal{J}_{N} \circ \mathcal{P}_{N}) (\cdot, \cdot, \cdot)$ over $\pmb{\mathscr{F}}$.
\end{corollary}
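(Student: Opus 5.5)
The plan is to deduce the corollary directly from Proposition \ref{prop-minimizer-extension}, applied with the particular lift $\mathcal{I}_{N}$. Two facts are needed: the new hypothesis $\mathcal{I}_{N}\mathcal{P}_{N}\pmb{\mathscr{F}} \subseteq \pmb{\mathscr{F}}$ must supply what the proposition's argument actually uses, and the lift must satisfy $\mathcal{P}_{N}\mathcal{I}_{N} = \mathrm{id}$ on $(\mathbb{R}^{m \times n})^{N} \times \mathbb{R}^{N} \times \mathbb{R}^{N}$. The identity holds because $(\mathcal{P}_{N}\mathcal{I}_{N}V)_{k}$ is the average over $\Omega_{k}^{N}$ of the function that equals the constant $V_{k}$ there. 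Consequently, for a minimizer candidate $((A^{N,:})^{*}, (\alpha^{N,:})^{*}, (\lambda^{N,:})^{*})$, the point $(\mathcal{I}_{N}(A^{N,:})^{*}, \mathcal{I}_{N}(\alpha^{N,:})^{*}, \mathcal{I}_{N}(\lambda^{N,:})^{*})$ is an admissible choice of $(\pmb{A}^{*,N}, \pmb{\alpha}^{*,N}, \pmb{\lambda}^{*,N})$ in Proposition \ref{prop-minimizer-extension}.

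The condition $[\mathcal{P}_{N}\pmb{\mathscr{F}}] \subseteq \pmb{\mathscr{F}}$ of the proposition is stronger than what we now assume, so I will not invoke the proposition as a black box. Instead I will rerun its two contradiction arguments and check that every appeal to $[\mathcal{P}_{N}\pmb{\mathscr{F}}] \subseteq \pmb{\mathscr{F}}$ concerns only piecewise constant functions $\mathcal{I}_{N}V$ with $V \in \mathcal{P}_{N}\pmb{\mathscr{F}}$. Such points lie in $\mathcal{I}_{N}\mathcal{P}_{N}\pmb{\mathscr{F}} \subseteq \pmb{\mathscr{F}}$ under the present hypothesis.

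For ``$\Longrightarrow$'', let $V^{*} := ((A^{N,:})^{*}, (\alpha^{N,:})^{*}, (\lambda^{N,:})^{*})$ minimize $\mathcal{J}_{N}$ over $\mathcal{P}_{N}\pmb{\mathscr{F}}$. First, $\mathcal{I}_{N}V^{*} \in \mathcal{I}_{N}\mathcal{P}_{N}\pmb{\mathscr{F}} \subseteq \pmb{\mathscr{F}}$, so it is feasible. For any $W \in \pmb{\mathscr{F}}$, since $\mathcal{P}_{N}W \in \mathcal{P}_{N}\pmb{\mathscr{F}}$,
\[
(\mathcal{J}_{N}\circ\mathcal{P}_{N})(W) = \mathcal{J}_{N}(\mathcal{P}_{N}W) \geq \mathcal{J}_{N}(V^{*}) = \mathcal{J}_{N}(\mathcal{P}_{N}\mathcal{I}_{N}V^{*}) = (\mathcal{J}_{N}\circ\mathcal{P}_{N})(\mathcal{I}_{N}V^{*}).
\]
This shows $\mathcal{I}_{N}V^{*}$ minimizes $\mathcal{J}_{N}\circ\mathcal{P}_{N}$ over $\pmb{\mathscr{F}}$.

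For ``$\Longleftarrow$'', suppose $\mathcal{I}_{N}V^{*} \in \pmb{\mathscr{F}}$ minimizes $\mathcal{J}_{N}\circ\mathcal{P}_{N}$ over $\pmb{\mathscr{F}}$. Then $V^{*} = \mathcal{P}_{N}\mathcal{I}_{N}V^{*} \in \mathcal{P}_{N}\pmb{\mathscr{F}}$. Take any $V \in \mathcal{P}_{N}\pmb{\mathscr{F}}$. Its lift satisfies $\mathcal{I}_{N}V \in \mathcal{I}_{N}\mathcal{P}_{N}\pmb{\mathscr{F}} \subseteq \pmb{\mathscr{F}}$, hence
\[
\mathcal{J}_{N}(V) = (\mathcal{J}_{N}\circ\mathcal{P}_{N})(\mathcal{I}_{N}V) \geq (\mathcal{J}_{N}\circ\mathcal{P}_{N})(\mathcal{I}_{N}V^{*}) = \mathcal{J}_{N}(V^{*}).
\]
So $V^{*}$ minimizes $\mathcal{J}_{N}$ over $\mathcal{P}_{N}\pmb{\mathscr{F}}$.

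The only point requiring care is the first one: confirming that the weaker inclusion $\mathcal{I}_{N}\mathcal{P}_{N}\pmb{\mathscr{F}} \subseteq \pmb{\mathscr{F}}$ suffices, which rests on feasibility of the lifted points. The identity $\mathcal{P}_{N}\mathcal{I}_{N} = \mathrm{id}$ is immediate since $|\Omega_{k}^{N}| = T/N > 0$. I anticipate no real obstacle.
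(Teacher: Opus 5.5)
Your proof is correct and matches the paper's argument. Both directions rest on $\mathcal{P}_{N}\mathcal{I}_{N} = \mathrm{id}$ and on feasibility of lifted points via $\mathcal{I}_{N}\mathcal{P}_{N}\pmb{\mathscr{F}} \subseteq \pmb{\mathscr{F}}$, and your observation that Proposition \ref{prop-minimizer-extension} cannot simply be cited agrees with the paper's own remark; the only difference is that the paper phrases each direction as a contradiction with a strictly better competitor, while you write the same inequalities directly.
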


\begin{proof}[Proof. ]
	"$\Longrightarrow$": We assume by contradiction that the point $((A^{N,:})^{*}, (\alpha^{N,:})^{*}, (\lambda^{N,:})^{*}) \in (\mathbb{R}^{m \times n})^{N} \times \mathbb{R}^{N} \times \mathbb{R}^{N}$ is a minimizer of $\mathcal{J}_{N}(\cdot, \cdot, \cdot)$ in $\mathcal{P}_{N} \pmb{\mathscr{F}}$, but the point $(\mathcal{I}_{N} (A^{N,:})^{*}, \mathcal{I}_{N} (\alpha^{N,:})^{*}, \mathcal{I}_{N} (\lambda^{N,:})^{*}) \in L^{\infty}([0,T]; \mathbb{R}^{m \times n} \times \mathbb{R} \times \mathbb{R})$ is not a minimizer of $(\mathcal{J}_{N} \circ \mathcal{P}_{N}) (\cdot, \cdot, \cdot)$ in $\pmb{\mathscr{F}}$.
	Let $(\pmb{A}^{\#,N}, \pmb{\alpha}^{\#,N}, \pmb{\lambda}^{\#,N})$ in $\pmb{\mathscr{F}}$ satisfying 
	\begin{align*}
		(\mathcal{J}_{N} \circ \mathcal{P}_{N}) (\pmb{A}^{\#,N}, \pmb{\alpha}^{\#,N}, \pmb{\lambda}^{\#,N})  
		< (\mathcal{J}_{N} \circ \mathcal{P}_{N}) (\mathcal{I}_{N} (A^{N,:})^{*}, \mathcal{I}_{N} (\alpha^{N,:})^{*}, \mathcal{I}_{N} (\lambda^{N,:})^{*}).
	\end{align*}
	We see 
	$(\mathcal{P}_{N} \pmb{A}^{\#,N}, \mathcal{P}_{N} \pmb{\alpha}^{\#,N}, \mathcal{P}_{N} \pmb{\lambda}^{\#,N}) \in \mathcal{P}_{N} \pmb{\mathscr{F}}$, and
	\begin{align*}
		(\mathcal{J}_{N} \circ \mathcal{P}_{N}) (\pmb{A}^{\#,N}, \pmb{\alpha}^{\#,N}, \pmb{\lambda}^{\#,N})
		= \mathcal{J}_{N} (\mathcal{P}_{N} \pmb{A}^{\#,N}, \mathcal{P}_{N} \pmb{\alpha}^{\#,N}, \mathcal{P}_{N} \pmb{\lambda}^{\#,N}).
	\end{align*} 
	According to $((A^{N,:})^{*}, (\alpha^{N,:})^{*}, (\lambda^{N,:})^{*}) \in \mathcal{P}_{N} \pmb{\mathscr{F}}$, one has $( \mathcal{I}_{N} (A^{N,:})^{*}, \mathcal{I}_{N} (\alpha^{N,:})^{*}, \mathcal{I}_{N} (\lambda^{N,:})^{*} ) \in \mathcal{I}_{N} \mathcal{P}_{N} \pmb{\mathscr{F}} \subseteq \pmb{\mathscr{F}}$, and thus
	\begin{align*}
		(\mathcal{J}_{N} \circ \mathcal{P}_{N}) (\pmb{A}^{\#,N}, \pmb{\alpha}^{\#,N}, \pmb{\lambda}^{\#,N})  
		<& (\mathcal{J}_{N} \circ \mathcal{P}_{N}) (\mathcal{I}_{N} (A^{N,:})^{*}, \mathcal{I}_{N} (\alpha^{N,:})^{*}, \mathcal{I}_{N} (\lambda^{N,:})^{*})  \\
		=& \mathcal{J}_{N}(\mathcal{P}_{N} \mathcal{I}_{N} (A^{N,:})^{*}, \mathcal{P}_{N} \mathcal{I}_{N} (\alpha^{N,:})^{*}, \mathcal{P}_{N} \mathcal{I}_{N} (\lambda^{N,:})^{*})  \\ 
		=& \mathcal{J}_{N}((A^{N,:})^{*}, (\alpha^{N,:})^{*}, (\lambda^{N,:})^{*}),
	\end{align*}
	which means $\mathcal{J}_{N} (\mathcal{P}_{N} \pmb{A}^{\#,N}, \mathcal{P}_{N} \pmb{\alpha}^{\#,N}, \mathcal{P}_{N} \pmb{\lambda}^{\#,N})
	< \mathcal{J}_{N}((A^{N,:})^{*}, (\alpha^{N,:})^{*}, (\lambda^{N,:})^{*})$, indicating a contradiction.
	
	"$\Longleftarrow$": We assume by contradiction that the point $(\mathcal{I}_{N} (A^{N,:})^{*}, \mathcal{I}_{N} (\alpha^{N,:})^{*}, \mathcal{I}_{N} (\lambda^{N,:})^{*}) \in L^{\infty}([0,T]; \mathbb{R}^{m \times n} \times \mathbb{R} \times \mathbb{R})$ is a minimizer of $(\mathcal{J}_{N} \circ \mathcal{P}_{N}) (\cdot, \cdot, \cdot)$ in $\pmb{\mathscr{F}}$, but the point $((A^{N,:})^{*}, (\alpha^{N,:})^{*}, (\lambda^{N,:})^{*}) \in (\mathbb{R}^{m \times n})^{N} \times \mathbb{R}^{N} \times \mathbb{R}^{N}$ is not a minimizer of $\mathcal{J}_{N}(\cdot, \cdot, \cdot)$ in $\mathcal{P}_{N} \pmb{\mathscr{F}}$.
	Let $((A^{N,:})^{\#}, (\alpha^{N,:})^{\#}, (\lambda^{N,:})^{\#})$ in $\mathcal{P}_{N} \pmb{\mathscr{F}}$ satisfying
	\begin{align*}
		\mathcal{J}_{N}((A^{N,:})^{\#}, (\alpha^{N,:})^{\#}, (\lambda^{N,:})^{\#}) 
		< \mathcal{J}_{N}((A^{N,:})^{*}, (\alpha^{N,:})^{*}, (\lambda^{N,:})^{*}).
	\end{align*}
	We see 
	$(\mathcal{I}_{N} (A^{N,:})^{\#}, \mathcal{I}_{N} (\alpha^{N,:})^{\#}, \mathcal{I}_{N} (\lambda^{N,:})^{\#}) \in \mathcal{I}_{N} \mathcal{P}_{N} \pmb{\mathscr{F}} \subseteq \pmb{\mathscr{F}}$, and thus
	\begin{align*}
		\mathcal{J}_{N}((A^{N,:})^{\#}, (\alpha^{N,:})^{\#}, (\lambda^{N,:})^{\#})
		=& \mathcal{J}_{N}(\mathcal{P}_{N} \mathcal{I}_{N} (A^{N,:})^{\#}, \mathcal{P}_{N} \mathcal{I}_{N} (\alpha^{N,:})^{\#}, \mathcal{P}_{N} \mathcal{I}_{N} (\lambda^{N,:})^{\#})  \\
		=& (\mathcal{J}_{N} \circ \mathcal{P}_{N}) (\mathcal{I}_{N} (A^{N,:})^{\#}, \mathcal{I}_{N} (\alpha^{N,:})^{\#}, \mathcal{I}_{N} (\lambda^{N,:})^{\#}).
	\end{align*}
	According to $(\mathcal{I}_{N} (A^{N,:})^{*}, \mathcal{I}_{N} (\alpha^{N,:})^{*}, \mathcal{I}_{N} (\lambda^{N,:})^{*}) \in \pmb{\mathscr{F}}$ and $((A^{N,:})^{*}, (\alpha^{N,:})^{*}, (\lambda^{N,:})^{*}) = (\mathcal{P}_{N} \mathcal{I}_{N} (A^{N,:})^{*}, \mathcal{P}_{N} \mathcal{I}_{N} (\alpha^{N,:})^{*}, \mathcal{P}_{N} \mathcal{I}_{N} (\lambda^{N,:})^{*}) \in \mathcal{P}_{N} \pmb{\mathscr{F}}$, one has
	\begin{align*}
		\mathcal{J}_{N}((A^{N,:})^{\#}, (\alpha^{N,:})^{\#}, (\lambda^{N,:})^{\#})  
		<& \mathcal{J}_{N}((A^{N,:})^{*}, (\alpha^{N,:})^{*}, (\lambda^{N,:})^{*})  \\
		=& \mathcal{J}_{N}(\mathcal{P}_{N} \mathcal{I}_{N} (A^{N,:})^{*}, \mathcal{P}_{N} \mathcal{I}_{N} (\alpha^{N,:})^{*}, \mathcal{P}_{N} \mathcal{I}_{N} (\lambda^{N,:})^{*})  \\
		=& (\mathcal{J}_{N} \circ \mathcal{P}_{N}) (\mathcal{I}_{N} (A^{N,:})^{*}, \mathcal{I}_{N} (\alpha^{N,:})^{*}, \mathcal{I}_{N} (\lambda^{N,:})^{*}),
	\end{align*}
	which means $(\mathcal{J}_{N} \circ \mathcal{P}_{N}) (\mathcal{I}_{N} (A^{N,:})^{\#}, \mathcal{I}_{N} (\alpha^{N,:})^{\#}, \mathcal{I}_{N} (\lambda^{N,:})^{\#}) < (\mathcal{J}_{N} \circ \mathcal{P}_{N}) (\mathcal{I}_{N} (A^{N,:})^{*}, \mathcal{I}_{N} (\alpha^{N,:})^{*}, \mathcal{I}_{N} (\lambda^{N,:})^{*})$, leading to a contradiction.
\end{proof}

\begin{remark}
	In fact, one cannot directly derive \textbf{Corollary} \ref{cor-discrete-minimizer-extended-is-composite-minimizer} from \textbf{Proposition} \ref{prop-minimizer-extension}, but the proof framework of \textbf{Corollary} \ref{cor-discrete-minimizer-extended-is-composite-minimizer} is similar to that of \textbf{Proposition} \ref{prop-minimizer-extension}.
\end{remark}

We are now at the position to prove the existence of cluster points of solutions of the problems $\{ (\mathfrak{Q}_{N})  \}_{N=1}^{+\infty}$, each of which is exactly a solution of the problem $(\mathfrak{Q})$ \eqref{eq-continuous-optimal-control-problem-single}.

\begin{theorem} \label{thm-existence-of-cluster-of-minimizers-new}
	(Convergence property of minimizers of learning problems)
	Suppose that the assumptions (A1)-(A5) hold, $p \in [1,+\infty)$, $\pmb{\mathscr{D}}$ is bounded in $L^{\infty}([0,T];\mathbb{R}^{m \times n} \times \mathbb{R} \times \mathbb{R})$ and is closed, equi-continuous in $L^{p}$ sense. 
	Assume $\mathcal{I}_{N} \mathcal{P}_{N} \pmb{\mathscr{D}} \subseteq \pmb{\mathscr{D}}$ for any $N \in \mathbb{N}_{+}$.
	Denote $((A^{N,:})^{*}, (\alpha^{N,:})^{*}, (\lambda^{N,:})^{*}) \in \mathcal{P}_{N} \pmb{\mathscr{D}}$ as a solution of the problem $(\mathfrak{Q}_{N})$ \eqref{eq-discrete-optimal-control-problem-single} in $\mathcal{P}_{N} \pmb{\mathscr{D}}$ for any $N \in \mathbb{N}_{+}$.
	Then there exist a subsequence $\{ ((A^{N_{i},:})^{*}, (\alpha^{N_{i},:})^{*}, (\lambda^{N_{i},:})^{*}) \}_{i=1}^{+\infty}$ and a limit point $(\pmb{A}^{*}, \pmb{\alpha}^{*}, \pmb{\lambda}^{*}) \in \pmb{\mathscr{D}}$ such that
	\begin{align*}
		( \mathcal{I}_{N_{i}} (A^{N_{i},:})^{*}, \mathcal{I}_{N_{i}} (\alpha^{N_{i},:})^{*}, \mathcal{I}_{N_{i}} (\lambda^{N_{i},:})^{*} ) \overset{L^{p}}{\to} ( \pmb{A}^{*}, \pmb{\alpha}^{*}, \pmb{\lambda}^{*} ), 
		\quad {\rm as \,\,} i \to +\infty.
	\end{align*}
	with $(\pmb{A}^{*}, \pmb{\alpha}^{*}, \pmb{\lambda}^{*})$  being a solution to the problem $(\mathfrak{Q})$ \eqref{eq-continuous-optimal-control-problem-single} in $\pmb{\mathscr{D}}$.
\end{theorem}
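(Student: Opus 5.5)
We follow the standard $\Gamma$-convergence route: lift the discrete minimizers to the continuous space, extract an $L^{p}$-convergent subsequence by compactness, and identify the limit as a minimizer via Theorem \ref{thm-convergence-objective-functions}.

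\textbf{Step 1: lifting.} For each $N$ set $\pmb{u}^{N} := (\mathcal{I}_{N} (A^{N,:})^{*}, \mathcal{I}_{N} (\alpha^{N,:})^{*}, \mathcal{I}_{N} (\lambda^{N,:})^{*})$. Since $((A^{N,:})^{*}, (\alpha^{N,:})^{*}, (\lambda^{N,:})^{*}) \in \mathcal{P}_{N}\pmb{\mathscr{D}}$ and $\mathcal{I}_{N}\mathcal{P}_{N}\pmb{\mathscr{D}} \subseteq \pmb{\mathscr{D}}$, we have $\pmb{u}^{N} \in \pmb{\mathscr{D}}$. By Corollary \ref{cor-discrete-minimizer-extended-is-composite-minimizer}, $\pmb{u}^{N}$ minimizes $\mathcal{J}_{N}\circ\mathcal{P}_{N}$ over $\pmb{\mathscr{D}}$. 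Moreover $\mathcal{P}_{N}\mathcal{I}_{N} = \mathrm{id}$, so $(\mathcal{J}_{N}\circ\mathcal{P}_{N})(\pmb{u}^{N})$ equals the optimal value of $(\mathfrak{Q}_{N})$ in $\mathcal{P}_{N}\pmb{\mathscr{D}}$.

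\textbf{Step 2: compactness.} The sequence $\{\pmb{u}^{N}\}$ lies in $\pmb{\mathscr{D}}$. This set is bounded in $L^{\infty}$, hence in $L^{p}$, and it is equi-continuous in the $L^{p}$ sense. The Bochner version of Lemma \ref{lemma-Kolmogorov-Riesz-bochner} therefore gives a subsequence $N_{i}$ and a limit $(\pmb{A}^{*}, \pmb{\alpha}^{*}, \pmb{\lambda}^{*})$ with $\pmb{u}^{N_{i}} \to (\pmb{A}^{*}, \pmb{\alpha}^{*}, \pmb{\lambda}^{*})$ in $L^{p}$. Because $\pmb{\mathscr{D}}$ is closed in $L^{p}$, the limit belongs to $\pmb{\mathscr{D}}$. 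This is the same extraction as in Step 1 of the proof of Theorem \ref{thm-existence-of-solution-to-optimal-control-continuous}.

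\textbf{Step 3: optimality of the limit.} Fix any $(\widetilde{\pmb{A}}, \widetilde{\pmb{\alpha}}, \widetilde{\pmb{\lambda}}) \in \pmb{\mathscr{D}}$ and use the constant recovery sequence. Theorem \ref{thm-convergence-objective-functions} with $\pmb{\mathscr{F}} = \pmb{\mathscr{D}}$ then gives $(\mathcal{J}_{N_{i}}\circ\mathcal{P}_{N_{i}})(\widetilde{\pmb{A}}, \widetilde{\pmb{\alpha}}, \widetilde{\pmb{\lambda}}) \to \mathcal{J}(\widetilde{\pmb{A}}, \widetilde{\pmb{\alpha}}, \widetilde{\pmb{\lambda}})$. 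The same theorem, applied to the sequence $\pmb{u}^{N_{i}}$ (suitably reindexed, see below), gives $(\mathcal{J}_{N_{i}}\circ\mathcal{P}_{N_{i}})(\pmb{u}^{N_{i}}) \to \mathcal{J}(\pmb{A}^{*}, \pmb{\alpha}^{*}, \pmb{\lambda}^{*})$. By minimality, $(\mathcal{J}_{N_{i}}\circ\mathcal{P}_{N_{i}})(\pmb{u}^{N_{i}}) \le (\mathcal{J}_{N_{i}}\circ\mathcal{P}_{N_{i}})(\widetilde{\pmb{A}}, \widetilde{\pmb{\alpha}}, \widetilde{\pmb{\lambda}})$. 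Passing to the limit yields $\mathcal{J}(\pmb{A}^{*}, \pmb{\alpha}^{*}, \pmb{\lambda}^{*}) \le \mathcal{J}(\widetilde{\pmb{A}}, \widetilde{\pmb{\alpha}}, \widetilde{\pmb{\lambda}})$, so the limit solves $(\mathfrak{Q})$. As a by-product, the optimal values of $(\mathfrak{Q}_{N_{i}})$ converge to the optimal value of $(\mathfrak{Q})$.

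\textbf{Main obstacle: applying Theorem \ref{thm-convergence-objective-functions} along a subsequence.} That theorem is stated for sequences indexed by all $N$, with the $N$-th element paired with $\mathcal{J}_{N}\circ\mathcal{P}_{N}$. Here we only have the subsequence $N_{i}$. I would handle this by defining a full sequence $\pmb{v}^{N}$: set $\pmb{v}^{N} := \pmb{u}^{N_{i}}$ when $N = N_{i}$, and $\pmb{v}^{N} := (\pmb{A}^{*}, \pmb{\alpha}^{*}, \pmb{\lambda}^{*})$ for all other $N$. This sequence lies in $\pmb{\mathscr{D}}$ and converges to $(\pmb{A}^{*}, \pmb{\alpha}^{*}, \pmb{\lambda}^{*})$ in $L^{p}$. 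Applying the theorem to $\{\pmb{v}^{N}\}$ and restricting to $N = N_{i}$ gives the required limit. Alternatively, one can observe that the proof of Theorem \ref{thm-convergence-objective-functions} only uses $N \to \infty$ along the sequence, through \cite[Cor.3.9]{lin2025deep} and \cite[Prop.3.7]{lin2025deep}, and so works verbatim for subsequences. The other ingredients are already in place: the uniform $L^{\infty}$ bound on $\pmb{\mathscr{D}}$ required by that theorem is an assumption here, and the lifted points lie in $\pmb{\mathscr{D}}$ by Step 1.
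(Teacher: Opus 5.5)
Your proposal is correct and follows the paper's proof. You lift the discrete minimizers via Corollary \ref{cor-discrete-minimizer-extended-is-composite-minimizer}, extract an $L^{p}$-convergent subsequence in $\pmb{\mathscr{D}}$ via the Bochner version of Lemma \ref{lemma-Kolmogorov-Riesz-bochner} and closedness, and identify the limit as a minimizer. The paper does the last step by citing the $\Gamma$-convergence of Corollary \ref{cor-gamma-convergence-extended-version} together with the fundamental theorem (Theorem \ref{thm-fundamental-theorem-of-Gamma-convergence}), whereas you inline that argument directly from Theorem \ref{thm-convergence-objective-functions} using a constant recovery sequence; your padding of the subsequence to a full sequence makes explicit the passage to subsequences that the paper leaves implicit.
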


\begin{proof}[Proof. ]
	We define
	\begin{align*}
		& \widehat{\mathcal{J}}(\pmb{A}, \pmb{\alpha}, \pmb{\lambda}) := \left\{\begin{aligned}
			& \mathcal{J}(\pmb{A}, \pmb{\alpha}, \pmb{\lambda}), & (\pmb{A}, \pmb{\alpha}, \pmb{\lambda})& \in \pmb{\mathscr{D}}, \\
			& +\infty, & (\pmb{A}, \pmb{\alpha}, \pmb{\lambda})& \in L^{p}([0,T]; \mathbb{R}^{m \times n} \times \mathbb{R} \times \mathbb{R}) \backslash \pmb{\mathscr{D}},
		\end{aligned}\right.  \\
		& \widehat{\mathcal{J}_{N} \circ \mathcal{P}_{N}}(\pmb{A}, \pmb{\alpha}, \pmb{\lambda}) := \left\{\begin{aligned}
			& \mathcal{J}_{N} \circ \mathcal{P}_{N}(\pmb{A}, \pmb{\alpha}, \pmb{\lambda}), & (\pmb{A}, \pmb{\alpha}, \pmb{\lambda})& \in \pmb{\mathscr{D}}, \\
			& +\infty, & (\pmb{A}, \pmb{\alpha}, \pmb{\lambda})& \in L^{p}([0,T]; \mathbb{R}^{m \times n} \times \mathbb{R} \times \mathbb{R}) \backslash \pmb{\mathscr{D}},
		\end{aligned}\right.
	\end{align*}
	for our proof.	
	Since $( (A^{N,:})^{*}, (\alpha^{N,:})^{*}, (\lambda^{N,:})^{*} ) \in \mathcal{P}_{N} \pmb{\mathscr{D}}$ is a minimizer of the problem $(\mathfrak{Q}_{N})$ \eqref{eq-discrete-optimal-control-problem-single} over $\mathcal{P}_{N} \pmb{\mathscr{D}}$, one has, by \textbf{Corollary} \ref{cor-discrete-minimizer-extended-is-composite-minimizer}, that $( \mathcal{I}_{N} (A^{N,:})^{*}, \mathcal{I}_{N} (\alpha^{N,:})^{*}, \mathcal{I}_{N} (\lambda^{N,:})^{*} )$ is a minimizer of $(\mathcal{J}_{N} \circ \mathcal{P}_{N})(\cdot, \cdot, \cdot)$ over $\pmb{\mathscr{D}}$, i.e., a minimizer of $\widehat{\mathcal{J}_{N} \circ \mathcal{P}_{N}}(\cdot, \cdot, \cdot)$ over $L^{p}([0,T];\mathbb{R}^{m\times n} \times \mathbb{R} \times \mathbb{R})$.
    Moreover, by $\mathcal{I}_{N} \mathcal{P}_{N} \pmb{\mathscr{D}} \subseteq \pmb{\mathscr{D}}$, a Bochner version of \textbf{Lemma} \ref{lemma-Kolmogorov-Riesz-bochner} implies that $\{( \mathcal{I}_{N} (A^{N,:})^{*}, \mathcal{I}_{N} (\alpha^{N,:})^{*}, \mathcal{I}_{N} (\lambda^{N,:})^{*} )\}_{N=1}^{+\infty}$ has a subsequence $\{ ( \mathcal{I}_{N_{i}} (A^{N_{i},:})^{*}, \mathcal{I}_{N_{i}} (\alpha^{N_{i},:})^{*}, \mathcal{I}_{N_{i}} (\lambda^{N_{i},:})^{*} ) \}_{i=1}^{+\infty}$ converging to some point $(\pmb{A}^{*}, \pmb{\alpha}^{*}, \pmb{\lambda}^{*})\in\pmb{\mathscr{D}}$, i.e.,
	$
		( \mathcal{I}_{N_{i}} (A^{N_{i},:})^{*}, \mathcal{I}_{N_{i}} (\alpha^{N_{i},:})^{*}, \mathcal{I}_{N_{i}} (\lambda^{N_{i},:})^{*} ) \overset{L^{p}}{\to} ( \pmb{A}^{*}, \pmb{\alpha}^{*}, \pmb{\lambda}^{*} ), 
	$ as $N \to +\infty$.
	
	Using the above facts and $\widehat{\mathcal{J}}(\cdot, \cdot, \cdot) = \Gamma$-$\lim_{N \to +\infty} \widehat{\mathcal{J}_{N} \circ \mathcal{P}_{N}} (\cdot, \cdot, \cdot)$ over $L^{p}([0,T];\mathbb{R}^{m \times n} \times \mathbb{R} \times \mathbb{R})$ similarly in \textbf{Corollary} \ref{cor-gamma-convergence-extended-version}, we then apply the \textbf{Fundamental} \textbf{Theorem} \textbf{of} $\Gamma$-\textbf{Convergence} \ref{thm-fundamental-theorem-of-Gamma-convergence} to see that $(\pmb{A}^{*}, \pmb{\alpha}^{*}, \pmb{\lambda}^{*})$ is a minimizer of $\widehat{\mathcal{J}}(\cdot, \cdot, \cdot)$, i.e., a solution to the problem $(\mathfrak{Q})$ \eqref{eq-continuous-optimal-control-problem-single}.
\end{proof}

\section{Stability properties of learning problems of the basic FBS-network and its deep-layer limit system} \label{sec-stability}

In this section, we discuss the stability properties of the learning problems \eqref{eq-discrete-optimal-control-problem-single} and \eqref{eq-continuous-optimal-control-problem-single}, i.e., the sensitivity of their optimums and optimal solutions w.r.t $x^{0}$, $b$, $y$.

\subsection{The stability of the learning problem \eqref{eq-discrete-optimal-control-problem-single} of the basic FBS-network}

The following theorem describes the stability of the FBS-network in terms of the given initial value $x^{0}$, observed data $b$, and label $y$.
Its proof is inspired by \cite{frecon2018bilevel} for a bilevel optimization problem, and we put it in the appendix for completeness.

\begin{theorem}\label{thm-stability-control-problem-discrete-all}
	(Stability of the learning problem of the basic FBS-network w.r.t. $(x^{0}, b, y)$)
	Suppose the assumptions of the problem $(\mathfrak{Q}_{N})$ in \textbf{Theorem} \ref{thm-existence-of-solution-to-optimal-control-discrete} hold and $p \in [1,+\infty]$.
	Let $\{ ( (x^{0})^{(r)}, b^{(r)}, y^{(r)}) \}_{r=1}^{+\infty} \subset \mathbb{R}^{n} \times \mathbb{R}^{m} \times \mathbb{R}^{n}$ converge to $(x^{0}, b, y) \in \mathbb{R}^{n} \times \mathbb{R}^{m} \times \mathbb{R}^{n}$. 
	For any given $r \in \mathbb{N}_{+}$, we consider the following perturbed problem
	$(\widetilde{\mathfrak{Q}}_{N}^{(r)})$:
	\begin{equation}\label{eq-discrete-optimal-control-problem-all-r}
		\left\{\begin{aligned}
			& \min_{ (A^{N,:}, \alpha^{N,:}, \lambda^{N,:}) \in \mathscr{D}_{N} } \Big\{ \widetilde{\mathcal{J}}_{N}^{(r)}(A^{N,:}, \alpha^{N,:}, \lambda^{N,:}) := \mathcal{L}(x^{N,N}; y^{(r)}) 
			+ \beta_{1} \mathcal{H}_{N}^{(1)}(A^{N,:}) \\
			&\hspace{120pt}+ \beta_{2} \mathcal{H}_{N}^{(2)}(\alpha^{N,:})
			+ \beta_{3} \mathcal{H}_{N}^{(3)}(\lambda^{N,:})
			\Big\}  \\
			& {\rm s.t.} \,\, \vec{0} \in x^{N,k+1} - x^{N,k} + h_{N} \alpha^{N,k} (A^{N,k})^{\top} (A^{N,k}x^{N,k} - b^{(r)}) + h_{N} \alpha^{N,k} \lambda^{N,k} (\partial\mathcal{R})(x^{N,k+1}),  \\ 
			&\hspace{80pt} k = 0, 1, \ldots, N-1,  \\
			& \quad\,\,\,\, x^{N,0} = (x^{0})^{(r)},
		\end{aligned}\right.
	\end{equation}
	and denote its solution set as $\widetilde{S}_{N}^{(r)}$. 
	Let the sequence $\left\{ \big( (A^{N,:})^{*,(r)}, (\alpha^{N,:})^{*,(r)}, (\lambda^{N,:})^{*,(r)} \big) \right\}_{r=1}^{+\infty}$ satisfy $((A^{N,:})^{*,(r)}, (\alpha^{N,:})^{*,(r)}, (\lambda^{N,:})^{*,(r)}) \in \widetilde{S}_{N}^{(r)}$, $\forall r \in \mathbb{N}_{+}$.
	Then,
	\begin{itemize}
		\item[(1) ] $\{ ((A^{N,:})^{*,(r)}, (\alpha^{N,:})^{*,(r)}, (\lambda^{N,:})^{*,(r)}) \}_{r=1}^{+\infty}$ has a convergent subsequence, and 
		its all cluster points belong to $S_{N}$, i.e., the solution set of the problem $(\mathfrak{Q}_{N})$;
			
		\item[(2) ] as $r \to +\infty$,
		\begin{align}\label{eq-dist-converge-zero}
			& \inf_{ ( (A^{N,:})^{*}, (\alpha^{N,:})^{*}, (\lambda^{N,:})^{*} ) \in S_{N}} \| ( (A^{N,:})^{*,(r)}, (\alpha^{N,:})^{*,(r)}, (\lambda^{N,:})^{*,(r)} )  \notag \\
			&\hspace{100pt} - ( (A^{N,:})^{*}, (\alpha^{N,:})^{*}, (\lambda^{N,:})^{*} ) \|_{\ell^{p}((\mathbb{R}^{m \times n})^{N} \times \mathbb{R}^{N} \times \mathbb{R}^{N})} \to 0.
		\end{align}
		
		\item[(3) ] the optimal value of $(\widetilde{\mathfrak{Q}}_{N}^{(r)})$ converges to that of $(\mathfrak{Q}_{N})$ as $r \to +\infty$, that is, 
		\begin{align*}
			\inf_{(A^{N,:}, \alpha^{N,:}, \lambda^{N,:}) \in \mathscr{D}_{N}} \widetilde{\mathcal{J}}_{N}^{(r)}(A^{N,:}, \alpha^{N,:}, \lambda^{N,:}) 
			\to	\inf_{(A^{N,:}, \alpha^{N,:}, \lambda^{N,:}) \in \mathscr{D}_{N}} \mathcal{J}_{N}(A^{N,:}, \alpha^{N,:}, \lambda^{N,:}), \quad r \to +\infty.
		\end{align*}
	\end{itemize}
\end{theorem}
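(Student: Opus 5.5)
The plan is a standard parametric-optimization argument in finite dimensions. The key fact is that $\widetilde{\mathcal{J}}_{N}^{(r)}$ converges to $\mathcal{J}_{N}$ uniformly on the compact set $\mathscr{D}_{N}$. Everything else follows from compactness and a sandwich argument, in the spirit of \cite{frecon2018bilevel}.

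\textbf{Step 1 (joint continuity of the forward map).} Write the $k$-th layer as
\[
x^{N,k+1}={\rm prox}_{\rho_k\mathcal{R}}\big(x^{N,k}-h_N\alpha^{N,k}(A^{N,k})^{\top}(A^{N,k}x^{N,k}-b)\big),\qquad \rho_k=h_N\alpha^{N,k}\lambda^{N,k}.
\]
When $\rho_k=0$ the prox is the identity. I will show that the map
\[
(A^{N,:},\alpha^{N,:},\lambda^{N,:},x^{0},b)\mapsto x^{N,N}
\]
is jointly continuous. The argument is by induction on $k$, using three ingredients:
\begin{itemize}
\item continuity of the prox in $\rho$ from Lemma \ref{lemma-perturb-of-prox};
\item nonexpansiveness of ${\rm prox}_{\rho\mathcal{R}}$ in its argument, which makes continuity in $(\rho,z)$ joint;
\item continuity at $\rho=0$, for which I will use $\|{\rm prox}_{\rho\mathcal R}(z)-z\|_2\le \rho\sup\{\|g\|_2:g\in\partial\mathcal R(B)\}$ on bounded sets $B$. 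This bound is finite because $\mathcal R$ is finite and convex.
\end{itemize}
Since $\mathcal{L}$ is continuous and the regularizers $\mathcal{H}_N^{(i)}$ do not depend on $(x^0,b,y)$, the function
\[
F(d,x^0,b,y):=\mathcal{L}(x^{N,N}(d,x^0,b);y)+\sum_i\beta_i\mathcal{H}^{(i)}_N(\cdot)
\]
is jointly continuous on $\mathscr{D}_{N}\times\mathbb{R}^n\times\mathbb{R}^m\times\mathbb{R}^n$.

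\textbf{Step 2 (uniform convergence).} Restrict $F$ to the compact set
\[
\mathscr{D}_{N}\times K,\qquad K:=\{(x^0,b,y)\}\cup\{((x^0)^{(r)},b^{(r)},y^{(r)})\}_{r}.
\]
Here $\mathscr{D}_N$ is closed and bounded by hypothesis, and $K$ is compact because the sequence converges. On this set $F$ is uniformly continuous, so
\[
\varepsilon_r:=\sup_{d\in\mathscr{D}_N}\big|\widetilde{\mathcal{J}}_N^{(r)}(d)-\mathcal{J}_N(d)\big|\to0 .
\]
I expect this to be the only step with real content: it needs joint continuity including the degenerate $\alpha\lambda=0$ case, and compactness of the data set to upgrade pointwise convergence to uniform convergence.

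\textbf{Step 3 (conclusions).} For (3), use the elementary inequality
\[
\Big|\inf_{\mathscr{D}_N}\widetilde{\mathcal{J}}_N^{(r)}-\inf_{\mathscr{D}_N}\mathcal{J}_N\Big|\le\varepsilon_r .
\]
For (1), the minimizers $d^{(r)}$ lie in the compact set $\mathscr{D}_N$, so by Bolzano–Weierstrass a subsequence converges to some $\bar d\in\mathscr{D}_N$, using closedness. Then
\[
\mathcal{J}_N(\bar d)=\lim\mathcal{J}_N(d^{(r)})\le\lim\big(\widetilde{\mathcal{J}}_N^{(r)}(d^{(r)})+\varepsilon_r\big)=\inf\mathcal{J}_N,
\]
where the first equality is continuity of $\mathcal{J}_N$ and the last uses (3). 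Hence $\bar d\in S_N$, and the same argument applies to every cluster point.

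For (2), argue by contradiction. Suppose that along a subsequence the distance to $S_N$ is at least $\delta>0$. Extract a further convergent subsequence. By (1) its limit lies in $S_N$, which contradicts the $\delta$-separation. All norms on the finite-dimensional space are equivalent, so this works for every $\ell^p$, $p\in[1,+\infty]$.
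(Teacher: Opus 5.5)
Your proof is correct. It has the same skeleton as the paper's: uniform convergence of $\widetilde{\mathcal{J}}_N^{(r)}$ to $\mathcal{J}_N$ on $\mathscr{D}_N$, then compactness and a contradiction argument. Where you differ is in how you obtain the uniform convergence.

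The paper imports the forward Stability Theorem 4.1 of \cite{lin2025deep}. That result keeps all terminal states in a fixed ball and gives $\|x^{N,N}(d;(x^{0})^{(r)},b^{(r)})-x^{N,N}(d;x^{0},b)\|_{2}\le\widetilde{c}_{1}\|(x^{0})^{(r)}-x^{0}\|_{2}+\widetilde{c}_{2}\|b^{(r)}-b\|_{2}$, with constants uniform in $d\in\mathscr{D}_N$. The paper then uses uniform continuity of $\mathcal{L}$ on a compact set. You instead prove joint continuity of $F$ in $(d,x^{0},b,y)$ and apply Heine--Cantor on $\mathscr{D}_N\times K$.

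What your route buys:
\begin{itemize}
\item It is self-contained, needing only Lemma~\ref{lemma-perturb-of-prox} and nonexpansiveness of the prox.
\item Your bound at $\rho=0$ explicitly handles layers with $\alpha^{N,k}\lambda^{N,k}=0$. Lemma~\ref{lemma-perturb-of-prox} is stated only for $\rho>0$, and the paper does not address this case when it takes continuity of $\mathcal{J}_N$ from Theorem~\ref{thm-existence-of-solution-to-optimal-control-discrete}.
\end{itemize}

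What the paper's route buys:
\begin{itemize}
\item It is quantitative in the data perturbation.
\item Its uniform estimate needs only boundedness of the parameter set, not compactness plus joint continuity in $d$. This is why the same Step 1 carries over almost verbatim to the infinite-dimensional setting of Theorem~\ref{thm-stability-control-problem-continuous-all}.
\end{itemize}

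Finally, your one-line proof of (3) via $|\inf f-\inf g|\le\sup|f-g|$ is more direct than the paper's. The paper first identifies the limit of the optimal values along subsequences and then argues by contradiction.
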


\subsection{The stability of the learning problem \eqref{eq-continuous-optimal-control-problem-single} of the related deep-layer limit system}

The following theorem gives the stability of the deep-layer limit system in terms of the given initial value $x^{0}$, observed data $b$, and label $y$.
Its proof is an infinite dimensional extension of that of \textbf{Theorem} 2.1 in \cite{frecon2018bilevel}.

\begin{theorem}\label{thm-stability-control-problem-continuous-all}
	(Stability of the learning problem of the deep-layer limit system w.r.t. $(x^{0}, b, y)$)
	Suppose the assumptions of the problem $(\mathfrak{Q})$ in \textbf{Theorem} \ref{thm-existence-of-solution-to-optimal-control-continuous} hold and $p \in [1,+\infty)$. 
	Let $\{ ( (x^{0})^{(r)}, b^{(r)}, y^{(r)} ) \}_{r=1}^{+\infty} \subset \mathbb{R}^{n} \times \mathbb{R}^{m} \times \mathbb{R}^{n}$ converge to $(x^{0}, b, y) \in \mathbb{R}^{n} \times \mathbb{R}^{m} \times \mathbb{R}^{n}$. 
	For any given $r \in \mathbb{N}_{+}$, we consider the following perturbed problem $(\widetilde{\mathfrak{Q}}^{(r)})$:
	\begin{equation}\label{eq-continuous-optimal-control-problem-all-r}
		\left\{\begin{aligned}
			& \min_{(\pmb{A}, \pmb{\alpha}, \pmb{\lambda}) \in \pmb{\mathscr{D}}} \Big\{ \widetilde{\mathcal{J}}^{(r)}(\pmb{A}, \pmb{\alpha}, \pmb{\lambda}) := \mathcal{L}((\pmb{x}(T))(\pmb{A}, \pmb{\alpha}, \pmb{\lambda}; (x^{0})^{(r)}, b^{(r)}); y^{(r)}) 
			+ \beta_{1} \mathcal{H}^{(1)}(\pmb{A})  \notag \\
			&\hspace{120pt}+ \beta_{2} \mathcal{H}^{(2)}(\pmb{\alpha})
			+ \beta_{3} \mathcal{H}^{(3)}(\pmb{\lambda}) 
			\Big\}  \\
			& {\rm s.t.} \,\, \vec{0} \in \dot{\pmb{x}}(t) + \pmb{\alpha}(t) (\pmb{A}(t))^{\top} (\pmb{A}(t) \pmb{x}(t) - b^{(r)}) + \pmb{\alpha}(t) \pmb{\lambda}(t) (\partial\mathcal{R})(\pmb{x}(t)), \,\, {\rm a.e.} \,\, t \in [0,T];  \\
			&\quad\,\,\,\, \pmb{x}(0) = (x^{0})^{(r)};
		\end{aligned}\right.
	\end{equation}
	and denote its solution set as $\widetilde{S}^{(r)}$. 
	Let the sequence $\{ (\pmb{A}^{*,(r)}, \pmb{\alpha}^{*,(r)}, \pmb{\lambda}^{*,(r)}) \}_{r=1}^{+\infty}$ satisfy $(\pmb{A}^{*,(r)}, \pmb{\alpha}^{*,(r)}, \pmb{\lambda}^{*,(r)}) \in \widetilde{S}^{(r)}$, $\forall r \in \mathbb{N}_{+}$.
	Then,
	\begin{itemize}
		\item[(1) ] $\{ (\pmb{A}^{*,(r)}, \pmb{\alpha}^{*,(r)}, \pmb{\lambda}^{*,(r)}) \}_{r=1}^{+\infty}$ has a convergent subsequence in $L^{p}$ topology, and its all cluster points of $\{ (\pmb{A}^{*,(r)}, \pmb{\alpha}^{*,(r)}, \pmb{\lambda}^{*,(r)}) \}_{r=1}^{+\infty}$ belong to $S$, i.e., the solution set of the problem $(\mathfrak{Q})$;
			
		\item[(2) ] $\lim_{r \to +\infty} \inf_{(\pmb{A}^{*}, \pmb{\alpha}^{*}, \pmb{\lambda}^{*}) \in S} \|(\pmb{A}^{*,(r)}, \pmb{\alpha}^{*,(r)}, \pmb{\lambda}^{*,(r)}) - (\pmb{A}^{*}, \pmb{\alpha}^{*}, \pmb{\lambda}^{*})\|_{L^{p}([0,T]; \mathbb{R}^ {m \times n} \times \mathbb{R} \times \mathbb{R})} = 0$;
		
		\item[(3) ] the optimal value of $(\widetilde{\mathfrak{Q}}^{(r)})$ converges to that of $(\mathfrak{Q})$ as $r \to +\infty$, that is,
		\begin{align*}
			\inf_{(\pmb{A}, \pmb{\alpha}, \pmb{\lambda}) \in \pmb{\mathscr{D}}} \widetilde{\mathcal{J}}^{(r)} ( \pmb{A}, \pmb{\alpha}, \pmb{\lambda} ) 
			\to \inf_{(\pmb{A}, \pmb{\alpha}, \pmb{\lambda}) \in \pmb{\mathscr{D}}} \mathcal{J}( \pmb{A}, \pmb{\alpha}, \pmb{\lambda} ),  \quad r \to +\infty.
		\end{align*}
	\end{itemize}
\end{theorem}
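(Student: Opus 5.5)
The plan follows the standard scheme for stability of parametric minimization problems, as in \cite{frecon2018bilevel}, applied in $L^{p}$. Compactness comes from \textbf{Lemma} \ref{lemma-Kolmogorov-Riesz-bochner}. Continuity of the objective jointly in the control and in the data $(x^{0},b,y)$ is obtained by repeating Steps 1--4 of the proof of \textbf{Theorem} \ref{thm-existence-of-solution-to-optimal-control-continuous}.

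\textbf{Step 1 (joint continuity).} I will prove the following claim. If $(\pmb{A}^{(r)},\pmb{\alpha}^{(r)},\pmb{\lambda}^{(r)})\subset\pmb{\mathscr{D}}$ converges in $L^{p}$ to $(\pmb{A},\pmb{\alpha},\pmb{\lambda})$ and $((x^{0})^{(r)},b^{(r)},y^{(r)})\to(x^{0},b,y)$, then $\widetilde{\mathcal{J}}^{(r)}(\pmb{A}^{(r)},\pmb{\alpha}^{(r)},\pmb{\lambda}^{(r)})\to\mathcal{J}(\pmb{A},\pmb{\alpha},\pmb{\lambda})$.

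\begin{itemize}
\item[(a)] Since the perturbed data form a convergent sequence, $\|(x^{0})^{(r)}\|_{2}$ and $\|b^{(r)}\|_{2}$ are uniformly bounded. The bound estimate in Step 2 therefore gives uniform constants $M_{0},M_{1},M_{2}$ independent of $r$, hence a uniform $H^{1}$ bound on the states $\pmb{x}^{(r)}$.
\item[(b)] Extracting subsequences, $\pmb{x}^{(r)}\to\pmb{x}^{*}$ uniformly and $\dot{\pmb{x}}^{(r)}\rightharpoonup\dot{\pmb{x}}^{*}$. The measurable selections $\pmb{g}^{(r)}$ constructed as in Step 3-1 satisfy $\pmb{g}^{(r)}\rightharpoonup\pmb{g}^{*}$.
\item[(c)] The strong-convergence estimate in Step 3-2 carries over with $b$ replaced by $b^{(r)}$; only one extra term $\|\pmb{\alpha}^{(r)}(\pmb{A}^{(r)})^{\top}(b^{(r)}-b)\|_{L^{2}}\to0$ appears.
\item[(d)] The argument following \cite{elliott1985convergence} shows that $\pmb{x}^{*}$ solves the limit inclusion with data $(x^{0},b)$; the initial condition passes to the limit by uniform convergence.
\item[(e)] Uniqueness of that solution upgrades subsequence convergence to convergence of the full sequence.
\item[(f)] Finally, $\mathcal{L}(\pmb{x}^{(r)}(T);y^{(r)})\to\mathcal{L}(\pmb{x}^{*}(T);y)$ by joint continuity of $\mathcal{L}$ in (A4), and the $\mathcal{H}^{(i)}$ terms converge by (A5) exactly as in Step 4.
\end{itemize}

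I expect Step 1 to be the main technical obstacle. It is the only place where the infinite-dimensional limit of the inclusion must be identified with perturbed data. Most of it is copied from the existence proof, but I must check carefully that all constants are uniform in $r$ and that the weak limit of the selections lies in $\partial\mathcal{R}(\pmb{x}^{*}(t))$.

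\textbf{Step 2 (optimal values, item (3)).} Each $\widetilde{S}^{(r)}$ is nonempty by \textbf{Theorem} \ref{thm-existence-of-solution-to-optimal-control-continuous} applied with the perturbed data. For the upper bound, fix any $(\pmb{A}^{*},\pmb{\alpha}^{*},\pmb{\lambda}^{*})\in S$. Since $\widetilde{\mathcal{J}}^{(r)}(\pmb{A}^{*,(r)},\pmb{\alpha}^{*,(r)},\pmb{\lambda}^{*,(r)})\le\widetilde{\mathcal{J}}^{(r)}(\pmb{A}^{*},\pmb{\alpha}^{*},\pmb{\lambda}^{*})$, Step 1 with a constant control sequence gives
\begin{align*}
\limsup_{r\to+\infty}\inf_{\pmb{\mathscr{D}}}\widetilde{\mathcal{J}}^{(r)}\le\mathcal{J}(\pmb{A}^{*},\pmb{\alpha}^{*},\pmb{\lambda}^{*})=\inf_{\pmb{\mathscr{D}}}\mathcal{J}.
\end{align*}
For the lower bound, take any subsequence. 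By conditions (1)--(3) of \textbf{Theorem} \ref{thm-existence-of-solution-to-optimal-control-continuous} and the Bochner version of \textbf{Lemma} \ref{lemma-Kolmogorov-Riesz-bochner}, the minimizers $(\pmb{A}^{*,(r)},\pmb{\alpha}^{*,(r)},\pmb{\lambda}^{*,(r)})$ have a further subsequence converging in $L^{p}$ to some $(\overline{\pmb{A}},\overline{\pmb{\alpha}},\overline{\pmb{\lambda}})\in\pmb{\mathscr{D}}$; membership in $\pmb{\mathscr{D}}$ uses closedness. Step 1 then gives
\begin{align*}
\lim\widetilde{\mathcal{J}}^{(r)}(\pmb{A}^{*,(r)},\pmb{\alpha}^{*,(r)},\pmb{\lambda}^{*,(r)})=\mathcal{J}(\overline{\pmb{A}},\overline{\pmb{\alpha}},\overline{\pmb{\lambda}})\ge\inf_{\pmb{\mathscr{D}}}\mathcal{J}.
\end{align*}
Since every subsequence has a further subsequence along which the upper and lower bounds coincide, item (3) follows.

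\textbf{Step 3 (items (1) and (2)).} The same compactness argument shows that a convergent subsequence exists. For any cluster point $(\overline{\pmb{A}},\overline{\pmb{\alpha}},\overline{\pmb{\lambda}})$, Steps 1 and 2 give $\mathcal{J}(\overline{\pmb{A}},\overline{\pmb{\alpha}},\overline{\pmb{\lambda}})=\inf_{\pmb{\mathscr{D}}}\mathcal{J}$, so the cluster point belongs to $S$; this is item (1). For item (2), argue by contradiction. Suppose that along a subsequence the $L^{p}$-distance to $S$ stays at least $\varepsilon>0$. Extract a further $L^{p}$-convergent subsequence; by item (1) its limit lies in $S$, so the distance tends to zero, which is a contradiction.
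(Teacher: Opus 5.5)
Your proposal is correct, but it reaches the result by a different route from the paper's.

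\textbf{The paper's route.} The paper does not re-run the closure argument of \textbf{Theorem} \ref{thm-existence-of-solution-to-optimal-control-continuous}. It splits the needed continuity into two decoupled facts, both taken from the quantitative forward stability theorem \cite[Thm.4.2]{lin2025deep}.
(i) $\sup_{\pmb{\mathscr{D}}}|\widetilde{\mathcal{J}}^{(r)}-\mathcal{J}|\to 0$. This comes from a Lipschitz bound $\|\pmb{x}(T)(\cdot;(x^{0})^{(r)},b^{(r)})-\pmb{x}(T)(\cdot;x^{0},b)\|_{2}\le\widetilde{c}_{1}\|(x^{0})^{(r)}-x^{0}\|_{2}+\widetilde{c}_{2}\|b^{(r)}-b\|_{2}$, uniform over the $L^{\infty}$-bounded set $\pmb{\mathscr{D}}$, combined with uniform continuity of $\mathcal{L}$ on a compact set.
(ii) $\mathcal{J}$ is $L^{p}$-continuous at fixed data. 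This comes from a Lipschitz bound of $\pmb{x}(T)$ in the controls.
The paper then proves (1) first, via the chain: $\mathcal{J}$ at the cluster point equals the limit of $\mathcal{J}$ along the subsequence (by (ii)), which equals $\lim\widetilde{\mathcal{J}}^{(r_i)}$ (by (i)), which is at most $\lim\widetilde{\mathcal{J}}^{(r_i)}(\pmb{A},\pmb{\alpha},\pmb{\lambda})=\mathcal{J}(\pmb{A},\pmb{\alpha},\pmb{\lambda})$. Items (2) and (3) follow by a subsequence/contradiction argument.

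\textbf{Your route.} You prove a single joint continuous-convergence statement using compactness, identification of the limit inclusion, and uniqueness of the limit trajectory. You then obtain (3) before (1), with an upper bound at a fixed element of $S$ and a lower bound from compactness.

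\textbf{What each buys.} Given \cite[Thm.4.2]{lin2025deep}, the paper's Step 1 is short. Fact (i) even gives (3) immediately, via $|\inf_{\pmb{\mathscr{D}}}\widetilde{\mathcal{J}}^{(r)}-\inf_{\pmb{\mathscr{D}}}\mathcal{J}|\le\sup_{\pmb{\mathscr{D}}}|\widetilde{\mathcal{J}}^{(r)}-\mathcal{J}|$, with no compactness needed, and with a rate when $\mathcal{L}$ is Lipschitz.

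Your argument is self-contained within this paper. It uses only qualitative well-posedness: existence, uniqueness, the a priori bounds, and the Elliott-type identification $\pmb{g}^{*}(t)\in(\partial\mathcal{R})(\pmb{x}^{*}(t))$ already needed for existence. So it would still work where only continuous, not Lipschitz, dependence on data and controls is available. The cost is that this identification becomes your heaviest step, and your lower bound for (3) relies on the $L^{p}$-compactness of $\pmb{\mathscr{D}}$.

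Since $\pmb{\mathscr{D}}$ is in fact $L^{p}$-compact under these hypotheses, your joint continuity is equivalent to (i) together with (ii).
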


\begin{proof}[Proof. ]
	\textbf{Step 1:} We first prove the following two preliminary results:
	\begin{itemize}
		\item[(i) ] $\lim_{r \to +\infty} \sup_{(\pmb{A}, \pmb{\alpha}, \pmb{\lambda}) \in \pmb{\mathscr{D}}} |\widetilde{\mathcal{J}}^{(r)} (\pmb{A}, \pmb{\alpha}, \pmb{\lambda}) - \mathcal{J}(\pmb{A}, \pmb{\alpha}, \pmb{\lambda}) | = 0$.
		
		\item[(ii) ] If the sequence $\{ (\pmb{A}^{(l)}, \pmb{\alpha}^{(l)}, \pmb{\lambda}^{(l)}) \}_{l=1}^{+\infty} \subset \pmb{\mathscr{D}}$ satisfies $\|(\pmb{A}^{(l)}, \pmb{\alpha}^{(l)}, \pmb{\lambda}^{(l)}) - (\widetilde{\pmb{A}}, \widetilde{\pmb{\alpha}}, \widetilde{\pmb{\lambda}})\|_{L^{p}([0,T]; \mathbb{R}^{m \times n} \times \mathbb{R} \times \mathbb{R})} \to 0$ as $l \to +\infty$, then $\lim_{l \to +\infty} \mathcal{J}(\pmb{A}^{(l)}, \pmb{\alpha}^{(l)}, \pmb{\lambda}^{(l)}) = \mathcal{J}(\widetilde{\pmb{A}}, \widetilde{\pmb{\alpha}}, \widetilde{\pmb{\lambda}})$.
	\end{itemize}   
	
	For clarity, we here denote $(\pmb{x}(T))(\pmb{A}, \pmb{\alpha}, \pmb{\lambda}; x^{0}, b)$ as the value at $T$ of the unique solution $\pmb{x}$ of the differential inclusion \eqref{eq-system-inclusion-continuous} determined by $(\pmb{A}, \pmb{\alpha}, \pmb{\lambda})$ with the initial value $x^{0}$ and the observed  data $b$. 
	
	For (i), since $\pmb{\mathscr{D}}$ is bounded in $L^{\infty}([0,T];\mathbb{R}^{m \times n} \times \mathbb{R} \times \mathbb{R})$, 
	\cite[Thm.4.2]{lin2025deep} with $p = +\infty$ provides that 
	\begin{align*}
		\cup_{r=1}^{+\infty} \{ (\pmb{x}(T))(\pmb{A}, \pmb{\alpha}, \pmb{\lambda}; (x^{0})^{(r)}, b^{(r)}): (\pmb{A}, \pmb{\alpha}, \pmb{\lambda}) \in \pmb{\mathscr{D}} \} 
		\bigcup \{ (\pmb{x}(T))(\pmb{A}, \pmb{\alpha}, \pmb{\lambda}; x^{0}, b): (\pmb{A}, \pmb{\alpha}, \pmb{\lambda}) \in \pmb{\mathscr{D}} \}
	\end{align*} 
	is a subset of a compact set (exactly a closed bounded ball) of $\mathbb{R}^{n}$. 
	Moreover, 
	\cite[Thm.4.2]{lin2025deep} 
	and the boundedness of $\pmb{\mathscr{D}}$ indicate that, there exist two constants $\widetilde{c}_{1} \geq 0, \widetilde{c}_{2} \geq 0$ both independent of $r$, $\pmb{A}$, $\pmb{\alpha}$, $\pmb{\lambda}$ such that for any $(\pmb{A}, \pmb{\alpha}, \pmb{\lambda}) \in \pmb{\mathscr{D}}$,
	\begin{align*}
		\|(\pmb{x}(T))(\pmb{A}, \pmb{\alpha}, \pmb{\lambda}; (x^{0})^{(r)}, b^{(r)}) - (\pmb{x}(T))(\pmb{A}, \pmb{\alpha}, \pmb{\lambda}; x^{0}, b)\|_{2}  
		\leq \widetilde{c}_{1} \|(x^{0})^{(r)} - x^{0}\|_{2}
		+ \widetilde{c}_{2} \|b^{(r)} - b\|_{2}.  
	\end{align*}
	Hence, for any $\delta > 0$, there exists $r_{1,\delta} \in \mathbb{N}_{+}$ such that for every integer $r \geq r_{1,\delta}$, 
	\begin{align*}
		\sup_{(\pmb{A}, \pmb{\alpha}, \pmb{\lambda}) \in \pmb{\mathscr{D}}} \|(\pmb{x}(T))(\pmb{A}, \pmb{\alpha}, \pmb{\lambda}; (x^{0})^{(r)}, b^{(r)}) - (\pmb{x}(T))(\pmb{A}, \pmb{\alpha}, \pmb{\lambda}; x^{0}, b)\|_{2} 
		\leq \frac{1}{3} \delta.
	\end{align*}
	Similarly, $\lim_{r \to +\infty} y^{(r)} = y$ indicates that for any $\delta > 0$, there exists $r_{2,\delta} \in \mathbb{N}_{+}$ such that for every integer $r \geq r_{2,\delta}$, we have
	\begin{align*}
		\|y^{(r)} - y\|_{2} \leq \frac{1}{3} \delta.
	\end{align*}
	These two results indicate that for any $\delta > 0$, there exists $r_{3,\delta} := \max\{ r_{1,\delta}, r_{2,\delta} \} \in \mathbb{N}_{+}$ such that for every integer $r \geq r_{3,\delta}$, 
	\begin{align*}
		\sup_{(\pmb{A}, \pmb{\alpha}, \pmb{\lambda}) \in \pmb{\mathscr{D}}} \|(\pmb{x}(T))(\pmb{A}, \pmb{\alpha}, \pmb{\lambda}; (x^{0})^{(r)}, b^{(r)}) - (\pmb{x}(T))(\pmb{A}, \pmb{\alpha}, \pmb{\lambda}; x^{0}, b)\|_{2} + \|y^{(r)} - y\|_{2} 
		\leq \frac{2}{3} \delta
		< \delta.
	\end{align*}
	By further noting the continuity assumption of $\mathcal{L}$ from (A4) and thus the uniform continuity of $\mathcal{L}$ over compact sets, we obtain that for any $\varepsilon > 0$, there exists $r_{3,\delta(\varepsilon)} \equiv \max\{ r_{1,\delta(\varepsilon)}, r_{2,\delta(\varepsilon)} \} \in \mathbb{N}_{+}$ such that for every integer $r \geq r_{3,\delta(\varepsilon)}$,
	\begin{align*}
		& \sup_{(\pmb{A}, \pmb{\alpha}, \pmb{\lambda}) \in \pmb{\mathscr{D}}} |\widetilde{\mathcal{J}}^{(r)}(\pmb{A}, \pmb{\alpha}, \pmb{\lambda}) - \mathcal{J}(\pmb{A}, \pmb{\alpha}, \pmb{\lambda})|  \\
		=& \sup_{(\pmb{A}, \pmb{\alpha}, \pmb{\lambda}) \in \pmb{\mathscr{D}}} \left| \mathcal{L}\left( (\pmb{x}(T))(\pmb{A}, \pmb{\alpha}, \pmb{\lambda}; (x^{0})^{(r)}, (b)^{(r)}); (y)^{(r)} \right)
		+ \beta_{1} \mathcal{H}^{(1)}(\pmb{A})		
		+ \beta_{2} \mathcal{H}^{(2)}(\pmb{\alpha})
		+ \beta_{3} \mathcal{H}^{(3)}(\pmb{\lambda})  
		\right. \\
		&\hspace{50pt}\left.- \mathcal{L}\left( (\pmb{x}(T))(\pmb{A}, \pmb{\alpha}, \pmb{\lambda}; x^{0}, b); y \right) 
		- \beta_{1} \mathcal{H}^{(1)}(\pmb{A})		
		- \beta_{2} \mathcal{H}^{(2)}(\pmb{\alpha})
		- \beta_{3} \mathcal{H}^{(3)}(\pmb{\lambda})
		\right|  \\
		=& \sup_{(\pmb{A}, \pmb{\alpha}, \pmb{\lambda}) \in \pmb{\mathscr{D}}} \left| \mathcal{L}\left( (\pmb{x}(T))(\pmb{A}, \pmb{\alpha}, \pmb{\lambda}; (x^{0})^{(r)}, (b)^{(r)}); (y)^{(r)} \right) - \mathcal{L}\left( (\pmb{x}(T))(\pmb{A}, \pmb{\alpha}, \pmb{\lambda}; x^{0}, b); y \right) \right|  \\
		<& \varepsilon,
	\end{align*}  
	which proves (i).
	
	The proof of (ii) is similar to that of (i). 
	By 
	the assumption (A5) of $\psi$, we see that 
	\begin{align*}
		|\mathcal{H}^{(1)}(\pmb{A}^{(l)}) - \mathcal{H}^{(1)}(\widetilde{\pmb{A}})|  
		\leq& C_{\pmb{\mathscr{D}}} \|\pmb{A}^{(l)} - \widetilde{\pmb{A}}\|_{L^{p}([0,T]; \mathbb{R}^{m \times n})}
		\to 0, \quad l \to +\infty,
	\end{align*}
	where 
	$C_{\pmb{\mathscr{D}}} > 0$ is independent of $\pmb{A}^{(l)}$ and $\widetilde{\pmb{A}}$.
	Similarly, $|\mathcal{H}^{(2)}(\pmb{\alpha}^{(l)}) - \mathcal{H}^{(2)}(\widetilde{\pmb{\alpha}})| \leq C_{\pmb{\mathscr{D}}} \|\pmb{\alpha}^{(l)} - \widetilde{\pmb{\alpha}}\|_{L^{p}([0,T])}$ and $|\mathcal{H}^{(3)}(\pmb{\lambda}^{(l)}) - \mathcal{H}^{(3)}(\widetilde{\pmb{\lambda}})| \leq C_{\pmb{\mathscr{D}}} \|\pmb{\lambda}^{(l)} - \widetilde{\pmb{\lambda}}\|_{L^{p}([0,T])}$.
	Hence, for any $\varepsilon > 0$, there exists $l_{1,\varepsilon} \in \mathbb{N}_{+}$ such that for every $l \geq l_{1,\varepsilon}$,
	\begin{align*}
		\beta_{1} |\mathcal{H}^{(1)}(\pmb{A}^{(l)}) - \mathcal{H}^{(1)}(\widetilde{\pmb{A}})|  
		+ \beta_{2} |\mathcal{H}^{(2)}(\pmb{\alpha}^{(l)}) - \mathcal{H}^{(2)}(\widetilde{\pmb{\alpha}})|  
		+ \beta_{3} |\mathcal{H}^{(3)}(\pmb{\lambda}^{(l)}) - \mathcal{H}^{(3)}(\widetilde{\pmb{\lambda}})|
		< \frac{1}{2} \varepsilon.
	\end{align*}
	Besides, by \cite[Thm.4.2]{lin2025deep}
	and the boundedness of $\pmb{\mathscr{D}}$, we know
	\begin{align*}
		& \|(\pmb{x}(T))(\pmb{A}^{(l)}, \pmb{\alpha}^{(l)}, \pmb{\lambda}^{(l)}; x^{0}, b) - (\pmb{x}(T))(\pmb{A}, \pmb{\alpha}, \pmb{\lambda}; x^{0}, b)\|_{2}   \\ 
		\leq& \overline{c} (\|\pmb{A}^{(l)} - \widetilde{\pmb{A}}\|_{L^{p}([0,T];\mathbb{R}^{m \times n})} 
		+ \|\pmb{\alpha}^{(l)} - \widetilde{\pmb{\alpha}}\|_{L^{p}([0,T])} 
		+ \|\pmb{\lambda}^{(l)} - \widetilde{\pmb{\lambda}}\|_{L^{p}([0,T])}), 
	\end{align*} 
	where $\overline{c} > 0$ is a constant independent of $l$, $\pmb{A}^{(l)}$, $\widetilde{\pmb{A}}$, $\pmb{\alpha}^{(l)}$, $\widetilde{\pmb{\alpha}}$, $\pmb{\lambda}^{(l)}$, $\widetilde{\pmb{\lambda}}$. 
	Hence, for any $\delta > 0$, there exists $l_{2,\delta} \in \mathbb{N}_{+}$ such that for every integer $l \geq l_{2,\delta}$, 
	\begin{align*}
		\|(\pmb{x}(T))(\pmb{A}^{(l)}, \pmb{\alpha}^{(l)}, \pmb{\lambda}^{(l)}; x^{0}, b) - (\pmb{x}(T))(\widetilde{\pmb{A}}, \widetilde{\pmb{\alpha}}, \widetilde{\pmb{\lambda}}; x^{0}, b)\|_{2} 
		< \delta.
	\end{align*}
	By using the continuity assumption of $\mathcal{L}$ in (A4) and the above result, we obtain that for any $\varepsilon > 0$, there exists $l_{3,\varepsilon} := \max\{ l_{1,\varepsilon}, l_{2,\delta(\varepsilon)} \} \in \mathbb{N}_{+}$, such that for every $l \geq l_{3,\varepsilon}$,
	\begin{align*}
		& |\mathcal{J}(\pmb{A}^{(l)}, \pmb{\alpha}^{(l)}, \pmb{\lambda}^{(l)}) - \mathcal{J}(\widetilde{\pmb{A}}, \widetilde{\pmb{\alpha}}, \widetilde{\pmb{\lambda}})|  \\
		=& \left| \mathcal{L}\left( (\pmb{x}(T))(\pmb{A}^{(l)}, \pmb{\alpha}^{(l)}, \pmb{\lambda}^{(l)}; x^{0}, b); y \right) 
		+ \beta_{1} \mathcal{H}^{(1)}(\pmb{A}^{(l)})
		+ \beta_{2} \mathcal{H}^{(2)}(\pmb{\alpha}^{(l)})
		+ \beta_{3} \mathcal{H}^{(3)}(\pmb{\lambda}^{(l)})
		\right. \\
		&\left.- \mathcal{L}\left( (\pmb{x}(T))(\widetilde{\pmb{A}}, \widetilde{\pmb{\alpha}}, \widetilde{\pmb{\lambda}}; x^{0}, b); y \right) 
		- \beta_{1} \mathcal{H}^{(1)}(\widetilde{\pmb{A}})
		- \beta_{2} \mathcal{H}^{(2)}(\widetilde{\pmb{\alpha}})
		- \beta_{3} \mathcal{H}^{(3)}(\widetilde{\pmb{\lambda}})
		\right|  \\
		\leq& \left| \mathcal{L}\left( (\pmb{x}(T))(\pmb{A}^{(l)}, \pmb{\alpha}^{(l)}, \pmb{\lambda}^{(l)}; x^{0}, b); y \right) - \mathcal{L}\left( (\pmb{x}(T))(\widetilde{\pmb{A}}, \widetilde{\pmb{\alpha}}, \widetilde{\pmb{\lambda}}; x^{0}, b); y \right) \right|  \\
		&+ \beta_{1} |\mathcal{H}^{(1)}(\pmb{A}^{(l)}) - \mathcal{H}^{(1)}(\widetilde{\pmb{A}})|  
		+ \beta_{2} |\mathcal{H}^{(2)}(\pmb{\alpha}^{(l)}) - \mathcal{H}^{(2)}(\widetilde{\pmb{\alpha}})|  
		+ \beta_{3} |\mathcal{H}^{(3)}(\pmb{\lambda}^{(l)}) - \mathcal{H}^{(3)}(\widetilde{\pmb{\lambda}})|  \\
		<& \frac{1}{2} \varepsilon + \frac{1}{2} \varepsilon
		= \varepsilon,
	\end{align*} 
	which proves (ii).
	
	\textbf{Step 2:} In this step, we will prove (1).
	The existence of the convergent subsequence of $\{ (\pmb{A}^{*,(r)}, \pmb{\alpha}^{*,(r)}, \pmb{\lambda}^{*,(r)}) \}_{r=1}^{+\infty}$ in $L^{p}$ topology is obvious by the uniform boundedness of $\pmb{\mathscr{D}}$, the equi-continuity of $\pmb{\mathscr{D}}$ in $L^{p}$ sense, and a Bochner version of \textbf{Lemma} \ref{lemma-Kolmogorov-Riesz-bochner}. 
	 
	We next show that, in $L^{p}$ topology, every cluster point $(\pmb{A}^{*,*}, \pmb{\alpha}^{*,*}, \pmb{\lambda}^{*,*})$ of $\{ (\pmb{A}^{*,(r)}, \pmb{\alpha}^{*,(r)}, \pmb{\lambda}^{*,(r)}) \}_{r=1}^{+\infty}$ belongs to $S$.
	Assume that $(\pmb{A}^{*,(r_{i})}, \pmb{\alpha}^{*,(r_{i})}, \pmb{\lambda}^{*,(r_{i})}) \overset{L^{p}}{\to} (\pmb{A}^{*,*}, \pmb{\alpha}^{*,*}, \pmb{\lambda}^{*,*})$ as $i \to +\infty$. 
	We see that $(\pmb{A}^{*,*}, \pmb{\alpha}^{*,*}, \pmb{\lambda}^{*,*}) \in \pmb{\mathscr{D}}$.
	Using again the continuity of $\mathcal{L}$ in (A4), we obtain
	\begin{align*}
		& \mathcal{J}(\pmb{A}^{*,*}, \pmb{\alpha}^{*,*}, \pmb{\lambda}^{*,*}) \notag \\
		=& \mathcal{L}\Big( (\pmb{x}(T))(\pmb{A}^{*,*}, \pmb{\alpha}^{*,*}, \pmb{\lambda}^{*,*}; x^{0}, b); y \Big)
		+ \beta_{1} \mathcal{H}^{(1)}(\pmb{A}^{*,*})
		+ \beta_{2} \mathcal{H}^{(2)}(\pmb{\alpha}^{*,*})
		+ \beta_{3} \mathcal{H}^{(3)}(\pmb{\lambda}^{*,*})
		\notag \\
		=& \lim\limits_{i \to +\infty} \Big[ \mathcal{L}\Big( (\pmb{x}(T))(\pmb{A}^{*,(r_{i})}, \pmb{\alpha}^{*,(r_{i})}, \pmb{\lambda}^{*,(r_{i})}; x^{0}, b); y \Big)  
		+ \beta_{1} \mathcal{H}^{(1)}(\pmb{A}^{*,(r_{i})})  \notag \\
		&\hspace{25pt} + \beta_{2} \mathcal{H}^{(2)}(\pmb{\alpha}^{*,(r_{i})})
		+ \beta_{3} \mathcal{H}^{(3)}(\pmb{\lambda}^{*,(r_{i})})
		\Big]
		\notag \\ 
		=& \lim\limits_{i \to +\infty} \Big[ \mathcal{L}((\pmb{x}(T))(\pmb{A}^{*,(r_{i})}, \pmb{\alpha}^{*,(r_{i})}, \pmb{\lambda}^{*,(r_{i})}; (x^{0})^{(r_{i})}, (b)^{(r_{i})}); (y)^{(r_{i})})  
		+ \beta_{1} \mathcal{H}^{(1)}(\pmb{A}^{*,(r_{i})})  \notag \\
		&\hspace{25pt} + \beta_{2} \mathcal{H}^{(2)}(\pmb{\alpha}^{*,(r_{i})})
		+ \beta_{3} \mathcal{H}^{(3)}(\pmb{\lambda}^{*,(r_{i})})
		\Big]
		\notag \\
		\leq& \lim\limits_{i \to +\infty} \Big[ \mathcal{L}((\pmb{x}(T))(\pmb{A}, \pmb{\alpha}, \pmb{\lambda}; (x^{0})^{(r_{i})}, (b)^{(r_{i})}); (y)^{(r_{i})})  
		+ \beta_{1} \mathcal{H}^{(1)}(\pmb{A})
		+ \beta_{2} \mathcal{H}^{(2)}(\pmb{\alpha})
		+ \beta_{3} \mathcal{H}^{(3)}(\pmb{\lambda})
		\Big],
		\notag \\
		&\hspace{20pt} \forall (\pmb{A}, \pmb{\alpha}, \pmb{\lambda}) \in \pmb{\mathscr{D}},
		\notag \\
		=& \mathcal{L}((\pmb{x}(T))(\pmb{A}, \pmb{\alpha}, \pmb{\lambda}; x^{0}, b); y)  
		+ \beta_{1} \mathcal{H}^{(1)}(\pmb{A})
		+ \beta_{2} \mathcal{H}^{(2)}(\pmb{\alpha})
		+ \beta_{3} \mathcal{H}^{(3)}(\pmb{\lambda}), 
		\quad
		\forall (\pmb{A}, \pmb{\alpha}, \pmb{\lambda}) \in \pmb{\mathscr{D}},
		\notag \\
		=& \mathcal{J}(\pmb{A}, \pmb{\alpha}, \pmb{\lambda}), \quad \forall (\pmb{A}, \pmb{\alpha}, \pmb{\lambda}) \in \pmb{\mathscr{D}},	
	\end{align*}
	where the second equality is due to \textbf{Step 1} (ii), the third and the fifth equalities are both due to \textbf{Step 1} (i), and the fourth inequality is from $(\pmb{A}^{*,(r_{i})}, \pmb{\alpha}^{*,(r_{i})}, \pmb{\lambda}^{*,(r_{i})}) \in \widetilde{S}^{(r_{i})}$.
	This indicates $(\pmb{A}^{*,*}, \pmb{\alpha}^{*,*}, \pmb{\lambda}^{*,*}) \in S$, and 
	one has, similar to the derivation for Eq.\eqref{eq-perturbed-optimal-value-converge-discrete}, that
	\begin{align}\label{eq-perturbed-optimal-value-converge-continuous}
		\lim\limits_{i \to +\infty} \widetilde{\mathcal{J}}^{(r_{i})}(\pmb{A}^{*,(r_{i})}, \pmb{\alpha}^{*,(r_{i})}, \pmb{\lambda}^{*,(r_{i})})
		= \inf_{(\pmb{A}, \pmb{\alpha}, \pmb{\lambda}) \in \pmb{\mathscr{D}}} \mathcal{J}(\pmb{A}, \pmb{\alpha}, \pmb{\lambda}).
	\end{align}
	
	\textbf{Step 3:} We now prove (2) and (3) by contradiction. 
	To prove (2), we assume that the sequence 
	\begin{align*}
		\inf\limits_{(\pmb{A}^{*}, \pmb{\alpha}^{*}, \pmb{\lambda}^{*}) \in S} \|(\pmb{A}^{*,(r)}, \pmb{\alpha}^{*,(r)}, \pmb{\lambda}^{*,(r)}) - (\pmb{A}^{*}, \pmb{\alpha}^{*}, \pmb{\lambda}^{*})\|_{L^{p}([0,T]; \mathbb{R}^{m \times n} \times \mathbb{R} \times \mathbb{R})} \nrightarrow 0.
	\end{align*}
	Then there exist an $\varepsilon_{0} > 0$ and a subsequence 
	\begin{align*}
		\Big\{ \inf_{(\pmb{A}^{*}, \pmb{\alpha}^{*}, \pmb{\lambda}^{*}) \in S} \|(\pmb{A}^{*,(r_{i})}, \pmb{\alpha}^{*,(r_{i})}, \pmb{\lambda}^{*,(r_{i})}) - (\pmb{A}^{*}, \pmb{\alpha}^{*}, \pmb{\lambda}^{*})\|_{L^{p}([0,T]; \mathbb{R}^{m \times n} \times \mathbb{R} \times \mathbb{R})} \Big\}_{i=1}^{+\infty}
	\end{align*}
	such that for every $i \in \mathbb{N}_{+}$,
	\begin{align}\label{eq-dist-continuous-all}
		\inf_{(\pmb{A}^{*}, \pmb{\alpha}^{*}, \pmb{\lambda}^{*}) \in S} \|(\pmb{A}^{*,(r_{i})}, \pmb{\alpha}^{*,(r_{i})}, \pmb{\lambda}^{*,(r_{i})}) - (\pmb{A}^{*}, \pmb{\alpha}^{*}, \pmb{\lambda}^{*})\|_{L^{p}([0,T]; \mathbb{R}^{m \times n} \times \mathbb{R} \times \mathbb{R})}
		\geq \varepsilon_{0}.
	\end{align} 
	By \textbf{Step 2}, we can further extract a convergent subsequence $\{ (\pmb{A}^{*,(r_{i_{j}})}, \pmb{\alpha}^{*,(r_{i_{j}})}, \pmb{\lambda}^{*,(r_{i_{j}})}) \}_{j=1}^{+\infty}$ from $\{ (\pmb{A}^{*,(r_{i})}, \pmb{\alpha}^{*,(r_{i})}, \pmb{\lambda}^{*,(r_{i})}) \}_{i=1}^{+\infty}$ in $L^{p}$ topology with a limit point $(\pmb{A}^{*,+}, \pmb{\alpha}^{*,+}, \pmb{\lambda}^{*,+}) \in S$, and then 
	\begin{align*}
		& \inf_{(\pmb{A}^{*}, \pmb{\alpha}^{*}, \pmb{\lambda}^{*}) \in S} \|(\pmb{A}^{*,(r_{i_{j}})}, \pmb{\alpha}^{*,(r_{i_{j}})}, \pmb{\lambda}^{*,(r_{i_{j}})}) - (\pmb{A}^{*}, \pmb{\alpha}^{*}, \pmb{\lambda}^{*})\|_{L^{p}([0,T]; \mathbb{R}^{m \times n} \times \mathbb{R} \times \mathbb{R})}  \\
		\leq& \|(\pmb{A}^{*,(r_{i_{j}})}, \pmb{\alpha}^{*,(r_{i_{j}})}, \pmb{\lambda}^{*,(r_{i_{j}})}) - (\pmb{A}^{*,+}, \pmb{\alpha}^{*,+}, \pmb{\lambda}^{*,+})\|_{L^{p}([0,T]; \mathbb{R}^{m \times n} \times \mathbb{R} \times \mathbb{R})}  \\
		\to& 0, \quad j \to +\infty,
	\end{align*} 
	which contradicts with Eq.\eqref{eq-dist-continuous-all}.
	A similar argument and noting Eq.\eqref{eq-perturbed-optimal-value-converge-continuous} give (3).
\end{proof}

\begin{remark}
	According to \textbf{Theorem} \ref{thm-stability-control-problem-discrete-all} and \textbf{Theorem} \ref{thm-stability-control-problem-continuous-all}, if we set $b^{(r)} \equiv b$, $y^{(r)} \equiv y$, $\forall r \in \mathbb{N}_{+}$, we can derive the perturbation stability of the minimums and the solutions to the discrete- and continuous-time control problems w.r.t. the initial value $x^{0}$. 
	Similarly, we can obtain the perturbation stability w.r.t. $b$ and $y$ respectively under the settings fixing $(x^{0})^{(r)} \equiv x^{0}$, $y^{(r)} \equiv y$ or fixing $(x^{0})^{(r)} \equiv x^{0}$, $b^{(r)} \equiv b$.
\end{remark}

\section{Numerical experiment}\label{sec-experiment}

In this section, we conduct a simple experiment to investigate the behavior of the training loss for the basic FBS-network, providing numerical validation for our main general convergence result \textbf{Theorem} \ref{thm-convergence-objective-functions}. 
We examine how the training loss varies with an increasing number of network layers $N$ in sparse signal reconstruction tasks.
Note that our main result focuses on the behavior of the training problem rather than generalization, and we thus observe only the training loss here.
All implementations are carried out in PyTorch and executed on $4$ NVIDIA GeForce RTX 3060 GPUs.

We construct a customized dataset $\{ (b_{j}, y_{j}) \}_{j=1}^{J + \overline{J}}$ with $J = 16384$ and $\overline{J} = 2048$ being the numbers of training and validation samples, respectively. 
The dimensions of the observation vector $b_{j} \in \mathbb{R}^{m}$ and the corresponding ground truth sparse signal vector $y_{j} \in \mathbb{R}^{n}$ of the $j$-th sample are set to $m = 256$ and $n = 1024$, as in Eq.\eqref{eq-system-inclusion-discrete}. 
The observations are generated as $b_{j} = Ay_{j} + \varepsilon_{j}$ for $j = 1, 2, \dots, J+\overline{J}$ using a fixed matrix $A \in \mathbb{R}^{m \times n}$, where $\{ \varepsilon_{j} \}_{j=1}^{J+\overline{J}}$ are small observation errors. 
No additional preprocessing is applied.

We construct an $N$-layer FBS-network unfolded from the FBS structure as in Eq.\eqref{eq-system-inclusion-discrete} with $h_{N} = 1 / N$ for $N \in \mathbb{N}_{+}$.
Each layer comprises a linear transformation followed by a nonlinear activation, for which we choose the soft-thresholding function.
This corresponds to setting the function $\mathcal{R}$ as $\mathcal{R}(\cdot) := \|\cdot\|_{1}$ in Eq.\eqref{eq-system-inclusion-discrete}.
The training loss, introduced in Eq.\eqref{eq-discrete-optimal-control-problem-multi}, consists of a data fitting term and several regularization terms.
The data fitting term $\mathcal{L}$ is the mean squared error (MSE) between the network output and the ground truth:
\begin{align*}
    \mathcal{L}\bigl( x^{N,N}_{j}; y_{j} \bigr) 
    := \frac{1}{2} \bigl\| x^{N,N}_{j} - y_{j} \bigr\|_{2}^{2},
\end{align*}
where $j = 1, 2, \cdots, J$.
The regularization terms are defined as
\begin{align*}
    \mathcal{H}_{N}^{(1)}\bigl( A^{N,:} \bigr) 
    = \frac{1}{N} \sum_{k=0}^{N-1} \bigl\| A^{N,k} \bigr\|_{2}^{2}, \quad
    \mathcal{H}_{N}^{(2)}\bigl( \alpha^{N,:} \bigr) 
    = \frac{1}{N} \sum_{k=0}^{N-1} \bigl| \alpha^{N,k} \bigr|^{2},  \quad
    \mathcal{H}_{N}^{(3)}\bigl( \lambda^{N,:} \bigr) 
    = \frac{1}{N} \sum_{k=0}^{N-1} \bigl| \lambda^{N,k} \bigr|^{2},
\end{align*}
where $A^{N,:} = \{ A^{N,k} \}_{k=0}^{N-1}$, $\alpha^{N,:} = \{ \alpha^{N,k} \}_{k=0}^{N-1}$, and $\lambda^{N,:} = \{ \lambda^{N,k} \}_{k=0}^{N-1}$ denote the collections of trainable parameters across the $N$ layers.
The corresponding regularization coefficients are set to $\beta_1 = \beta_2 = \beta_3 = 1 \times 10^{-7}$.

We now describe the training configuration for the FBS-network models. 
All models are trained for $800$ epochs using stochastic gradient descent (SGD) with a momentum of $0.9$ and an initial learning rate $r_{0} := 8.0 \times 10^{-3}$.
Following the approach in \cite{Ludziejewski2025DecoupledRL}, we assign different learning rates to different parameter groups. 
Specifically, the learning rates for $A^{N,:}$ and $\lambda^{N,:}$ are set to $r_{A} := r_{0} * N$ and $r_{\lambda} := r_{0} * N$, respectively, while the learning rate for $\alpha^{N,:}$ is set to $r_{\alpha} := r_{0} * N^{3}$.
The batch size is set to $256$.
The trainable linear transformation matrices $A^{N,:}$ are all initialized as $((A^\top)_{\text{orth}})^{\top}$, i.e., the transpose of the orthogonal decomposition of $A^{\top}$.
Similarly, the trainable scalar parameters $\alpha^{N,:}$ and $\lambda^{N,:}$ are initialized as $10$ and $0.05$, respectively.

\begin{figure}[htbp]
	\centering
	\includegraphics[width=0.8\textwidth]{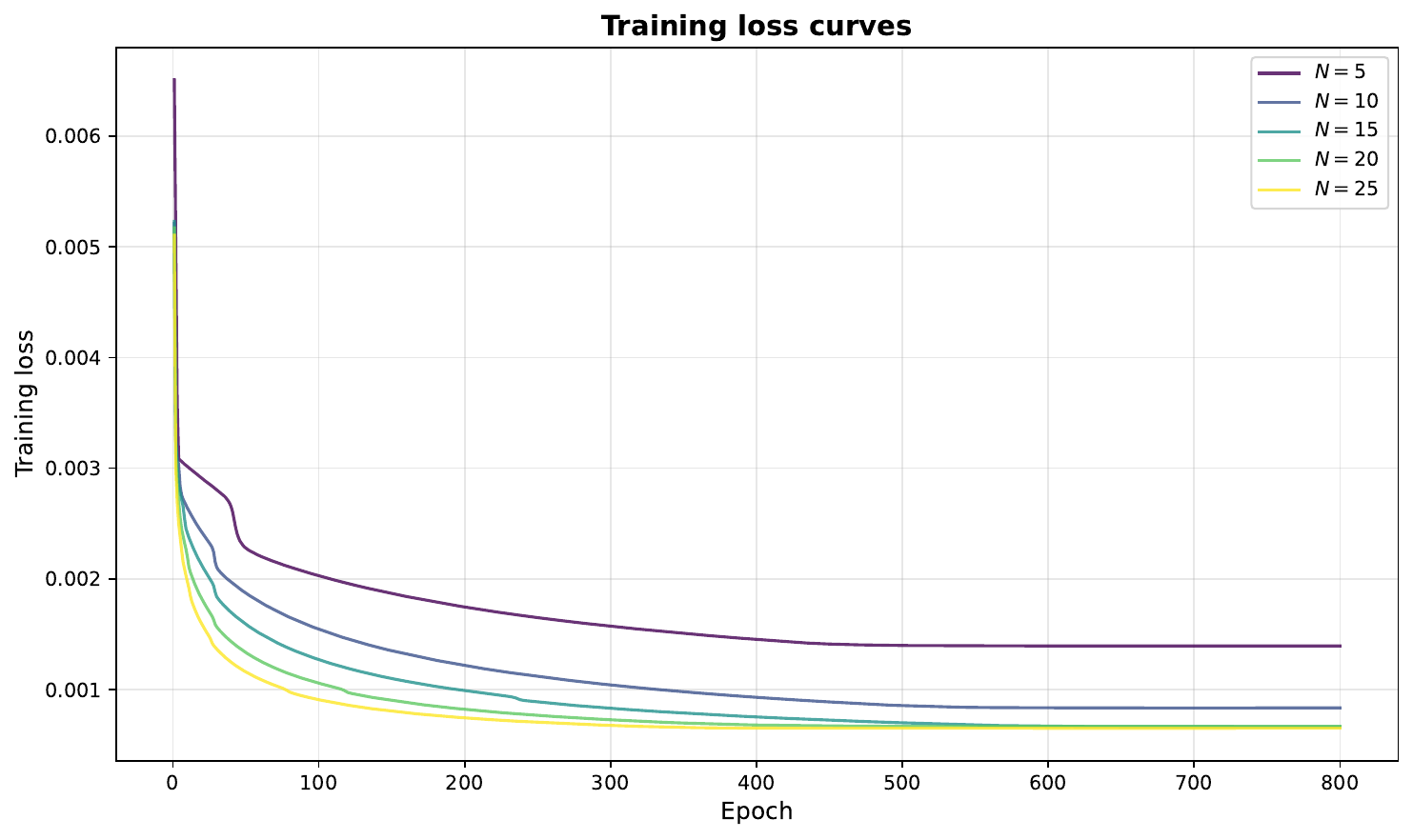}
	\caption{The training loss curves of the FBS-network with increasing network layer numbers $N = 5, 10, 15, 20, 25$. 
	}
	\label{fig:train-loss-compare}
\end{figure}

To verify our main convergence result, we evaluate the performance of the FBS-network with increasing $N \in \{5, 10, 15, 20, 25\}$. 
Figure~\ref{fig:train-loss-compare} presents the training loss curves versus epochs for these networks. 
We can observe two phenomena.
First, the training loss consistently decreases as $N$ increases, with larger $N$ yielding a lower loss. 
Second, the marginal improvement gradually diminishes as the network becomes deeper, indicating that the network performance gradually tends to a limit.
These empirical findings verify our theoretical analysis in \textbf{Theorem} \ref{thm-convergence-objective-functions}.

\section{Conclusion}\label{sec-conclusion}

In the last decade, the FBS algorithm has been unrolled to build several effective neural networks in sparse coding and image restoration.
In this paper, we studied the deep-layer limit and stability behaviors of the learning problems of the basic FBS-network with direct parameter relaxations as given in \cite{lin2025deep}. 
Under the dynamical inclusion forward modeling, we established some convergence properties of the network training problem to the related deep-layer limit learning problem, and presented stability results of the above two learning problems.
Our analysis procedure can be simplified to derive similar results for other unrolling networks from FBS-type algorithms, such as LISTA and ALISTA networks. 
These theoretical results indicate, in some sense, that such FBS-induced learning methods can work stably and consistently with different numbers of layers.

\section*{Acknowledgments}
This work was supported by the National Natural Science Foundation of China (grants 12271273), the Key Program (21JCZDJC00220) of the Natural Science Foundation of Tianjin, China.

\bibliographystyle{elsarticle-num}
\bibliography{FBS_II_ref}

\begin{appendices}
	
\section{Appendix}\label{sec-preliminaries-control}

\subsection{Backgrounds of some convergence} \label{subappendix-set-value-mapping}

\begin{theorem}\label{thm-Lp-imply-a.e.pointwise}
	($L^{p}$ convergence implies a.e. pointwise convergence up to a subsequence)
	Let $(X,\mathcal{A},\mu)$ be a measure space, $\{f_{k}\}_{k=1}^{+\infty} \subseteq L^{p}(X)$, and $f \in L^{p}(X)$. Here, $1 \leq p \leq +\infty$. If $\|f_{k} - f\|_{L^{p}(X)} \to 0$ as $k \to +\infty$, then there exists a subsequence $\{f_{k_{r}}\}_{k=1}^{+\infty}$ such that $|f_{k_{r}}(x) - f(x)| \to 0$ as $r \to +\infty$ for a.e. $x \in X$.
\end{theorem}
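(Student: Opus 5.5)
The plan is to use the standard measure-theoretic route: first convert $L^{p}$ convergence into convergence in measure, then extract a subsequence along which the deviations are summable in measure, and finally apply Borel--Cantelli. The case $p = +\infty$ is handled separately. There $\|f_{k} - f\|_{L^{\infty}(X)} \to 0$ means that for each $k$ there is a null set $N_{k}$ with $|f_{k}(x) - f(x)| \leq \|f_{k} - f\|_{L^{\infty}(X)}$ for all $x \notin N_{k}$. Setting $N := \bigcup_{k} N_{k}$, which is still null, we get $f_{k} \to f$ uniformly on $X \setminus N$. In this case the whole sequence converges a.e., so no extraction is needed.

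For $1 \leq p < +\infty$, the first step is Chebyshev's inequality: for every $\varepsilon > 0$,
\begin{align*}
	\mu\bigl(\{x \in X: |f_{k}(x) - f(x)| > \varepsilon\}\bigr) \leq \varepsilon^{-p} \|f_{k} - f\|_{L^{p}(X)}^{p} \to 0, \quad k \to +\infty.
\end{align*}
Thus $f_{k} \to f$ in measure. Next I choose indices $k_{1} < k_{2} < \cdots$ such that $\|f_{k_{r}} - f\|_{L^{p}(X)}^{p} \leq 2^{-r(p+1)}$. With the sets $E_{r} := \{x \in X: |f_{k_{r}}(x) - f(x)| > 2^{-r}\}$, the same inequality gives $\mu(E_{r}) \leq 2^{rp} \cdot 2^{-r(p+1)} = 2^{-r}$, so $\sum_{r} \mu(E_{r}) < +\infty$.

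The key step is Borel--Cantelli. Set $E := \limsup_{r} E_{r} = \bigcap_{R} \bigcup_{r \geq R} E_{r}$. Then $\mu(E) \leq \sum_{r \geq R} \mu(E_{r}) \leq 2^{1-R}$ for every $R$, hence $\mu(E) = 0$. For any $x \notin E$ there is an $R$ such that $x \notin E_{r}$ for all $r \geq R$. Hence $|f_{k_{r}}(x) - f(x)| \leq 2^{-r} \to 0$, which gives the claimed a.e. convergence along the subsequence.

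No step is a genuine obstacle. The only point requiring care is the choice of the summable rate, i.e., making the bound on $\mu(E_{r})$ summable while the threshold $2^{-r}$ still tends to zero, and the exponent $r(p+1)$ above handles both at once. This argument works on an arbitrary measure space with no finiteness assumption on $\mu$, because Chebyshev's inequality and Borel--Cantelli require none.
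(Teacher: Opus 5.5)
Your proof is correct and takes the same route as the paper, which cites Grafakos (Prop.\ 1.1.9 and Thm.\ 1.1.11) for the two steps you carry out by hand: $L^{p}$ convergence gives convergence in measure (your Chebyshev estimate), and convergence in measure gives an a.e.\ convergent subsequence (your Borel--Cantelli extraction with $\mu(E_{r}) \leq 2^{-r}$). Your separate treatment of $p = +\infty$ is a harmless refinement, since it shows the full sequence converges a.e.\ off the null set $\bigcup_{k} N_{k}$.
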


\begin{proof}[Proof. ]
	It is just a simple combination of \cite[Prop.1.1.9]{grafakos2008classical} and \cite[Thm.1.1.11]{grafakos2008classical}.
\end{proof}

\begin{theorem}\label{thm-Kolmogorov-Riesz}
	(Kolmogorov-Riesz-Fr{\'e}chet Theorem, a generalization of Arzela-Ascoli Theorem in $L^{p}$ spaces)
	\cite[Thm.4.26]{brezis2010functional}
	Let $\mathcal{F}$ be a bounded set in $L^{p}(\mathbb{R}^{N})$ with $p \in [1,+\infty)$. Assume that
	\begin{align*}
		\lim_{|h| \to 0} \|\tau_{h}f - f\|_{L^{p}(\mathbb{R}^{N})}  = 0 \,\, {\rm uniformly \,\, for} \,\, f \in \mathcal{F},
	\end{align*}
	i.e., $\forall \varepsilon > 0$, $\exists \delta > 0$ such that $\forall h \in \mathbb{R}^{N}$ with $|h| < \delta$, the inequality $\|\tau_{h}f - f\|_{L^{p}(\mathbb{R}^{N})} < \varepsilon$ holds for all $f \in \mathcal{F}$. Then the closure of $F |_{\Omega}$ in $L^{p}(\mathbb{R}^{N})$ is compact for any measurable set $\Omega \subseteq \mathbb{R}^{N}$	with finite measure. Here $F |_{\Omega}$ denotes the restrictions to $\Omega$ of the functions in $\mathcal{F}$.
\end{theorem}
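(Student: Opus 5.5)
The plan is the standard mollification argument, reducing total boundedness in $L^{p}$ to the Arzel\`a--Ascoli theorem on a bounded set. Since $L^{p}(\Omega)$ is complete, it suffices to show that $\mathcal{F}|_{\Omega}$ is totally bounded. So fix $\varepsilon > 0$; the goal is a finite $3\varepsilon$-net for $\mathcal{F}|_{\Omega}$ in $L^{p}(\Omega)$. Let $C := \sup_{f \in \mathcal{F}} \|f\|_{L^{p}(\mathbb{R}^{N})} < +\infty$. Let $\rho_{\eta}$ be a standard mollifier supported in $\overline{B}_{N}(0;\eta)$, with $\rho_{\eta} \geq 0$ and $\int \rho_{\eta} = 1$.

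\textbf{Step 1 (uniform mollification error).} Writing $f \ast \rho_{\eta} - f = \int \rho_{\eta}(h)\,(f(\cdot - h) - f(\cdot))\,{\rm d}h$ and using Minkowski's integral inequality gives
\begin{align*}
\|f \ast \rho_{\eta} - f\|_{L^{p}(\mathbb{R}^{N})} \leq \sup_{|h| \leq \eta} \|\tau_{h} f - f\|_{L^{p}(\mathbb{R}^{N})}.
\end{align*}
By the uniform equi-continuity hypothesis, I fix $\eta$ so small that the right-hand side is $< \varepsilon$ for every $f \in \mathcal{F}$. This $\eta$ is fixed from now on.

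\textbf{Step 2 (compactness on a bounded piece).} By H\"older's inequality, $\|f \ast \rho_{\eta}\|_{L^{\infty}} \leq \|\rho_{\eta}\|_{L^{p'}} C$ and $\|\nabla (f \ast \rho_{\eta})\|_{L^{\infty}} \leq \|\nabla \rho_{\eta}\|_{L^{p'}} C$. Hence $\{ f \ast \rho_{\eta} : f \in \mathcal{F} \}$ is uniformly bounded and equi-Lipschitz on $\mathbb{R}^{N}$. For any bounded measurable $\omega \subset \mathbb{R}^{N}$, Arzel\`a--Ascoli on $\overline{\omega}$ then makes this family totally bounded in $C(\overline{\omega})$. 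Since $|\omega| < +\infty$, it is also totally bounded in $L^{p}(\omega)$, via $\|g\|_{L^{p}(\omega)} \leq |\omega|^{1/p}\|g\|_{L^{\infty}(\omega)}$.

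\textbf{Step 3 (tails).} Since $|\Omega| < +\infty$, I choose $R$ large enough that $\omega := \Omega \cap B_{N}(0;R)$ satisfies
\begin{align*}
|\Omega \backslash \omega|^{1/p}\,\|\rho_{\eta}\|_{L^{p'}}\, C < \varepsilon.
\end{align*}
Then, for every $f \in \mathcal{F}$,
\begin{align*}
\|f\|_{L^{p}(\Omega \backslash \omega)} \leq \|f - f \ast \rho_{\eta}\|_{L^{p}} + |\Omega \backslash \omega|^{1/p} \|f \ast \rho_{\eta}\|_{L^{\infty}} < 2\varepsilon .
\end{align*}

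\textbf{Step 4 (assembling the net).} Take finitely many $g_{1}, \ldots, g_{K}$ forming an $\varepsilon$-net of $\{ (f \ast \rho_{\eta})|_{\omega} : f \in \mathcal{F} \}$ in $L^{p}(\omega)$, and extend each $g_{i}$ by zero on $\Omega \backslash \omega$. For any $f \in \mathcal{F}$, pick $i$ with $\|f \ast \rho_{\eta} - g_{i}\|_{L^{p}(\omega)} < \varepsilon$. Then $\|f - g_{i}\|_{L^{p}(\Omega)}$ is bounded by the sum of two pieces. On $\omega$, Steps 1 and 2 give the bound $\|f - f \ast \rho_{\eta}\|_{L^{p}} + \|f \ast \rho_{\eta} - g_{i}\|_{L^{p}(\omega)} < 2\varepsilon$. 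On $\Omega \backslash \omega$, Step 3 gives the bound $2\varepsilon$. So the total is at most $4\varepsilon$, which is still a finite net at scale $O(\varepsilon)$, and total boundedness follows.

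The main obstacle is Step 3: unlike on a bounded domain, there is no a priori uniform smallness of $f$ on the unbounded part of $\Omega$. The trick that resolves it is to estimate $f$ there through its mollification, which is uniformly bounded in $L^{\infty}$, together with the smallness of $|\Omega \backslash \omega|$. The remaining steps are routine.
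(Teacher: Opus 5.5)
Your argument is correct and is essentially the standard proof in the cited source (Brezis, Thm.~4.26), which the paper quotes without giving a proof of its own. The ingredients match that proof: the equi-continuity hypothesis gives a uniform mollification error; H\"older bounds make $\{f \ast \rho_{\eta}: f \in \mathcal{F}\}$ uniformly bounded and equi-Lipschitz, so Arzel\`a--Ascoli applies on the bounded piece $\omega$; and $f$ is controlled on $\Omega \backslash \omega$ through its mollification together with the smallness of $|\Omega \backslash \omega|$. The only slip is cosmetic: you announce a $3\varepsilon$-net but actually obtain a $4\varepsilon$-net, which does not affect total boundedness.
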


\begin{definition}\label{def-Gamma-convergence}
	($\Gamma$-convergence)
	\cite[Prop.8.1]{dal1993introduction}
	Let $X$ be a topological space satisfying the first axiom of countability, $\{F_{k}\}_{k=1}^{+\infty}$ be a sequence of functionals such that for every $k \in \mathbb{N}_{+}$, $F_{k}: X \to \overline{\mathbb{R}} := \mathbb{R} \cup \{+\infty\}$. 
	Then $\{F_{k}\}_{k=1}^{+\infty}$ $\Gamma$-converges to the functional $F: X \to \mathbb{R}$ if and only if the following conditions hold:
	
	(a) For every $x \in X$ and every sequence $\{ x^{k} \}_{k=1}^{+\infty} \subset X$ converging to $x$ as $k \to +\infty$, there is $F(x) \leq \mathop{\lim\inf} \limits_{k \to +\infty} F_{k}(x_{k})$.
	
	(b) For every $x \in X$, there exists a sequence $\{ x^{k} \}_{k=1}^{+\infty}$ converging to $x$ as $k \to +\infty$ such that $F(x) \geq \mathop{\lim\sup} \limits_{k \to +\infty} F_{k}(x_{k})$.
\end{definition}

\begin{theorem}\label{thm-fundamental-theorem-of-Gamma-convergence}
	(The fundamental theorem of $\Gamma$-convergence)	
	\cite[Cor 7.20]{dal1993introduction} \cite[Thm 3.3]{matias2015homogenization}
	Let $(X,d)$ be a metric space, $\{F_{\varepsilon}\}_{\varepsilon}$ be a sequence of functionals defined on $X$, $F := \Gamma\text{-}\lim\limits_{\varepsilon \to 0^{+}}F_{\varepsilon}$ be a functional defined on $X$. 
	For $\forall \varepsilon>0$, let $(x^{\varepsilon})^{*}$ be a minimizer of $F_{\varepsilon}$ in $X$.
	
	(1) If $x^{*}$ is a cluster point of $\{(x^{\varepsilon})^{*}\}_{\varepsilon}$, then $x^{*}$ is a minimizer of $F$, and $F(x^{*}) = \mathop{\lim\sup}\limits_{\varepsilon\to0^{+}}F_{\varepsilon}((x^{\varepsilon})^{*})$.
	
	(2) If $\{(x^{\varepsilon})^{*}\}_{\varepsilon}$ converges to a point $x^{*}$ in $X$, then $x^{*}$ is a minimizer of $F$, and $F(x^{*}) = \lim\limits_{\varepsilon\to0^{+}}F_{\varepsilon}((x^{\varepsilon})^{*})$.
\end{theorem}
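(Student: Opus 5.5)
The plan is to prove both parts directly from the two defining conditions (a) and (b) of Definition \ref{def-Gamma-convergence}. The only tools are minimality of $(x^{\varepsilon})^{*}$ for $F_{\varepsilon}$ and the existence of recovery sequences. I treat the family $\{F_{\varepsilon}\}$ as a sequence $\varepsilon_j \to 0^{+}$, as in the definition.

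For (1), fix a cluster point $x^{*}$ and a subsequence $\varepsilon_k$ with $(x^{\varepsilon_k})^{*} \to x^{*}$. The first and main step is a liminf inequality along this subsequence: $F(x^{*}) \le \liminf_k F_{\varepsilon_k}((x^{\varepsilon_k})^{*})$. This is the main obstacle, because condition (a) is stated only for sequences indexed by the full family.
\begin{itemize}
\item Padding the subsequence with arbitrary points (for instance $x^{*}$ itself, or a recovery sequence) at the missing indices does not work. The liminf over the full sequence is only $\le$ the liminf along the subsequence, which is the wrong direction.
\item Instead I would use the topological characterisation of the $\Gamma$-liminf, available since $(X,d)$ is metric and hence first countable:
\begin{align*}
F(x) = \sup_{U \ni x} \liminf_{j} \inf_{U} F_{\varepsilon_j}.
\end{align*}
Passing to a subsequence can only increase $\liminf_j \inf_U F_{\varepsilon_j}$. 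So the $\Gamma$-liminf of the subsequence is $\ge F$ (as in \cite[Prop.6.1]{dal1993introduction}).
\item Equivalently, every subsequence of a $\Gamma$-convergent sequence $\Gamma$-converges to the same limit, because the recovery sequences restrict to subsequences.
\end{itemize}
With this in hand, condition (a) applied to the subsequence gives the claimed inequality.

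Next, take any $y \in X$ and a recovery sequence $y_{\varepsilon} \to y$ from (b), so that $\limsup_{\varepsilon} F_{\varepsilon}(y_{\varepsilon}) \le F(y)$. By minimality, $F_{\varepsilon_k}((x^{\varepsilon_k})^{*}) \le F_{\varepsilon_k}(y_{\varepsilon_k})$, and therefore
\begin{align*}
F(x^{*}) \le \liminf_k F_{\varepsilon_k}((x^{\varepsilon_k})^{*}) \le \limsup_k F_{\varepsilon_k}(y_{\varepsilon_k}) \le F(y).
\end{align*}
Since $y$ is arbitrary, $x^{*}$ minimizes $F$.

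For the value identity, take $y = x^{*}$ and use minimality on the full family: $\limsup_{\varepsilon} F_{\varepsilon}((x^{\varepsilon})^{*}) \le \limsup_{\varepsilon} F_{\varepsilon}(y_{\varepsilon}) \le F(x^{*})$. Conversely, the full limsup dominates the limsup along the subsequence, which by the chain above (with $y = x^{*}$) equals $F(x^{*})$. Together these give $F(x^{*}) = \limsup_{\varepsilon \to 0^{+}} F_{\varepsilon}((x^{\varepsilon})^{*})$.

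For (2), the whole family converges, so condition (a) applies directly to the full sequence $(x^{\varepsilon})^{*} \to x^{*}$ and no subsequence argument is needed. It gives $F(x^{*}) \le \liminf_{\varepsilon} F_{\varepsilon}((x^{\varepsilon})^{*})$. Combined with the limsup bound from part (1), $\liminf = \limsup = F(x^{*})$, so the limit exists and equals $F(x^{*})$. Minimality of $x^{*}$ follows from part (1), since $x^{*}$ is in particular a cluster point.
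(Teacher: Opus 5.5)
Your argument is correct. The paper gives no proof of this statement; it only cites \cite[Cor 7.20]{dal1993introduction} and \cite[Thm 3.3]{matias2015homogenization}. Your route is the standard proof of that corollary: pass the $\Gamma$-liminf inequality to the subsequence, compare minimal values against recovery sequences via minimality, then identify $\limsup_{\varepsilon} F_{\varepsilon}((x^{\varepsilon})^{*})$. The minimality chain, the value identity and part (2) are all fine as written.

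The obstacle you single out is not real, however, and your reason for discarding padding has the inequality backwards. Let $u_k := (x^{\varepsilon_{j_k}})^{*} \to x^{*}$. Set $z_j := u_k$ if $j = j_k$, and $z_j := x^{*}$ otherwise. Then $z_j \to x^{*}$, and condition (a) of Definition~\ref{def-Gamma-convergence} gives
\begin{align*}
F(x^{*}) \le \liminf_{j \to +\infty} F_{\varepsilon_j}(z_j) \le \liminf_{k \to +\infty} F_{\varepsilon_{j_k}}(z_{j_k}) = \liminf_{k \to +\infty} F_{\varepsilon_{j_k}}(u_k).
\end{align*}
This holds because the $\liminf$ of a sequence is at most the $\liminf$ along any subsequence. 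That is exactly the right direction here, since you are transferring a lower bound from the full sequence to the subsequence. It is also the same inequality you invoke correctly in your second bullet (``passing to a subsequence can only increase $\liminf_j \inf_U F_{\varepsilon_j}$''), so your first two bullets contradict each other.

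The padding argument is preferable. It works directly from the sequential definition the paper adopts. Your topological route instead needs an extra input: the equivalence, in first-countable spaces, between the sequential conditions and the neighbourhood formula for the $\Gamma$-liminf (\cite[Prop.8.1]{dal1993introduction}).

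Your third bullet is also incomplete as a justification. Restricting recovery sequences to a subsequence only gives condition (b) for that subsequence. Condition (a) for the subsequence is precisely the liminf inequality in question, and it is supplied by the padding argument above.
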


\subsection{Proof of Theorem \ref{thm-stability-control-problem-discrete-all}} \label{subappendix-proof-stability-control-problem-discrete-all}

\begin{proof}[Proof. ]
	For clarity, we denote $\{ x^{N,k}(A^{N,:}, \alpha^{N,:}, \lambda^{N,:}; x^{0}, b) \}_{k=0}^{N}$ as the state of the network \eqref{eq-system-inclusion-discrete} determined by the parameters $(A^{N,:}, \alpha^{N,:}, \lambda^{N,:})$ with the initial value $x^{0}$ and the observed  data $b$.
	
	\textbf{Step 1:} We first prove that for any given $N \in \mathbb{N}_{+}$, 
	\begin{align}\label{eq-lim-sup-discrete}
		\lim_{r \to +\infty} \sup_{(A^{N,:}, \alpha^{N,:}, \lambda^{N,:}) \in \mathscr{D}_{N}} \big| \widetilde{\mathcal{J}}_{N}^{(r)}(A^{N,:}, \alpha^{N,:}, \lambda^{N,:}) - \mathcal{J}_{N}(A^{N,:}, \alpha^{N,:}, \lambda^{N,:}) \big| = 0.
	\end{align}
	
	Since $\mathscr{D}_{N}$ is compact, \textbf{Stability} \textbf{Theorem} 4.1 in \cite{lin2025deep} gives that \begin{align*}
		& \cup_{r=1}^{+\infty} \{ x^{N,N}(A^{N,:}, \alpha^{N,:}, \lambda^{N,:}; (x^{0})^{(r)}, b^{(r)}): (A^{N,:}, \alpha^{N,:}, \lambda^{N,:}) \in \mathscr{D}_{N} \}  \\
		\bigcup& \{ x^{N,N}(A^{N,:}, \alpha^{N,:}, \lambda^{N,:}; x^{0}, b): (A^{N,:}, \alpha^{N,:}, \lambda^{N,:}) \in \mathscr{D}_{N} \}
	\end{align*}
	is a subset of a compact set 
	in $\mathbb{R}^{n}$. 
	It follows from \textbf{Stability} \textbf{Theorem} 4.1 in \cite{lin2025deep} that, there exist two constants $\widetilde{c}_{1} \geq 0, \widetilde{c}_{2} \geq 0$ both independent of $r$, $A^{N,:}$, $\alpha^{N,:}$, $\lambda^{N,:}$, such that
	\begin{align*}
		& \|x^{N,N}(A^{N,:}, \alpha^{N,:}, \lambda^{N,:}; (x^{0})^{(r)}, b^{(r)}) - x^{N,N}(A^{N,:}, \alpha^{N,:}, \lambda^{N,:}; x^{0}, b)\|_{2}  \\ 
		\leq& \widetilde{c}_{1} \|(x^{0})^{(r)} - x^{0}\|_{2}
		+ \widetilde{c}_{2} \|b^{(r)} - b\|_{2}.  
	\end{align*}
	Therefore, for any $\delta > 0$, there exists $r_{1,\delta} \in \mathbb{N}_{+}$ such that for every integer $r \geq r_{1,\delta}$,  
	\begin{align*}
		\sup_{(A^{N,:}, \alpha^{N,:}, \lambda^{N,:}) \in \mathscr{D}_{N}} \|x^{N,N}(A^{N,:}, \alpha^{N,:}, \lambda^{N,:}; (x^{0})^{(r)}, b^{(r)}) - x^{N,N}(A^{N,:}, \alpha^{N,:}, \lambda^{N,:}; x^{0}, b)\|_{2} 
		\leq \frac{1}{3} \delta.
	\end{align*}
	Besides, $\lim_{r \to +\infty} y^{(r)} = y$ implies that for any $\delta > 0$, there exists $r_{2,\delta} \in \mathbb{N}_{+}$ such that for every integer $r \geq r_{2,\delta}$, we have
	\begin{align*}
		\|y^{(r)} - y\|_{2} \leq \frac{1}{3} \delta.
	\end{align*}
	These results indicates that for any $\delta > 0$, there exists $r_{3,\delta} := \max\{ r_{1,\delta}, r_{2,\delta} \}$ such that for every integer $r \geq r_{3,\delta}$, 
	\begin{align*}
		& \sup_{(A^{N,:}, \alpha^{N,:}, \lambda^{N,:}) \in \mathscr{D}_{N}} \|x^{N,N}(A^{N,:}, \alpha^{N,:}, \lambda^{N,:}; (x^{0})^{(r)}, b^{(r)}) - x^{N,N}(A^{N,:}, \alpha^{N,:}, \lambda^{N,:}; x^{0}, b)\|_{2} + \|y^{(r)} - y\|_{2} \\
		&\leq \frac{2}{3} \delta
		< \delta.
	\end{align*}
	By further noting the continuity assumption of $\mathcal{L}$ in (A4) and thus its uniform continuity on compact sets, we obtain that for any $\varepsilon > 0$, there exists an integer $r_{3,\delta(\varepsilon)}$ such that for every integer $r \geq r_{3,\delta(\varepsilon)}$, 
	\begin{align*}
		& \sup_{(A^{N,:}, \alpha^{N,:}, \lambda^{N,:}) \in \mathscr{D}_{N}} |\widetilde{\mathcal{J}}_{N}^{(r)}(A^{N,:}, \alpha^{N,:}, \lambda^{N,:}) - \mathcal{J}_{N}(A^{N,:}, \alpha^{N,:}, \lambda^{N,:})|  \\
		=& \sup_{(A^{N,:}, \alpha^{N,:}, \lambda^{N,:}) \in \mathscr{D}_{N}} \Big| \mathcal{L}(x^{N,N}(A^{N,:}, \alpha^{N,:}, \lambda^{N,:}; (x^{0})^{(r)}, b^{(r)}); y^{(r)})  \\
		&\hspace{75pt}+ \beta_{1} \mathcal{H}_{N}^{(1)}(A^{N,:})
		+ \beta_{2} \mathcal{H}_{N}^{(2)}(\alpha^{N,:})
		+ \beta_{3} \mathcal{H}_{N}^{(3)}(\lambda^{N,:})  \Big.  \\
		&\hspace{75pt}\Big.
		- \mathcal{L}(x^{N,N}(A^{N,:}, \alpha^{N,:}, \lambda^{N,:}; x^{0}, b); y)  \\
		&\hspace{75pt}- \beta_{1} \mathcal{H}_{N}^{(1)}(A^{N,:})
		- \beta_{2} \mathcal{H}_{N}^{(2)}(\alpha^{N,:})
		- \beta_{3} \mathcal{H}_{N}^{(3)}(\lambda^{N,:})
		\Big|  \\
		=& \sup_{(A^{N,:}, \alpha^{N,:}, \lambda^{N,:}) \in \mathscr{D}_{N}} \left| \mathcal{L}(x^{N,N}(A^{N,:}, \alpha^{N,:}, \lambda^{N,:}; (x^{0})^{(r)}, b^{(r)}); y^{(r)})  
		- \mathcal{L}(x^{N,N}(A^{N,:}, \alpha^{N,:}, \lambda^{N,:}; x^{0}, b); y) 
		\right|  \\
		<& \varepsilon,
	\end{align*} 
	which indicates Eq.\eqref{eq-lim-sup-discrete}.
	
	\textbf{Step 2:} In this step, we will prove (1).
	The existence of a cluster point of $\{ ((A^{N,:})^{*,r}, (\alpha^{N,:})^{*,r}, (\lambda^{N,:})^{*,r}) \}_{r=1}^{+\infty}$ is obvious due to the compactness of $\mathscr{D}_{N}$.
	We next show that its every cluster point belongs to $S_{N}$.
	Suppose that $\{ ((A^{N,:})^{*,r_{i}}, (\alpha^{N,:})^{*,r_{i}}, (\lambda^{N,:})^{*,r_{i}}) \}_{i=1}^{+\infty}$ converges to $((A^{N,:})^{*,*}, (\alpha^{N,:})^{*,*}, (\lambda^{N,:})^{*,*})$.
	By using again the continuity of $\mathcal{L}$, we obtain
	\begin{align}\label{eq-study-discrete-all}
		& \mathcal{J}_{N}((A^{N,:})^{*,*}, (\alpha^{N,:})^{*,*}, (\lambda^{N,:})^{*,*})  \notag \\
		=& \mathcal{L}(x^{N,N}((A^{N,:})^{*,*}, (\alpha^{N,:})^{*,*}, (\lambda^{N,:})^{*,*}; x^{0}, b); y)
		+ \beta_{1} \mathcal{H}_{N}^{(1)}((A^{N,:})^{*,*})  \notag \\
		&+ \beta_{2} \mathcal{H}_{N}^{(2)}((\alpha^{N,:})^{*,*})
		+ \beta_{3} \mathcal{H}_{N}^{(3)}((\lambda^{N,:})^{*,*})
		\notag \\
		=& \lim\limits_{i \to +\infty} \Big[ \mathcal{L}(x^{N,N}((A^{N,:})^{*,(r_{i})}, (\alpha^{N,:})^{*,(r_{i})}, (\lambda^{N,:})^{*,(r_{i})}; x^{0}, b); y)   \notag \\
		&\hspace{30pt}+ \beta_{1} \mathcal{H}_{N}^{(1)}((A^{N,:})^{*,(r_{i})})
		+ \beta_{2} \mathcal{H}_{N}^{(2)}((\alpha^{N,:})^{*,(r_{i})})
		+ \beta_{3} \mathcal{H}_{N}^{(3)}((\lambda^{N,:})^{*,(r_{i})})
		\Big]  \notag \\
		=& \lim\limits_{i \to +\infty} \Big[ \mathcal{L}(x^{N,N}((A^{N,:})^{*,(r_{i})}, (\alpha^{N,:})^{*,(r_{i})}, (\lambda^{N,:})^{*,(r_{i})}; (x^{0})^{(r_{i})}, b^{(r_{i})}); y^{(r_{i})})   \notag \\
		&\hspace{30pt}+ \beta_{1} \mathcal{H}_{N}^{(1)}((A^{N,:})^{*,(r_{i})})
		+ \beta_{2} \mathcal{H}_{N}^{(2)}((\alpha^{N,:})^{*,(r_{i})})
		+ \beta_{3} \mathcal{H}_{N}^{(3)}((\lambda^{N,:})^{*,(r_{i})})
		\Big]  \notag \\
		\leq& \lim\limits_{i \to +\infty} \Big[ \mathcal{L}(x^{N,N}(A^{N,:}, \alpha^{N,:}, \lambda^{N,:}; (x^{0})^{(r_{i})}, b^{(r_{i})}); y^{(r_{i})})   
		+ \beta_{1} \mathcal{H}_{N}^{(1)}(A^{N,:})  \notag \\
		&\hspace{30pt}+ \beta_{2} \mathcal{H}_{N}^{(2)}(\alpha^{N,:})
		+ \beta_{3} \mathcal{H}_{N}^{(3)}(\lambda^{N,:})
		\Big]  \notag \\
		=& \mathcal{L}(x^{N,N}(A^{N,:}, \alpha^{N,:}, \lambda^{N,:}; x^{0}, b); y)   
		+ \beta_{1} \mathcal{H}_{N}^{(1)}(A^{N,:})
		+ \beta_{2} \mathcal{H}_{N}^{(2)}(\alpha^{N,:})
		+ \beta_{3} \mathcal{H}_{N}^{(3)}(\lambda^{N,:}), \notag \\ &\hspace{20pt} \forall (A^{N,:}, \alpha^{N,:}, \lambda^{N,:}) \in \mathscr{D}_{N}   \notag \\
		=& \mathcal{J}_{N}(A^{N,:}, \alpha^{N,:}, \lambda^{N,:}), \quad \forall (A^{N,:}, \alpha^{N,:}, \lambda^{N,:}) \in \mathscr{D}_{N},	
	\end{align}
	where the third and the fifth equalities are due to the result in \textbf{Step 1}, and the fourth inequality is from $((A^{N,:})^{*,(r_{i})}, (\alpha^{N,:})^{*,(r_{i})}, (\lambda^{N,:})^{*,(r_{i})}) \in \widetilde{S}_{N}^{(r_{i})}$. 
	Obviously, Eq.\eqref{eq-study-discrete-all} indicates $((A^{N,:})^{*,*}, (\alpha^{N,:})^{*,*}, (\lambda^{N,:})^{*,*}) \in S_{N}$, and 
	\begin{align*}
		\mathcal{J}_{N}((A^{N,:})^{*,*}, (\alpha^{N,:})^{*,*}, (\lambda^{N,:})^{*,*}) 
		=& \lim\limits_{i \to +\infty} \widetilde{\mathcal{J}}_{N}^{(r_{i})} ( (A^{N,:})^{*,(r_{i})}, (\alpha^{N,:})^{*,(r_{i})}, (\lambda^{N,:})^{*,(r_{i})} )  \\
		\leq& \inf_{(A^{N,:}, \alpha^{N,:}, \lambda^{N,:}) \in \mathscr{D}_{N}} \mathcal{J}_{N}(A^{N,:}, \alpha^{N,:}, \lambda^{N,:})  \\
		=& \mathcal{J}_{N}((A^{N,:})^{*,*}, (\alpha^{N,:})^{*,*}, (\lambda^{N,:})^{*,*}).
	\end{align*}
	That is, 
	\begin{align}\label{eq-perturbed-optimal-value-converge-discrete}
		\lim\limits_{i \to +\infty} \widetilde{\mathcal{J}}_{N}^{(r_{i})} ( (A^{N,:})^{*,(r_{i})}, (\alpha^{N,:})^{*,(r_{i})}, (\lambda^{N,:})^{*,(r_{i})} )
		= \inf_{(A^{N,:}, \alpha^{N,:}, \lambda^{N,:}) \in \mathscr{D}_{N}} \mathcal{J}_{N}(A^{N,:}, \alpha^{N,:}, \lambda^{N,:}).
	\end{align}
	
	\textbf{Step 3:} Now we prove (2) and (3). 
	For (2), we assume by contradiction that 
	\begin{align*}
		& \inf\limits_{ ( (A^{N,:})^{*}, (\alpha^{N,:})^{*}, (\lambda^{N,:})^{*} ) \in S_{N}} \Big\| ( (A^{N,:})^{*,(r)}, (\alpha^{N,:})^{*,(r)}, (\lambda^{N,:})^{*,(r)} ) \\
		&\hspace{100pt} - ( (A^{N,:})^{*}, (\alpha^{N,:})^{*}, (\lambda^{N,:})^{*} ) \Big\|_{\ell^{p}((\mathbb{R}^{m \times n})^{N} \times \mathbb{R}^{N} \times \mathbb{R}^{N})} \nrightarrow 0.
	\end{align*} 
	Then there exist an $\varepsilon_{0} > 0$ and a subsequence
	\begin{align*}
		& \Big\{ \inf\limits_{ ( (A^{N,:})^{*}, (\alpha^{N,:})^{*}, (\lambda^{N,:})^{*} ) \in S_{N}} \Big\| ( (A^{N,:})^{*,(r_{i})}, (\alpha^{N,:})^{*,(r_{i})}, (\lambda^{N,:})^{*,(r_{i})} )  \\
		&\hspace{105pt} - ( (A^{N,:})^{*}, (\alpha^{N,:})^{*}, (\lambda^{N,:})^{*} ) \Big\|_{\ell^{p}((\mathbb{R}^{m \times n})^{N} \times \mathbb{R}^{N} \times \mathbb{R}^{N})} \Big\}_{i=1}^{+\infty}
	\end{align*}
	such that for every $i \in \mathbb{N}_{+}$, 
	\begin{align}\label{eq-approx-all-0}
		& \inf_{ ( (A^{N,:})^{*}, (\alpha^{N,:})^{*}, (\lambda^{N,:})^{*} ) \in S_{N}} \Big\| ( (A^{N,:})^{*,(r_{i})}, (\alpha^{N,:})^{*,(r_{i})}, (\lambda^{N,:})^{*,(r_{i})} )  \notag \\
		&\hspace{100pt}- ( (A^{N,:})^{*}, (\alpha^{N,:})^{*}, (\lambda^{N,:})^{*} ) \Big\|_{\ell^{p}((\mathbb{R}^{m \times n})^{N} \times \mathbb{R}^{N} \times \mathbb{R}^{N})} 
		\geq \varepsilon_{0}.
	\end{align} 
	By \textbf{Step 2}, we can further extract a convergent subsequence $\left\{ \left( (A^{N,:})^{*,(r_{i_{j}})}, (\alpha^{N,:})^{*,(r_{i_{j}})}, (\lambda^{N,:})^{*,(r_{i_{j}})} \right) \right\}_{j=1}^{+\infty}$ from $\left\{ \left( (A^{N,:})^{*,(r_{i})}, (\alpha^{N,:})^{*,(r_{i})}, (\lambda^{N,:})^{*,(r_{i})} \right) \right\}_{i=1}^{+\infty}$ with the limit point belonging to $S_{N}$ to get a contradiction with Eq.\eqref{eq-approx-all-0}. 
	A similar argument and noting Eq.\eqref{eq-perturbed-optimal-value-converge-discrete} give (3).	
\end{proof}

\end{appendices}

\end{sloppypar}
\end{document}